\renewenvironment{proof}{{\noindent \it Proof:}}{\null\hfill$\blacksquare$}
\begin{document}

\title{Learning Transformations for Clustering and Classification}

\author{\name Qiang Qiu \email qiang.qiu@duke.edu \\
       \addr Department of Electrical and Computer Engineering\\
       Duke University\\
       Durham, NC 27708, USA
       \AND
       \name Guillermo Sapiro \email guillermo.sapiro@duke.edu \\
       \addr Department of Electrical and Computer Engineering,\\
       Department of Computer Science, \\
       Department of Biomedical Engineering\\
       Duke University\\
       Durham, NC 27708, USA}

\editor{*}

\maketitle

\begin{abstract}%   <- trailing '%' for backward compatibility of .sty file
A low-rank transformation learning framework for subspace clustering and classification is here proposed. Many high-dimensional data, such as face images and motion sequences, approximately lie in a union of low-dimensional subspaces. The corresponding subspace clustering problem has been extensively studied in the literature to partition such high-dimensional data into clusters corresponding to their underlying low-dimensional subspaces. However, low-dimensional intrinsic structures are often violated for real-world observations, as they can be corrupted by errors  or deviate from ideal models.
{
We propose to address this by learning a linear transformation on subspaces using nuclear norm as the modeling and optimization criteria.}
The learned linear transformation restores a low-rank structure for data from the same subspace, and, at the same time, forces a maximally separated structure for data from different subspaces.
In this way, we reduce variations within the subspaces, and increase separation between the subspaces for a more robust subspace clustering.
This proposed learned robust subspace clustering framework significantly enhances the performance of existing subspace clustering methods.
Basic theoretical results here presented help to further support the underlying framework.
To exploit the low-rank structures of the transformed subspaces, we further introduce a fast subspace clustering technique, which efficiently combines robust PCA with sparse modeling.
When class labels are present at the training stage, we show this low-rank transformation framework also significantly enhances classification performance.
Extensive experiments using public datasets are presented, showing that the proposed approach
significantly outperforms state-of-the-art methods for subspace clustering and classification.
The learned low cost transform is also applicable to other classification frameworks.
\end{abstract}

\begin{keywords}
Subspace clustering, classification, low-rank transformation, nuclear norm, feature learning.
\end{keywords}

\section{Introduction}

High-dimensional data often have a small intrinsic dimension.
For example, in the area of computer vision, face images of a subject (\cite{9point}, \cite{Wright09}), handwritten images of a digit (\cite{ocr}), and trajectories of a moving object (\cite{sfm}) can all be well-approximated by a low-dimensional subspace of the high-dimensional ambient space. Thus, multiple class data often lie in a union of low-dimensional subspaces.
The ubiquitous {subspace clustering} problem is to partition high-dimensional data into clusters corresponding to their underlying subspaces.

Standard clustering methods such as k-means in general are not applicable to subspace clustering. Various methods have been recently suggested for subspace clustering, such as Sparse Subspace Clustering (SSC) (\cite{SSC}) (see also its extensions and analysis in \cite{robustsubspace, ga-ssc, rssc, nssc}), Local Subspace Affinity (LSA) (\cite{LSA}), Local Best-fit Flats (LBF) (\cite{SLBF}),
 Generalized Principal Component Analysis (\cite{gpca}),
 Agglomerative Lossy Compression (\cite{alc}),  Locally Linear Manifold Clustering (\cite{llmc}),
 and Spectral Curvature Clustering (\cite{scc}). A recent survey on subspace clustering can be found in \cite{SubspaceClustering}.

Low-dimensional intrinsic structures, which enable subspace clustering,  are often violated for real-world data.
For example, under the assumption of Lambertian reflectance, \cite{9point} show that face images of a subject obtained under a wide variety of lighting conditions can be accurately approximated  with a 9-dimensional linear subspace. However,  real-world face images are often captured under pose variations; in addition, faces are not perfectly Lambertian, and exhibit cast shadows and specularities (\cite{rpca}).
Therefore, it is critical for subspace clustering to handle corrupted underlying structures of realistic data, and as such, deviations from ideal subspaces.

When data from the same low-dimensional subspace are arranged as columns of a single matrix,  the matrix should be approximately low-rank.  Thus, a promising way to handle corrupted data for subspace clustering is to restore such low-rank structure. Recent efforts have been invested in seeking transformations such that the transformed data can be decomposed as the sum of a low-rank matrix component and a sparse error one (\cite{RASL, lrsalient, TILT}).
\cite{RASL} and \cite{TILT} are proposed for image alignment (see \cite{3dalign} for the extension to multiple-classes with applications in cryo-tomograhy), and \cite{lrsalient} is discussed in the context of salient object detection. All these methods build on recent theoretical and computational advances in rank minimization.

In this paper, we propose to improve subspace clustering and classification by learning a linear transformation on subspaces using matrix rank, via its nuclear norm convex surrogate,  as the optimization criteria.
The learned linear transformation recovers a low-rank structure for data from the same subspace, and, at the same time, forces a maximally separated structure for data from different subspaces (actually high nuclear norm, which as discussed later, improves the separation between the subspaces).
In this way, we reduce variations within the subspaces, and increase separations between the subspaces for more accurate subspace clustering and classification.

For example, as shown in Fig.~\ref{fig:overview}, after faces are detected and aligned, e.g., using \cite{posemodel}, our approach learns linear transformations for face images to restore for the same subject a low-dimensional structure.
By comparing the last row to the first row in Fig.~\ref{fig:overview}, we can easily notice that faces from the same subject across different poses are more visually similar in the new transformed space, enabling better face clustering and classification across pose.

This paper makes the following main contributions:
\begin{itemize*}
\item Subspace low-rank transformation (LRT) is introduced and analyzed in the context of subspace clustering and classification;
\item A Learned Robust Subspace Clustering framework (LRSC) is proposed to enhance existing subspace clustering methods;
\item A discriminative low-rank (nuclear norm) transformation approach is proposed to reduce the variation within the classes and increase separations between the classes for improved classification;
\item We propose a specific fast subspace clustering technique, called Robust Sparse Subspace Clustering (R-SSC), by exploiting low-rank structures of the learned transformed subspaces;
\item We discuss online learning of subspace low-rank transformation for big data;
\item We demonstrate through extensive experiments that the proposed approach significantly outperforms  state-of-the-art methods for subspace clustering and classification.
  \end{itemize*}

The proposed approach can be considered as a way of learning data features, with such features learned in order to reduce within-class rank (nuclear norm), increase between class separation, and encourage robust subspace clustering. As such, the framework and criteria here introduced can be incorporated into other data classification and clustering problems.

\begin{figure*} [t]
\centering
\includegraphics[angle=0, height=0.43\textwidth, width=.70\textwidth]{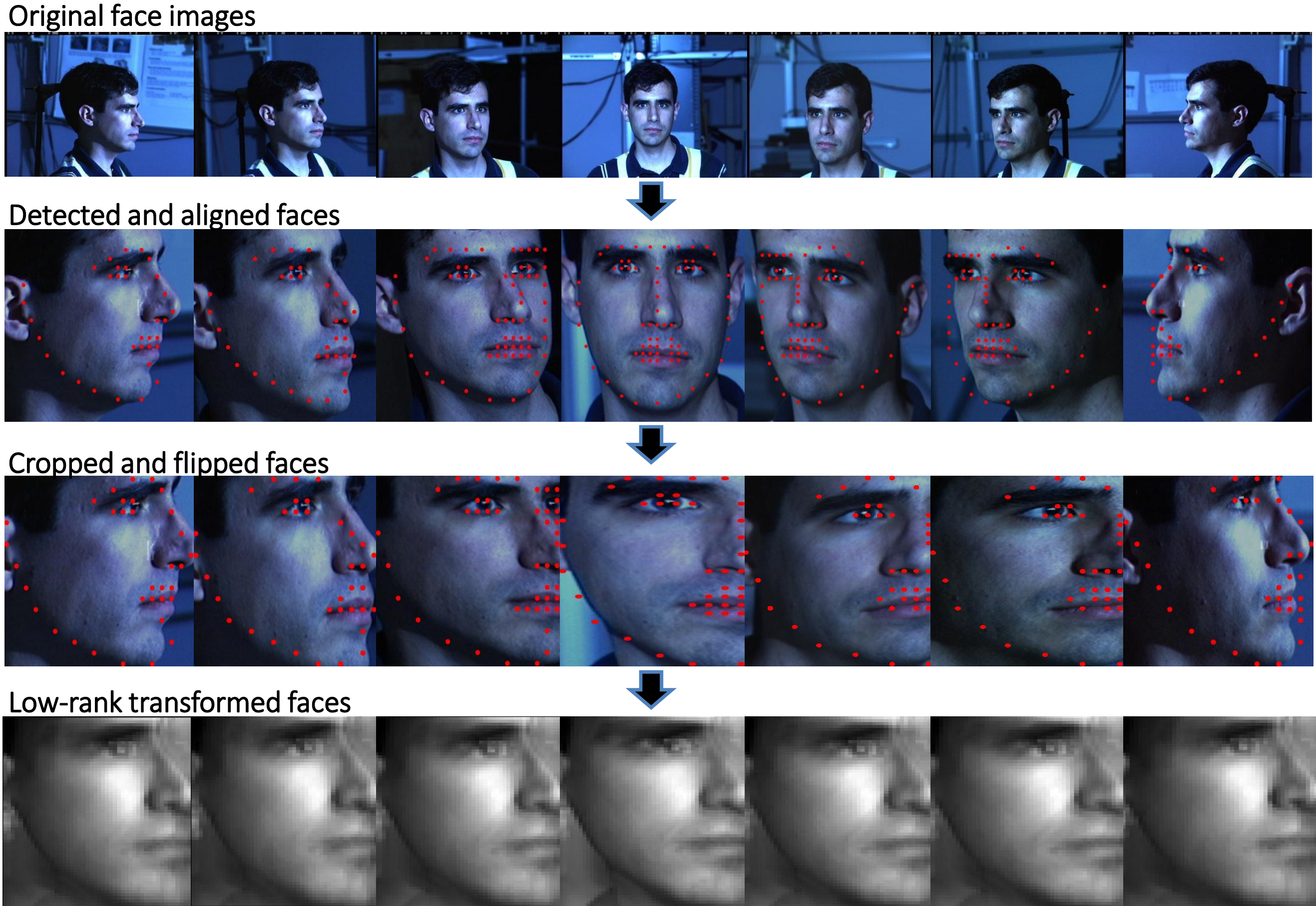}
\caption{Learned low-rank transformation on faces across pose.  In the second row, the input faces are first detected and aligned, e.g., using the method in \cite{posemodel}. Pose models defined in \cite{posemodel} enable an optional crop-and-flip step to retain the more informative side of a face in the third row. Our proposed approach learns linear transformations for face images to restore for the same subject a low-dimensional structure as shown in the last row. By comparing the last row to the first row, we can easily notice that faces from the same subject across different poses are more visually similar in the new transformed space, enabling better face clustering or recognition across pose (note that the goal is clustering/recognition and not reconstruction).}
\label{fig:overview}
\end{figure*}

 In Section~\ref{sec:form}, we formulate and analyze the low-rank transformation learning problem.
 In sections~\ref{sec:sc} and \ref{sec:rec}, we discuss the low-rank transformation for subspace clustering and classification respectively. Experimental evaluations are given in Section~\ref{sec:expr} on public datasets commonly used for subspace clustering evaluation. Finally, Section~\ref{sec:con} concludes the paper.

\section{Learning Low-rank Transformations (LRT) }
\label{sec:form}

Let $\{\mathcal{S}_c\}_{c=1}^C$ be $C$ $n$-dimensional subspaces of $\mathbb{R}^d$ (not all subspaces are necessarily of the same dimension, this is only here assumed to simplify notation).
Given a data set $\mathbf{Y} = \{ \mathbf{y}_i\}_{i=1}^N \subseteq \mathbb{R}^d$,
with each data point $\mathbf{y}_i$ in one of the $C$ subspaces, and in general the data arranged as columns of $Y$.
$\mathbf{Y}_c$ denotes the set of points in the $c$-th subspace $\mathcal{S}_c$,
 points arranged as columns of the matrix $\mathbf{Y}_c$.

As data points in $\mathbf{Y}_c$ lie in a low-dimensional subspace, the matrix $\mathbf{Y}_c$ is expected to be \emph{low-rank}, and such low-rank structure is critical for accurate subspace clustering.
However, as discussed above, this low-rank structure is often violated for real data.

Our proposed approach is to learn a global linear transformation on subspaces.
Such linear transformation restores a low-rank structure for data from the same subspace, and, at the same time, encourages a maximally separated structure for data from different subspaces. In this way, we reduce the variation within the subspaces and introduce separations between the subspaces for more robust subspace clustering or classification.

\subsection{Preliminary Pedagogical Formulation using Rank}

We first assume the data cluster labels are known beforehand for training purposes, assumption to be removed when discussing the full clustering approach in Section~\ref{sec:sc}.
We adopt matrix rank as the key learning criterion (presented here first for pedagogical reasons, to be later replaced by the nuclear norm), and compute one global linear transformation on all subspaces as
\begin{align} \label{rank_obj}
{ \underset{\mathbf{T}} \arg \min \sum_{c=1}^C rank(\mathbf{T Y}_c) - rank(\mathbf{T Y}), ~~s.t. ||\mathbf{T}||_2 =  \gamma},
\end{align}
where $\mathbf{T} \in \mathbb{R}^{d \times d}$ is one global linear transformation on all data points (we will later discuss then $\mathbf{T}$'s dimension is less than $d$), $||\mathbf{\cdot}||_2$ denotes the matrix induced 2-norm,
and $\gamma$ is a positive constant.
Intuitively, minimizing the first \emph{representation} term $\sum_{c=1}^C rank(\mathbf{T Y}_c)$ encourages a consistent representation for the transformed data from the same subspace; and minimizing the second \emph{discrimination} term $-rank(\mathbf{T Y})$ encourages a diverse representation for transformed data from different subspaces (we will later formally discuss that the convex surrogate nuclear norm actually has this desired effect).
The normalization condition $||\mathbf{T}||_2 = \gamma$  prevents the trivial solution $\mathbf{T}=0$.

{ We now explain that the pedagogical formulation in (\ref{rank_obj}) using rank  is however not optimal to simultaneously reduce the variation within the same class subspaces and introduce separations between the different class subspaces, motivating the use of the nuclear norm not only for optimization reasons but for modeling ones as well.}
Let $\mathbf{A}$ and $\mathbf{B}$ be matrices of the same dimensions (standing for two classes $\mathbf{Y}_1$ and $\mathbf{Y}_2$ respectively), and $\mathbf{[A,B]}$ (standing for $\mathbf{Y}$) be the concatenation of $\mathbf{A}$ and $\mathbf{B}$,  we have (\cite{rank-add})
\begin{align} \label{rank_concat}
rank(\mathbf{[A,B]})  \le rank(\mathbf{A}) + rank(\mathbf{B}),
\end{align}
with equality if and only if $\mathbf{A}$ and $\mathbf{B}$ are {disjoint}, i.e., they intersect only at the origin (often the analysis of subspace clustering algorithms considers disjoint spaces, e.g., \cite{SSC}).

It is easy to show that (\ref{rank_concat}) can be extended for the concatenation of multiple matrices,
\begin{align} \label{rank_concat2}
rank([\mathbf{Y}_1, \mathbf{Y}_2, \mathbf{Y}_3, \cdots, \mathbf{Y}_C]) &\le rank(\mathbf{Y}_1) + rank([\mathbf{Y}_2, \mathbf{Y}_3, \cdots, \mathbf{Y}_C]) \\ \nonumber
&\le rank(\mathbf{Y}_1) + rank(\mathbf{Y}_2) + rank([\mathbf{Y}_3, \cdots, \mathbf{Y}_C]) \\ \nonumber
&\ldots \\ \nonumber
&\le \sum_{c=1}^C rank(\mathbf{Y}_c),
\end{align}
%with equality if every pair of matrices is disjoint.
with equality if matrices are independent.
Thus, for (\ref{rank_obj}), we have
\begin{align} \label{rank_objval}
\sum_{c=1}^C rank(\mathbf{T Y}_c) - rank(\mathbf{T Y}) \ge 0,
\end{align}
{
and the objective function (\ref{rank_obj}) reaches the minimum $0$ if
matrices are independent
after applying the learned transformation $\mathbf{T}$.
However, independence does not infer maximal separation, an important goal for robust clustering and classification. For example, two lines intersecting only at the origin are independent regardless of the angle in between, and they are maximally separated only when the angle becomes $\frac{\pi}{2}$.
With this intuition in mind, we now proceed to describe our proposed formulation based on the nuclear norm.
}

\subsection{Problem Formulation using Nuclear Norm}
\label{sec:nuclear}

\begin{figure*} [ht]
\centering
  \subfloat[][ $\theta_{AB}=\frac{\pi}{2}=1.57$.] {\label{fig:2d00O} \includegraphics[angle=0, height=0.25\textwidth, width=.25\textwidth]{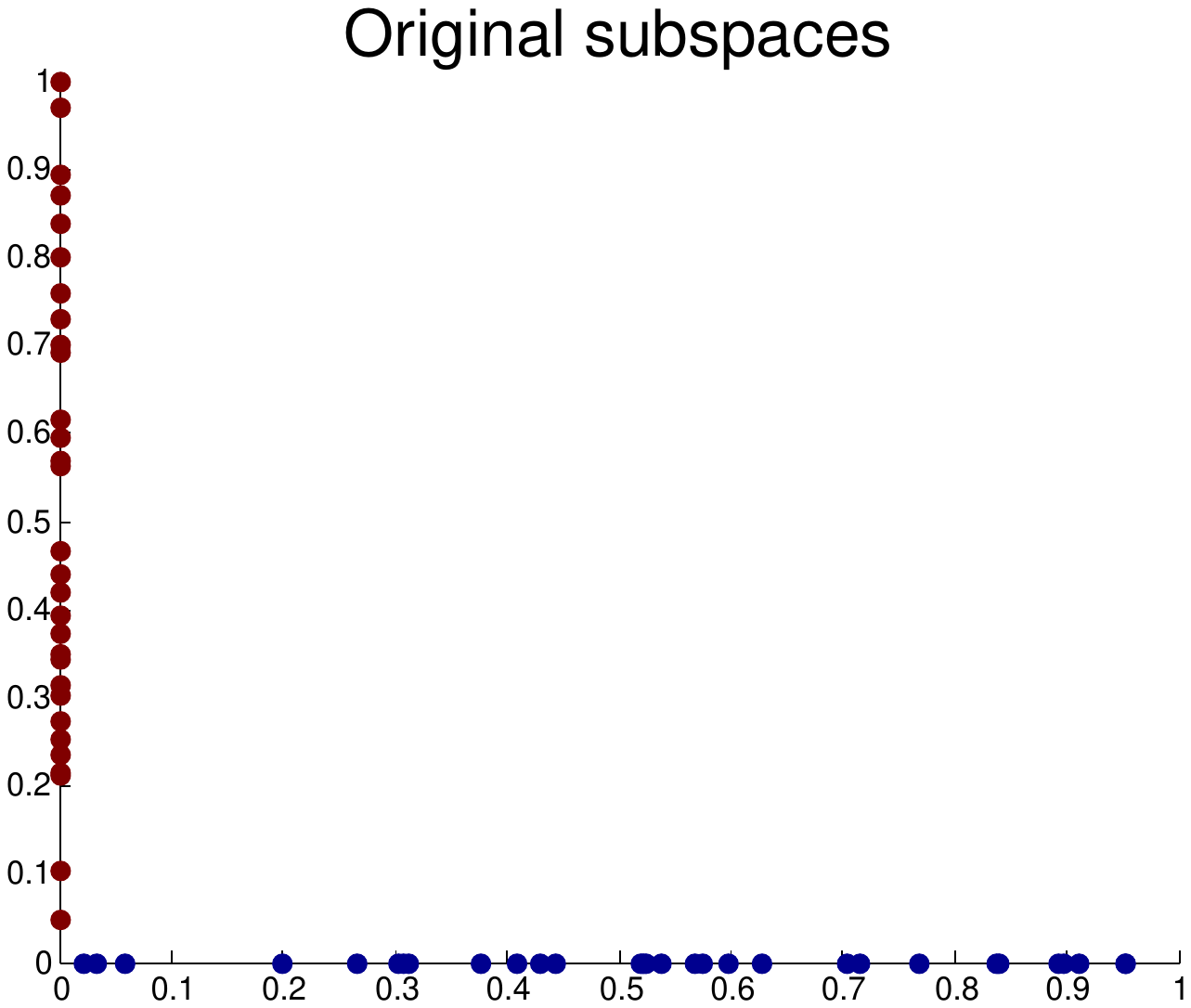}}
 \subfloat[][ $\mathbf{T} = \begin{bmatrix} 1.00 & 0 \\  0& 1.00 \end{bmatrix}$; \\$~~~~~~~~\theta_{AB}=1.57$. ] {\label{fig:2d00T} \includegraphics[angle=0, height=0.25\textwidth, width=.25\textwidth]{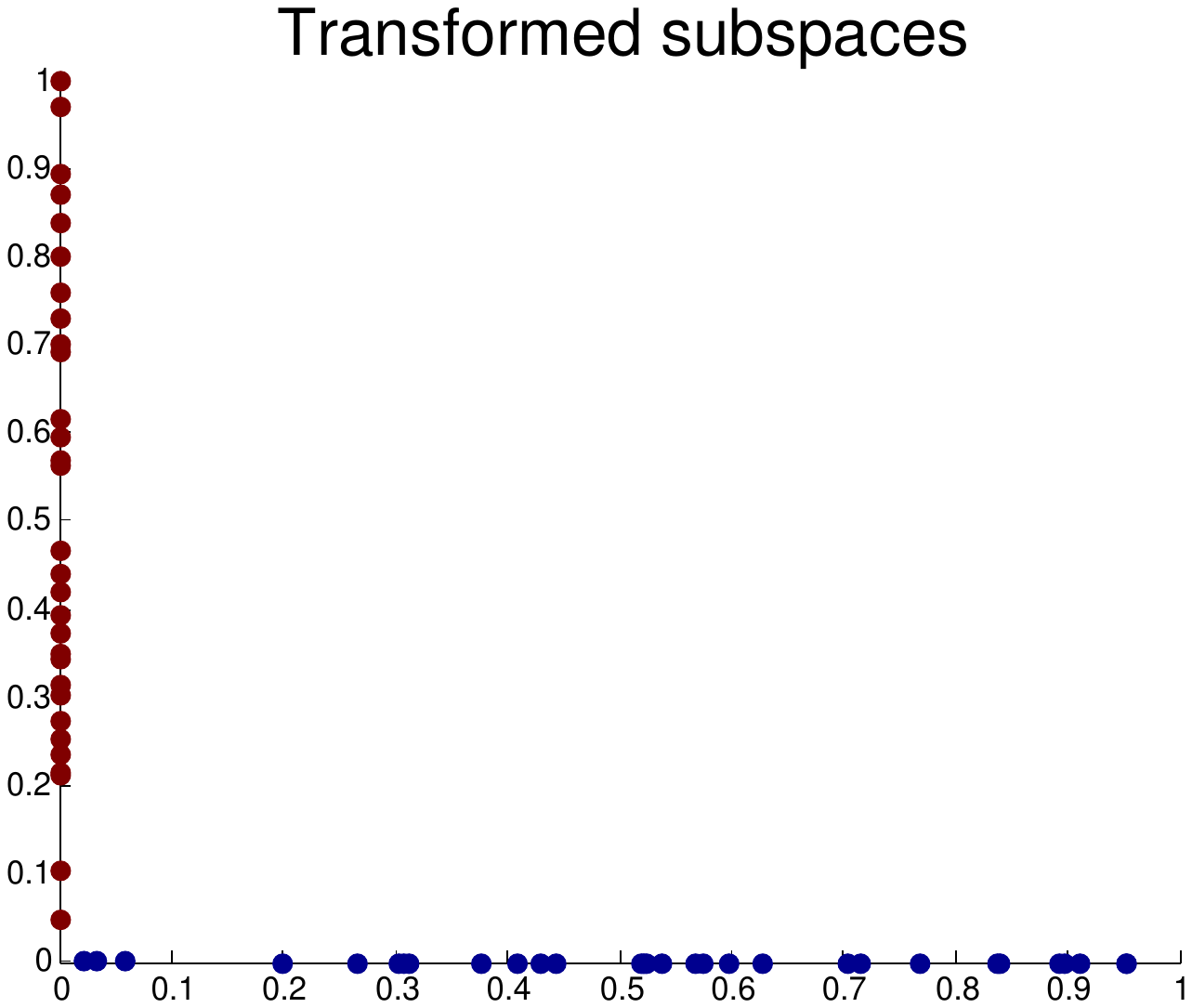} }
 \subfloat[][$\theta_{AB}=\frac{\pi}{4}=0.79$,\\
  $~~~~~|\mathbf{A}|_*=1, |\mathbf{B}|_*=1$, \\$~~~~~|\mathbf{[A,B]}|_*= 1.41$
 ] {\label{fig:2d45O} \includegraphics[angle=0, height=0.25\textwidth, width=.25\textwidth]{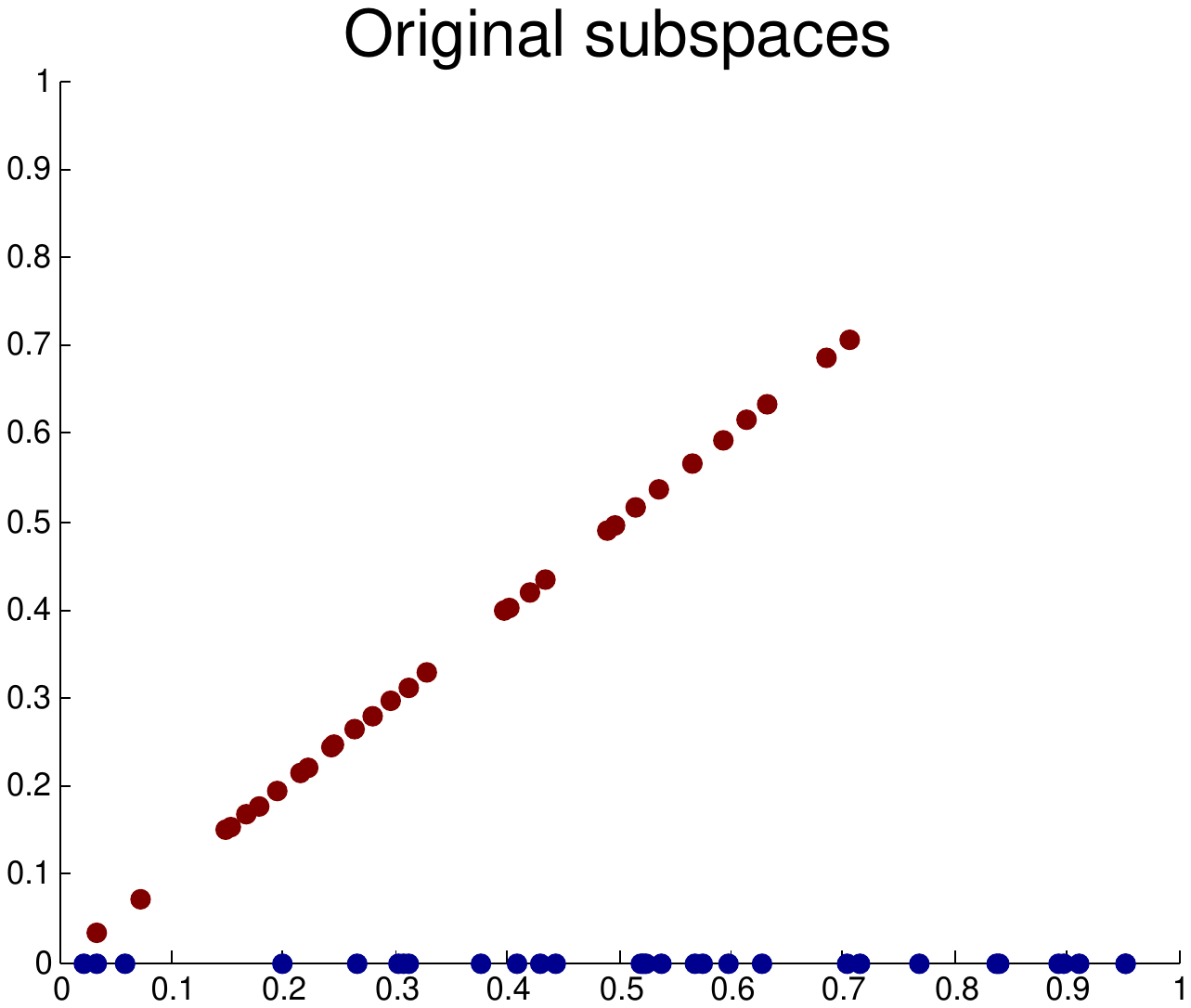}}
  \subfloat[][$\mathbf{T} = \begin{bmatrix} 0.50 & -0.21 \\  -0.21& 0.91 \end{bmatrix}$;\\ $~~~~~~~\theta_{AB}=1.57$,\\
     $~~~~~|\mathbf{A}|_*=1, |\mathbf{B}|_*=1$, \\$~~~~~|\mathbf{[A,B]}|_*= 1.95$
   ] {\label{fig:2d45T} \includegraphics[angle=0, height=0.25\textwidth, width=.25\textwidth]{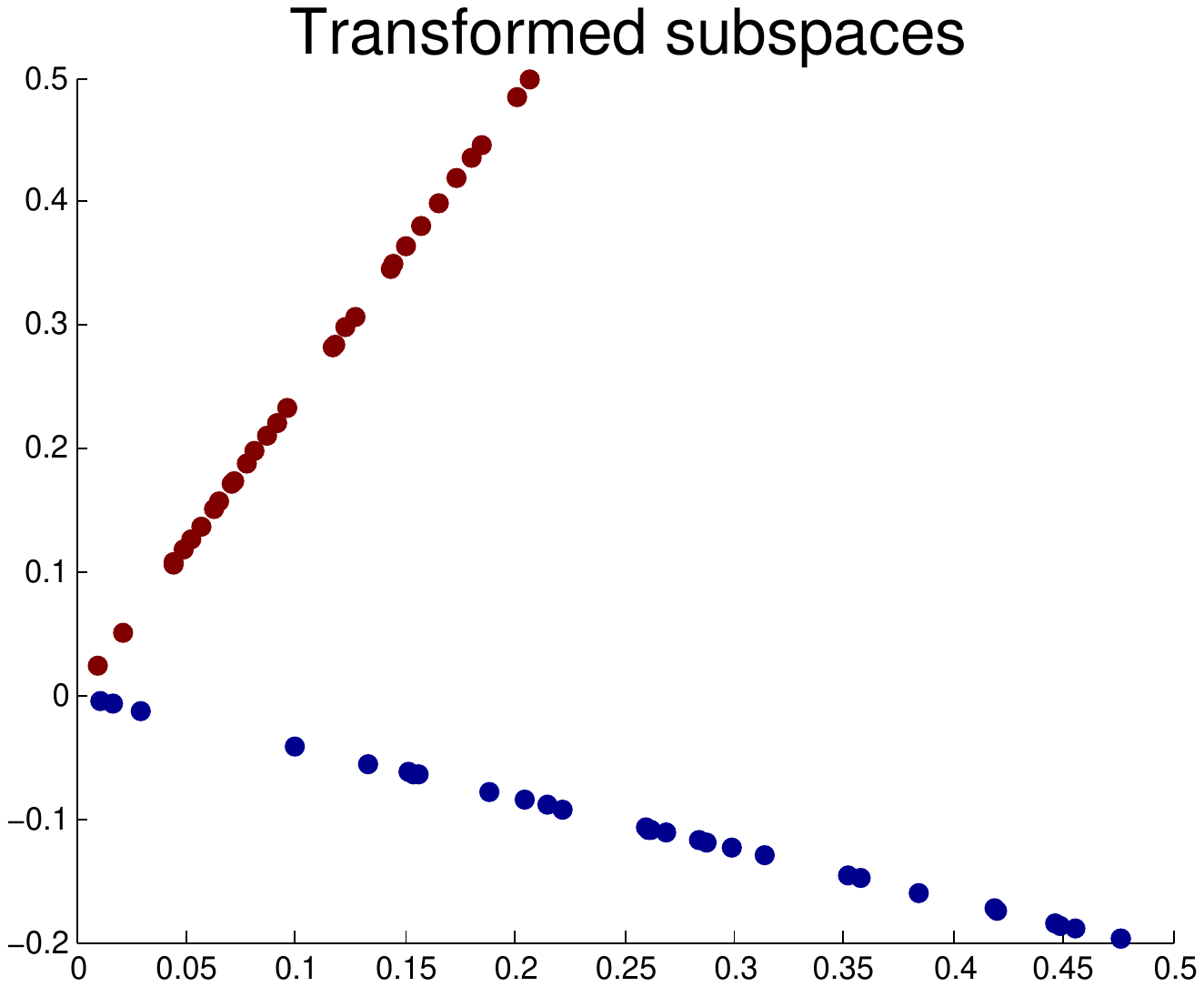}} \\
 \subfloat[][$\begin{bmatrix} \theta_{AB}=0.79,& \theta_{AC}=0.79, &\theta_{BC}=1.05 \\ \epsilon_{A}=0.0141, &\epsilon_{B}=0.0131, &\epsilon_{C}=0.0148 \end{bmatrix}$,\\
 $~~~~~~\begin{matrix} |\mathbf{A}|_*=4.06, & |\mathbf{B}|_*=4.08, & |\mathbf{C}|_*= 4.16\end{matrix}$.] {\label{fig:3d45O} \includegraphics[angle=0, height=0.3\textwidth, width=.45\textwidth]{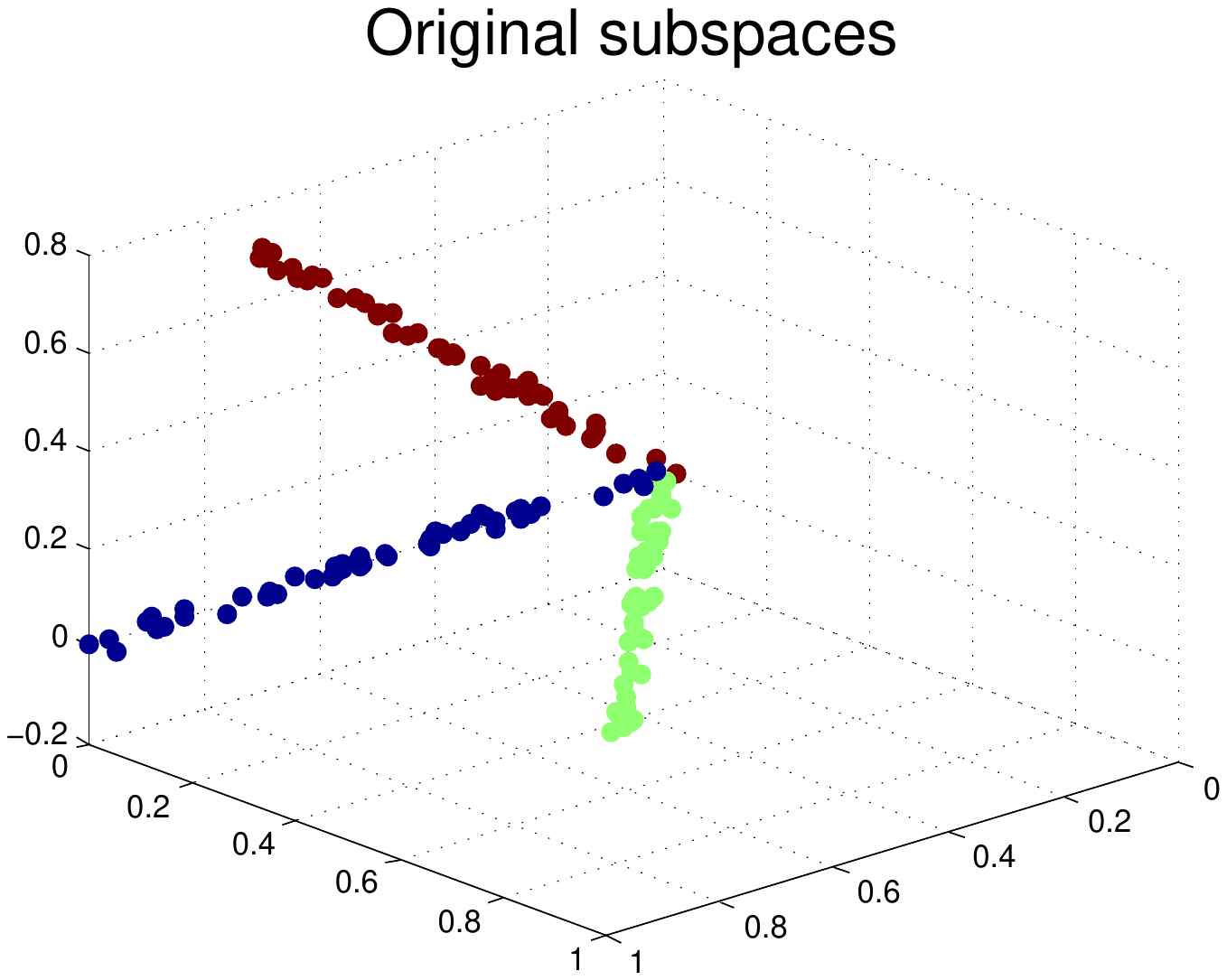}}
  \subfloat[][$\mathbf{T} = \begin{bmatrix} 0.39  & -0.16 &  -0.16 \\  -0.13  &  0.90  &  0.11 \\ -0.23  &  0.11  &  0.57 \end{bmatrix}$; \\ $~~~~~\begin{bmatrix} \theta_{AB}= 1.51, &\theta_{AC}=1.49, &\theta_{BC}=1.57 \\ \epsilon_{A}=0.0091, &\epsilon_{B}=0.0085, &\epsilon_{C}=0.0114 \end{bmatrix}$,\\
  $~~~~~\begin{matrix} |\mathbf{A}|_*=1.93, & |\mathbf{B}|_*=2.37, & |\mathbf{C}|_*= 1.20 \end{matrix}$.] {\label{fig:3d45T} \includegraphics[angle=0, height=0.3\textwidth, width=.45\textwidth]{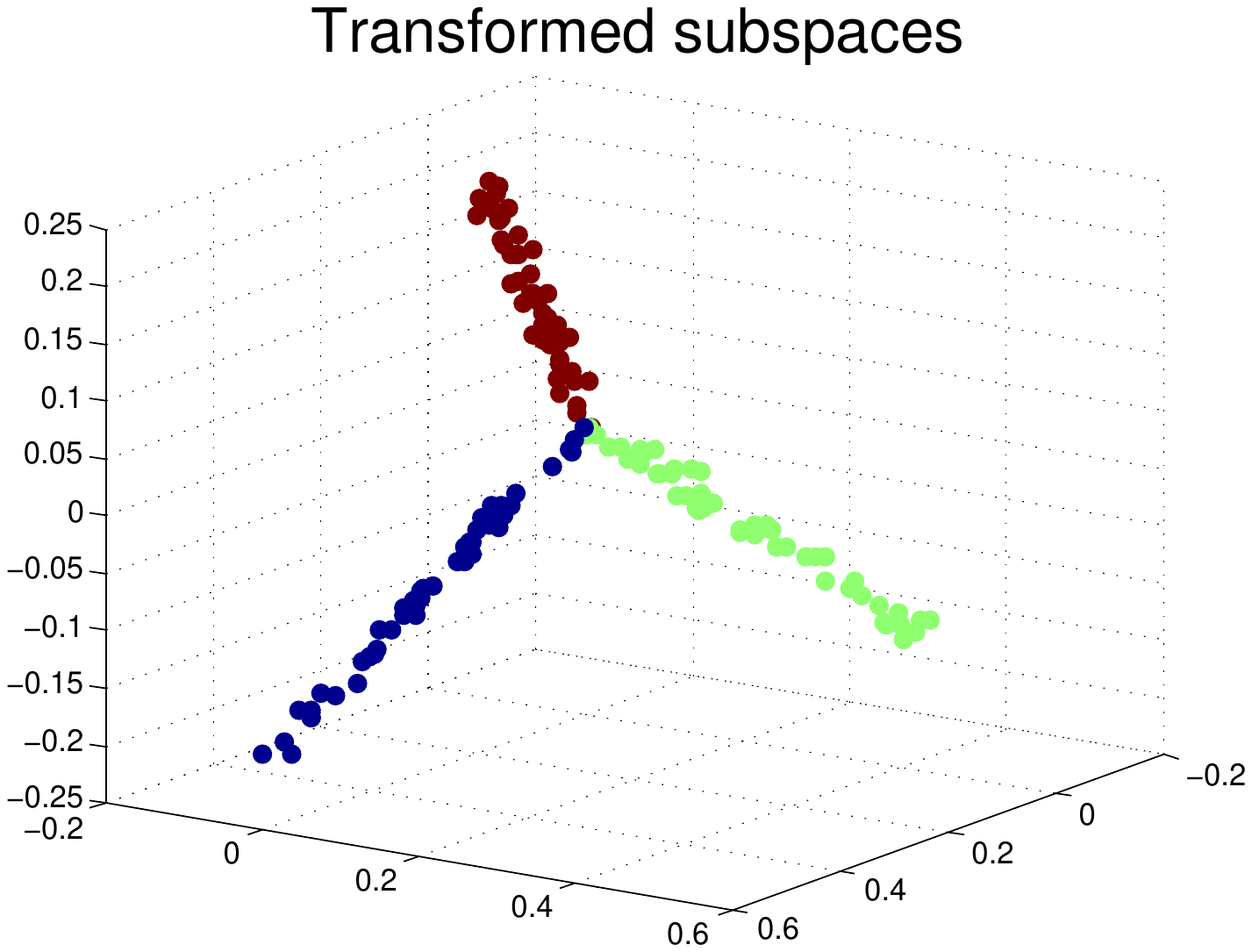}} \\
\caption{The learned transformation $\mathbf{T}$ using (\ref{nuclear_obj}) with the nuclear norm as the key criterion. Three subspaces in $\mathbb{R}^3$ are denoted as $\mathbf{A}$(red), $\mathbf{B}$(blue), $\mathbf{C}$(green). We denote the angle between subspaces $\mathbf{A}$ and $\mathbf{B}$ as $\theta_{AB}$ (and analogous for the other pairs of subspaces).
Using (\ref{nuclear_obj}), we transform $\mathbf{A}$, $\mathbf{B}$, $\mathbf{C}$ in (a),(c),(e) to (b),(d),(f) respectively  (in the first row the subspace $C$ is empty, being this basically a two dimensional example).
Data points in (e) are associated with random noises $\sim \mathcal{N}(0,0.01)$.
We denote the root mean square deviation of points in $\mathbf{A}$ from the true subspace as $\epsilon_{A}$  (and analogous for the other subspaces).
We observe that the learned transformation $\mathbf{T}$ maximizes the distance between every pair of subspaces towards $\frac{\pi}{2}$, and reduces the deviation of points from the true subspace when noise is present, note how the individual subspaces nuclear norm is significantly reduced.
{ Note that, in (c) and (d), we have the same rank values  $rank(\mathbf{A})=1, rank(\mathbf{B})=1, rank(\mathbf{[A,B]})= 2$, but different nuclear norm values, manifesting the improved between-subspaces separation.}
}
\label{fig:synthetic}
\end{figure*}

Let $||\mathbf{A}||_*$ denote the nuclear norm of the matrix  $\mathbf{A}$, i.e., the sum of the singular values of  $\mathbf{A}$. The nuclear norm $||\mathbf{A}||_*$ is the convex envelop of $rank(\mathbf{A})$ over the unit ball of matrices \cite{rank-min}. As the nuclear norm  can be optimized efficiently, it is often adopted as the best convex approximation of the rank function in the literature on rank optimization (see, e.g., \cite{rpca} and \cite{Recht}).

One factor that fundamentally affects the performance of subspace clustering and classification algorithms is the distance between subspaces.
An important notion to quantify the distance (separation) between two subspaces $\mathcal{S}_i$ and $\mathcal{S}_j$ is the smallest principal angle $\theta_{ij}$ (\cite{prin_angle}, \cite{SSC}), which is defined as
\begin{align} \label{pangle}
\theta_{ij} = \underset{\mathbf{u} \in \mathcal{S}_i, \mathbf{v} \in \mathcal{S}_j} \min \arccos
\frac{\mathbf{u}'\mathbf{v}}{||\mathbf{u}||_2||\mathbf{v}||_2},
\end{align}
Note that $\theta_{ij} \in [0, \frac{\pi}{2}].$

We replace the rank function in (\ref{rank_obj}) with the nuclear norm,
\begin{align} \label{nuclear_obj}
{ \underset{\mathbf{T}} \arg \min \sum_{c=1}^C ||\mathbf{T Y}_c||_* - ||\mathbf{T Y}||_*, ~~s.t. ||\mathbf{T}||_2 = \gamma. }
\end{align}
{
The normalization condition $||\mathbf{T}||_2 = \gamma$ prevents the trivial solution $\mathbf{T}=0$.
 Without loss of generality, we set $\gamma=1$ unless otherwise specified.
 However, understanding the effects of adopting a different normalization here is interesting and is the subject of future research. Throughout this paper we keep this particular form of the normalization which was already proven to lead to excellent results.
}

{
It is important to note that (\ref{nuclear_obj}) is not simply a relaxation of (\ref{rank_obj}).
Not only the replacement of the rank by the nuclear norm is critical for optimization considerations in reducing the variation within same class  subspaces, but as we show next,
 the learned transformation $\mathbf{T}$ using the objective function (\ref{nuclear_obj}) also maximizes the separation between different class  subspaces (a missing property in (\ref{rank_obj})), leading to improved clustering and classification performance.

 We start by presenting some basic norm relationships for matrices and their corresponding concatenations.
 }

\begin{theorem} \label{nuclear_ineq}
Let $\mathbf{A}$ and $\mathbf{B}$ be matrices of the same row dimensions, and $\mathbf{[A,B]}$ be the concatenation of $\mathbf{A}$ and $\mathbf{B}$,  we have
\begin{align} \nonumber
||\mathbf{[A,B]}||_*  \le ||\mathbf{A}||_* + ||\mathbf{B}||_*.
\end{align}
%with equality if the column spaces of $\mathbf{A}$ and $\mathbf{B}$ are orthogonal.
\end{theorem}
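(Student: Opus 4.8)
The plan is to exploit the variational (dual) characterization of the nuclear norm, namely $\|\mathbf{M}\|_* = \sup\{\langle \mathbf{M}, \mathbf{X}\rangle : \|\mathbf{X}\|_2 \le 1\}$, where $\|\cdot\|_2$ is the spectral (induced $2$-)norm already in use in the paper and $\langle \mathbf{M}, \mathbf{X}\rangle = \mathrm{tr}(\mathbf{M}'\mathbf{X})$. This is the clean way to turn a statement about a concatenated matrix into a statement about its blocks, because the inner product splits over a column partition: if $\mathbf{X} = [\mathbf{X}_A, \mathbf{X}_B]$ is partitioned conformally with $[\mathbf{A},\mathbf{B}]$, then $\langle [\mathbf{A},\mathbf{B}], \mathbf{X}\rangle = \langle \mathbf{A}, \mathbf{X}_A\rangle + \langle \mathbf{B}, \mathbf{X}_B\rangle$.

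The key steps, in order, would be: (i) fix any $\mathbf{X}$ with $\|\mathbf{X}\|_2 \le 1$ achieving (or approaching) the supremum for $\|[\mathbf{A},\mathbf{B}]\|_*$, and partition it as $\mathbf{X} = [\mathbf{X}_A, \mathbf{X}_B]$; (ii) observe that each block inherits the spectral-norm bound, i.e. $\|\mathbf{X}_A\|_2 \le \|\mathbf{X}\|_2 \le 1$ and likewise for $\mathbf{X}_B$ — this holds because a submatrix obtained by selecting a subset of columns cannot increase the spectral norm (it is the norm of a restriction of the linear map); (iii) write $\|[\mathbf{A},\mathbf{B}]\|_* = \langle \mathbf{A}, \mathbf{X}_A\rangle + \langle \mathbf{B}, \mathbf{X}_B\rangle$ and bound each term by the dual characterization applied to $\mathbf{A}$ and to $\mathbf{B}$ separately, giving $\langle \mathbf{A}, \mathbf{X}_A\rangle \le \|\mathbf{A}\|_*$ and $\langle \mathbf{B}, \mathbf{X}_B\rangle \le \|\mathbf{B}\|_*$; (iv) conclude $\|[\mathbf{A},\mathbf{B}]\|_* \le \|\mathbf{A}\|_* + \|\mathbf{B}\|_*$. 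An alternative, equally short route avoids duality: take the SVD-based fact that $\|\mathbf{M}\|_* = \min\{ \|\mathbf{U}\|_F \|\mathbf{V}\|_F : \mathbf{M} = \mathbf{U}\mathbf{V}'\}$ — actually more convenient is $\|\mathbf{M}\|_* = \tfrac12 \min\{\|\mathbf{U}\|_F^2 + \|\mathbf{V}\|_F^2 : \mathbf{M} = \mathbf{U}\mathbf{V}'\}$ — factor $\mathbf{A} = \mathbf{U}_A \mathbf{V}_A'$ and $\mathbf{B} = \mathbf{U}_B \mathbf{V}_B'$ near-optimally, and assemble the block factorization $[\mathbf{A},\mathbf{B}] = [\mathbf{U}_A, \mathbf{U}_B]\, \mathrm{blkdiag}(\mathbf{V}_A, \mathbf{V}_B)'$, then read off the bound from additivity of squared Frobenius norms.

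I expect the only mildly delicate point to be step (ii): justifying that passing from $\mathbf{X}$ to a column-submatrix $\mathbf{X}_A$ does not increase the spectral norm. This is standard — for any unit vector $\mathbf{w}$ supported on the $A$-columns, $\|\mathbf{X}_A \mathbf{w}\|_2 = \|\mathbf{X}\tilde{\mathbf{w}}\|_2 \le \|\mathbf{X}\|_2$ where $\tilde{\mathbf{w}}$ is $\mathbf{w}$ zero-padded to full length — but it is the one inequality a careful reader will want spelled out. Everything else is a direct application of convex-duality identities for the nuclear norm. (One should also note that the result extends immediately by induction to the concatenation of $C$ blocks, matching inequality~(\ref{rank_concat2}) with the rank replaced by the nuclear norm, which is what the subsequent development needs.)
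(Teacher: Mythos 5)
Your proposal is correct. Your primary route --- the dual characterization $||\mathbf{M}||_* = \sup\{\langle \mathbf{M},\mathbf{X}\rangle : ||\mathbf{X}||_2 \le 1\}$, splitting the trace inner product over the column partition $\mathbf{X}=[\mathbf{X}_A,\mathbf{X}_B]$, and observing that a column submatrix cannot have larger spectral norm --- is a genuinely different argument from the one in the paper, and it is sound; the zero-padding justification you flag for step (ii) is exactly the right thing to spell out. The paper instead uses your ``alternative route'': it invokes the factorization formula $||\mathbf{M}||_* = \min_{\mathbf{M}=\mathbf{U}\mathbf{V}'} \tfrac{1}{2}(||\mathbf{U}||_F^2+||\mathbf{V}||_F^2)$, takes optimal factors $\mathbf{U_A},\mathbf{V_A}$ and $\mathbf{U_B},\mathbf{V_B}$, assembles a (generally suboptimal) factorization of $[\mathbf{A},\mathbf{B}]$ from them, and reads off the bound from additivity of squared Frobenius norms --- note that the valid assembled factorization is $[\mathbf{A},\mathbf{B}]=[\mathbf{U_A},\mathbf{U_B}]\,\mathbf{W}'$ with $\mathbf{W}$ block-diagonal in $\mathbf{V_A},\mathbf{V_B}$, whose Frobenius norm equals that of the concatenation, a point the paper glosses over and you state more carefully. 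The two characterizations are two sides of the same convex duality, so neither argument is deeper than the other; the dual route has the advantage of making the equality case transparent (equality requires a single dual certificate $\mathbf{X}$ with $||\mathbf{X}||_2\le 1$ that is simultaneously optimal for $\mathbf{A}$ and $\mathbf{B}$, which is what orthogonal column spaces provide in Theorem~\ref{nuclear_eq}), while the factorization route matches the machinery the paper already cites and extends to $C$ blocks by the same induction you note.
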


\begin{proof} See Appendix~\ref{sec:nuclear_ineq}.
\end{proof}

\begin{theorem} \label{nuclear_eq}
Let $\mathbf{A}$ and $\mathbf{B}$ be matrices of the same row dimensions, and $\mathbf{[A,B]}$ be the concatenation of $\mathbf{A}$ and $\mathbf{B}$,  we have
\begin{align} \nonumber
||\mathbf{[A,B]}||_*  = ||\mathbf{A}||_* + ||\mathbf{B}||_*.
\end{align}
when the column spaces of $\mathbf{A}$ and $\mathbf{B}$ are orthogonal.
\end{theorem}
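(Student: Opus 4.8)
The plan is to build an explicit singular value decomposition of $[\mathbf{A},\mathbf{B}]$ out of the SVDs of $\mathbf{A}$ and $\mathbf{B}$ and then simply read off the singular values. First I would write compact SVDs $\mathbf{A}=\mathbf{U}_A\mathbf{\Sigma}_A\mathbf{V}_A'$ and $\mathbf{B}=\mathbf{U}_B\mathbf{\Sigma}_B\mathbf{V}_B'$, where $\mathbf{U}_A$ (resp.\ $\mathbf{U}_B$) has orthonormal columns spanning the column space of $\mathbf{A}$ (resp.\ $\mathbf{B}$), the $\mathbf{\Sigma}$'s are square diagonal with positive diagonal entries, and the $\mathbf{V}$'s have orthonormal columns. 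The hypothesis that the column spaces of $\mathbf{A}$ and $\mathbf{B}$ are orthogonal is exactly the statement $\mathbf{U}_A'\mathbf{U}_B=0$ (equivalently $\mathbf{A}'\mathbf{B}=0$).

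Next I would verify the block factorization
\[
[\mathbf{A},\mathbf{B}]
= \bigl[\,\mathbf{U}_A\ \ \mathbf{U}_B\,\bigr]
\begin{bmatrix}\mathbf{\Sigma}_A & 0\\ 0 & \mathbf{\Sigma}_B\end{bmatrix}
\begin{bmatrix}\mathbf{V}_A & 0\\ 0 & \mathbf{V}_B\end{bmatrix}'.
\]
Here the middle factor is diagonal with nonnegative entries; the right factor is block diagonal with orthonormal-column blocks, hence itself has orthonormal columns; and, crucially, $[\mathbf{U}_A\ \mathbf{U}_B]$ has orthonormal columns precisely because $\mathbf{U}_A$ and $\mathbf{U}_B$ do and $\mathbf{U}_A'\mathbf{U}_B=0$ --- this is where orthogonality of the column spaces enters. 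Up to a permutation that sorts the diagonal of the middle factor in nonincreasing order (and the corresponding column permutations of the outer factors), this is a genuine SVD of $[\mathbf{A},\mathbf{B}]$. Consequently the multiset of singular values of $[\mathbf{A},\mathbf{B}]$ is the disjoint union of the singular values of $\mathbf{A}$ and those of $\mathbf{B}$, and summing gives $||[\mathbf{A},\mathbf{B}]||_*=||\mathbf{A}||_*+||\mathbf{B}||_*$.

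I expect no serious obstacle here; the statement is a clean equality case, and the only points worth stating explicitly are (i) the dictionary ``orthogonal column spaces $\Leftrightarrow \mathbf{A}'\mathbf{B}=0 \Leftrightarrow \mathbf{U}_A'\mathbf{U}_B=0$'' and (ii) the elementary fact that horizontally stacking two matrices with orthonormal columns whose column spaces are orthogonal again produces a matrix with orthonormal columns. A permutation-free alternative uses $||\mathbf{M}||_*=\mathrm{tr}\bigl((\mathbf{M}\mathbf{M}')^{1/2}\bigr)$: one has $[\mathbf{A},\mathbf{B}][\mathbf{A},\mathbf{B}]'=\mathbf{A}\mathbf{A}'+\mathbf{B}\mathbf{B}'$, the two summands are positive semidefinite operators supported on the orthogonal subspaces $\mathrm{col}(\mathbf{A})$ and $\mathrm{col}(\mathbf{B})$, so they are simultaneously diagonalizable and the spectrum of the sum is the union of the two spectra; taking the matrix square root and the trace yields the same conclusion. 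Either way, this is the equality counterpart of Theorem~\ref{nuclear_ineq}, exhibiting orthogonality of the column spaces as a sufficient condition for that triangle-type inequality to be tight.
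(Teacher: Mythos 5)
Your proof is correct, and it is essentially the paper's argument: the paper reaches the same conclusion by computing $[\mathbf{A},\mathbf{B}][\mathbf{A},\mathbf{B}]' = \mathbf{A}\mathbf{A}'+\mathbf{B}\mathbf{B}'$ and jointly diagonalizing the two summands in a common orthonormal basis, which is exactly your ``permutation-free alternative.'' Your primary route, assembling an explicit block SVD of $[\mathbf{A},\mathbf{B}]$ from the compact SVDs of $\mathbf{A}$ and $\mathbf{B}$ and using $\mathbf{U}_A'\mathbf{U}_B=0$ to check orthonormality of the left factor, is an equivalent and if anything slightly cleaner way to see that the singular values of the concatenation are the disjoint union of those of $\mathbf{A}$ and $\mathbf{B}$.
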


\begin{proof} See Appendix~\ref{sec:nuclear_eq}.
\end{proof}

It is easy to see that theorems \ref{nuclear_ineq} and \ref{nuclear_eq} can be extended for the concatenation of multiple matrices.
Thus, for (\ref{nuclear_obj}), we have,
\begin{align} \label{nuclear_objval}
\sum_{c=1}^C ||\mathbf{T Y}_c||_* - ||\mathbf{T Y}||_* \ge 0.
\end{align}
Based on (\ref{nuclear_objval}) and Theorem~\ref{nuclear_eq}, the proposed objective function (\ref{nuclear_obj}) reaches the minimum $0$ if the column spaces of every pair of matrices are orthogonal after applying the learned transformation $\mathbf{T}$; or equivalently,
(\ref{nuclear_obj}) reaches the minimum $0$ when the separation between every pair of subspaces is maximized after transformation, i.e., the smallest principal angle between subspaces equals $\frac{\pi}{2}$.  Note that such improved separation is not obtained if the rank is used in the second term in (\ref{nuclear_obj}), thereby further justifying the use of the nuclear norm instead.

We have then, both intuitively and theoretically, justified the selection of the criteria (\ref{nuclear_obj}) for learning the transform $\mathbf{T}$.
We now illustrate the properties of the learned transformation $\mathbf{T}$
using synthetic examples in Fig.~\ref{fig:synthetic} (real examples are presented in Section~\ref{sec:expr}). Here  we adopt a
projected subgradient method described in Appendix~\ref{gradesc} (though other modern nuclear norm optimization techniques could be considered, including recent real-time formulations \cite{pablo-lr}) to search for the transformation matrix T that minimizes (\ref{nuclear_obj}). As shown in Fig.~\ref{fig:synthetic}, the learned transformation $\mathbf{T}$ via (\ref{nuclear_obj}) maximizes the separation between every pair of subspaces towards $\frac{\pi}{2}$, and reduces the deviation of the data  points to the true subspace when noise is present.
{
Note that, comparing Fig.~\ref{fig:2d45O} to Fig.\ref{fig:2d45T}, the learned transformation using (\ref{nuclear_obj}) maximizes the angle between subspaces, and  the nuclear norm changes from $|\mathbf{[A,B]}|_*= 1.41$ to $|\mathbf{[A,B]}|_*= 1.95$ to make $|\mathbf{A}|_* + |\mathbf{B}|_* - |\mathbf{[A,B]}|_* \approx 0$; However, in both cases, where subspaces are independent,  $rank(\mathbf{[A,B]})= 2$,  and $rank(\mathbf{A}) + rank(\mathbf{B}) - rank(\mathbf{[A,B]})= 0$.
}

\subsection{Comparisons with other Transformations}

\begin{figure*} [ht]
\centering
 \subfloat[][200 samples per plane, \\$\theta_{AB}=0.31$,\\  $|\mathbf{A}|_*=1.91,  |\mathbf{B}|_*=1.88$.] {\label{fig:2plane_200O} \includegraphics[angle=0, height=0.3\textwidth, width=.33\textwidth]{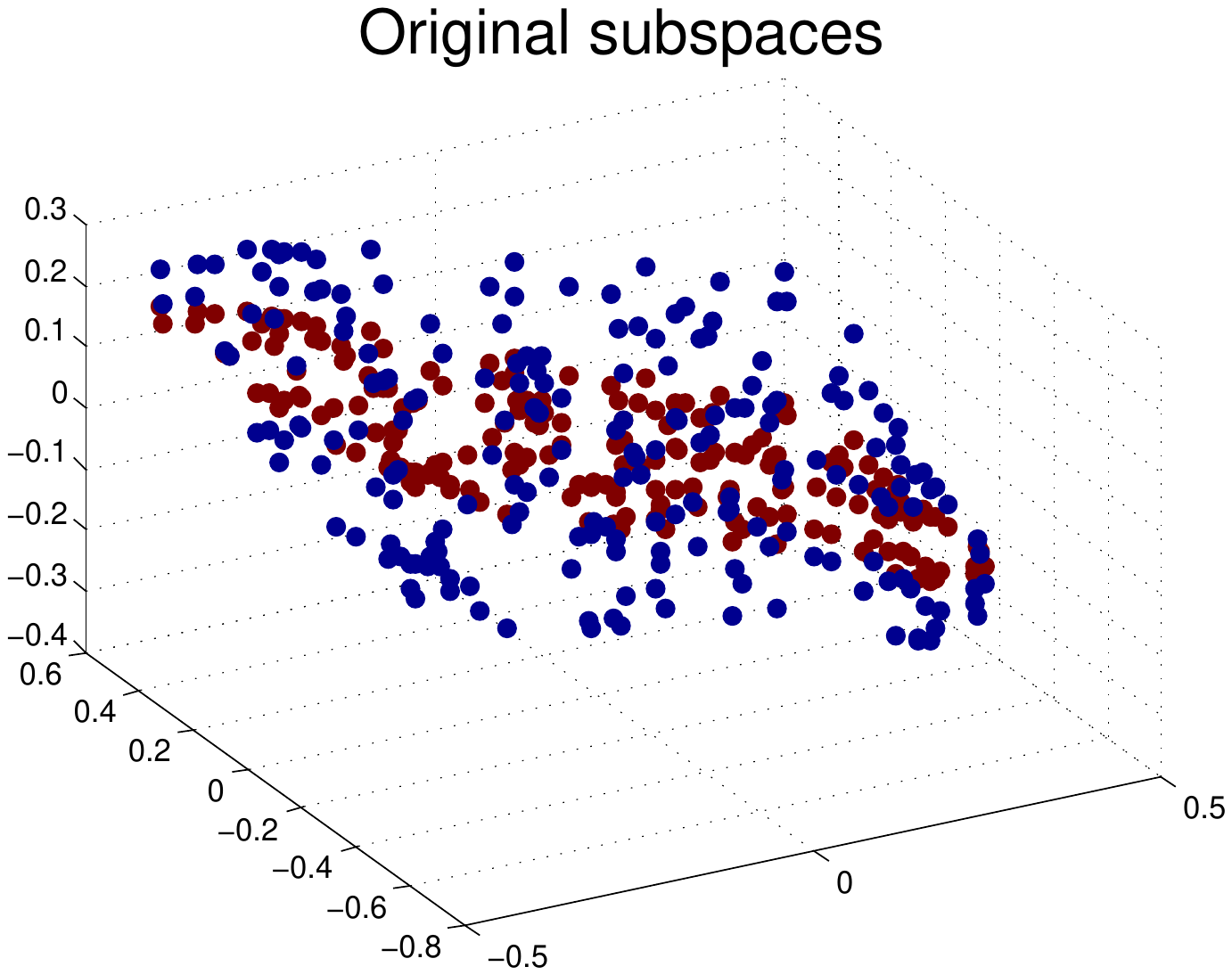}}
   \subfloat[][$\mathbf{T} = \begin{bmatrix} -0.36  & 0 &  0.10 \\  -1.4  &  0.09 &  -2.98 \\ -0.92 &  0  &  1.54 \\ 1.06 &  -1.12  &  2.63 \end{bmatrix}$;  $~~~~~\theta_{AB}= 1.41$, \\ $~~~~~\begin{matrix} |\mathbf{A}|_*=1.61, &|\mathbf{B}|_*=1.62 \end{matrix}$.] {\label{fig:2plane_200OT} \includegraphics[angle=0, height=0.3\textwidth, width=.33\textwidth]{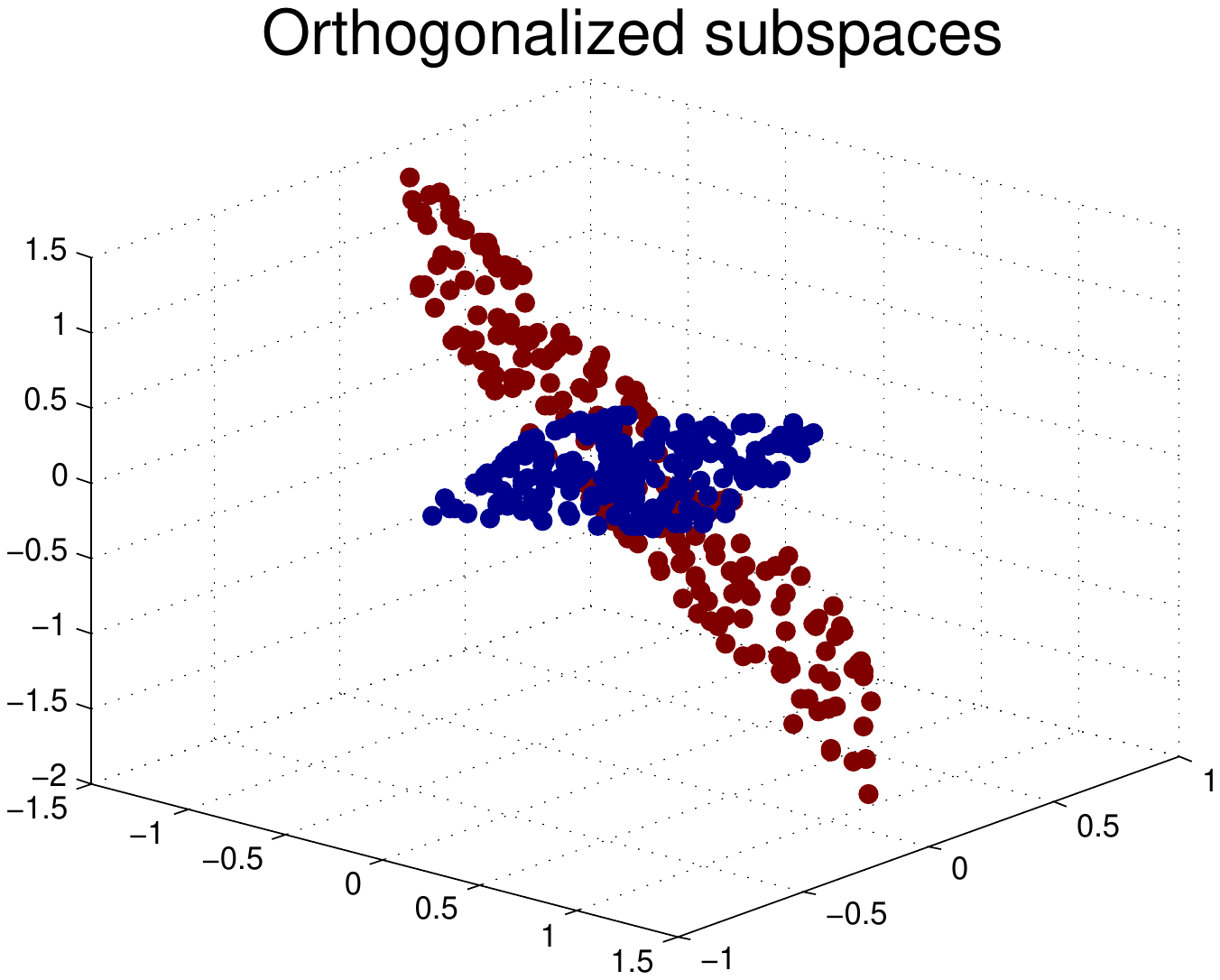}}
   \subfloat[][$\mathbf{T} = \begin{bmatrix} 0.14  & 0.03 &  0.33 \\  -0.01  &  0.14  &  -0.16 \\ 0.29 &  -0.16  &  0.86 \end{bmatrix}$;  $~~~~~\theta_{AB}= 1.55$, \\ $~~~~~\begin{matrix} |\mathbf{A}|_*=1.06, & |\mathbf{B}|_*=1.06 \end{matrix}$.] {\label{fig:2plane_200T} \includegraphics[angle=0, height=0.3\textwidth, width=.33\textwidth]{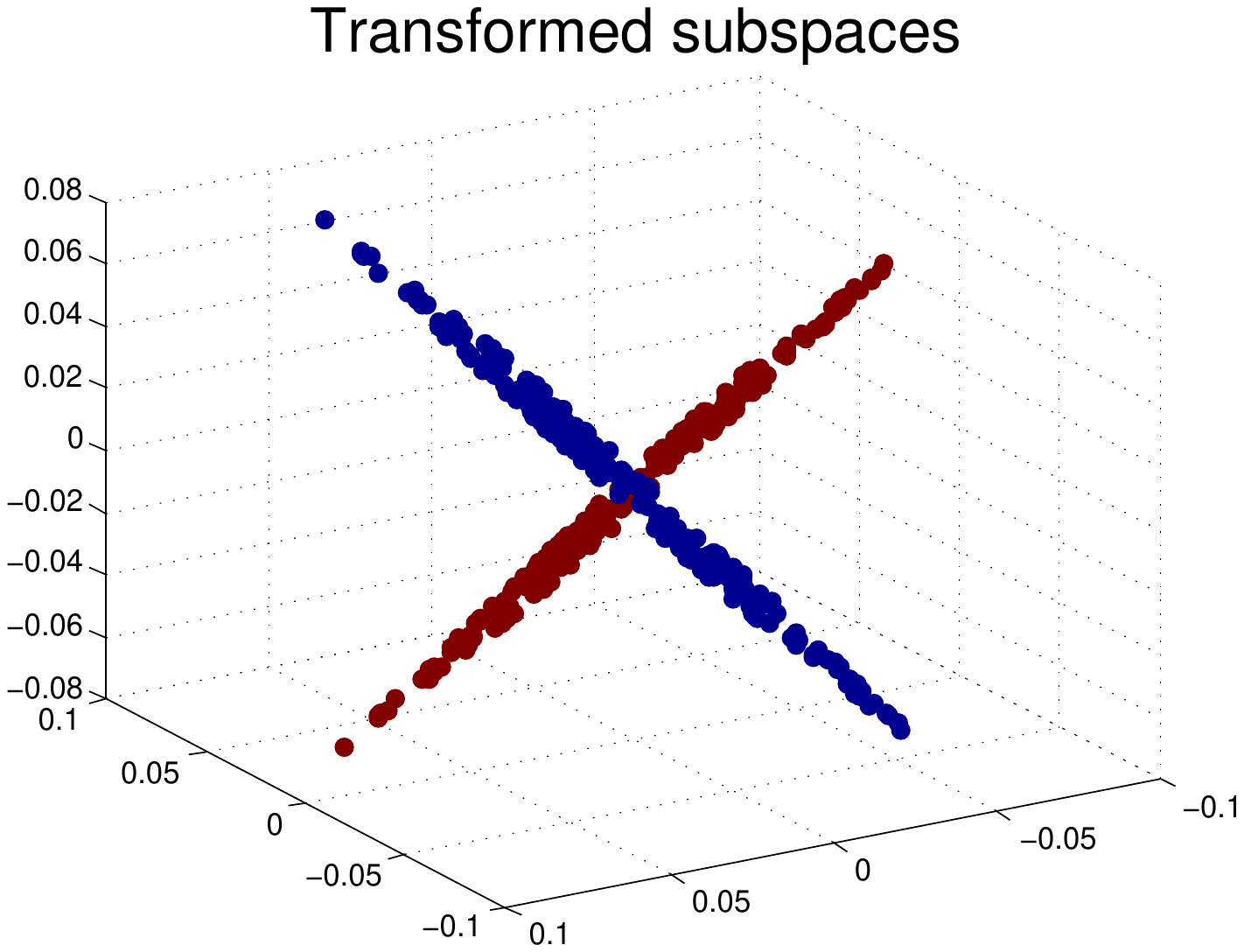}} \\
   \subfloat[][75 samples per plane, \\$\theta_{AB}=0.31$,\\  $|\mathbf{A}|_*=1.92,  |\mathbf{B}|_*=1.81$.] {\label{fig:2plane_75O} \includegraphics[angle=0, height=0.3\textwidth, width=.33\textwidth]{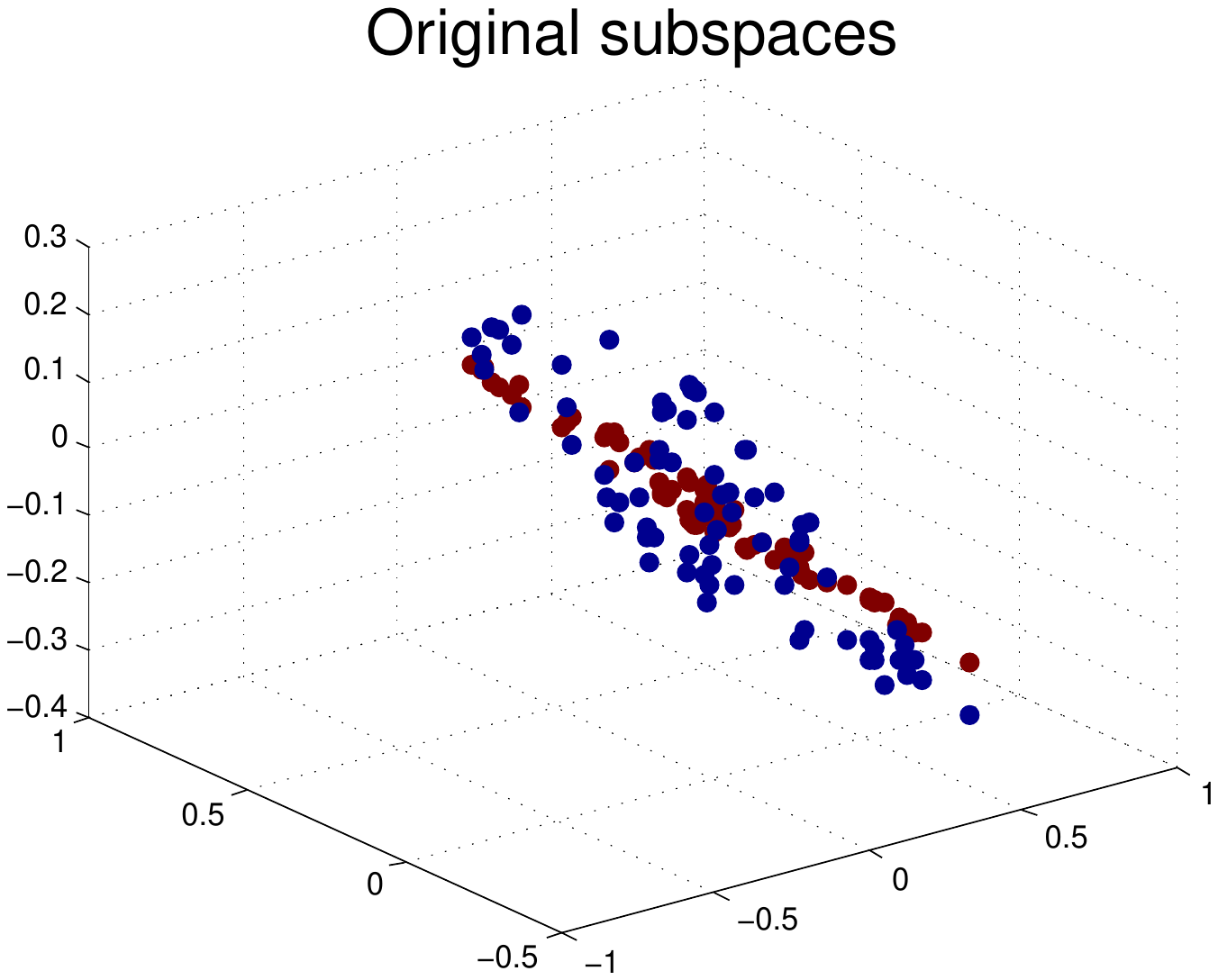}}
   \subfloat[][$\mathbf{T} = \begin{bmatrix} -1.62 & 0 & -1.94 \\  -0.50  &  0 &  -2.38 \\ 0.25 &  -0.50  &  1.81 \\ 1.12 &  -1.50  &  2.37 \end{bmatrix}$;  $~~~~~\theta_{AB}= 1.04$, \\ $~~~~~\begin{matrix} |\mathbf{A}|_*=1.75, &|\mathbf{B}|_*=1.71 \end{matrix}$.] {\label{fig:2plane_75OT} \includegraphics[angle=0, height=0.3\textwidth, width=.33\textwidth]{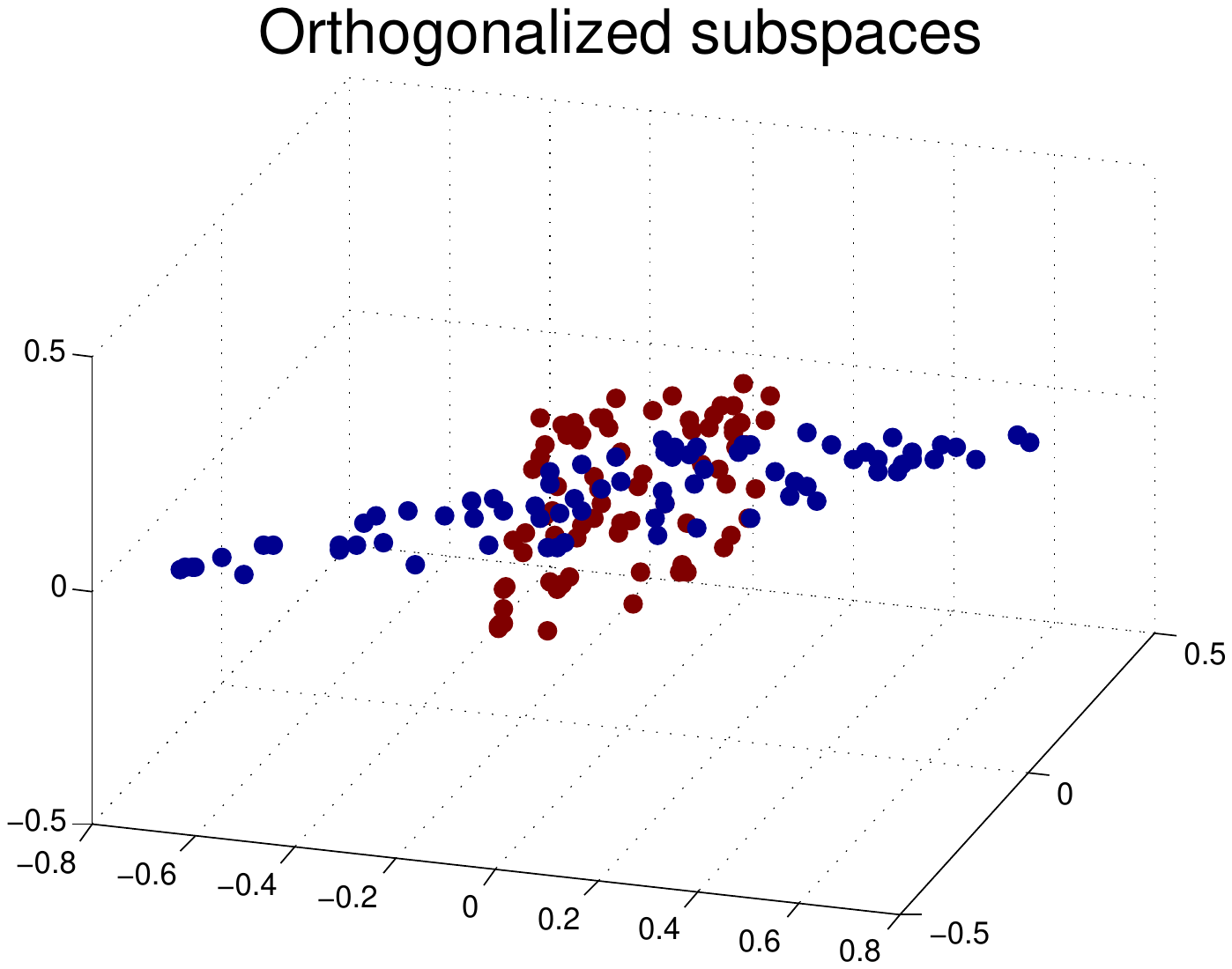}}
   \subfloat[][$\mathbf{T} = \begin{bmatrix} 0.06  & -0.12 &  0.30 \\  -0.01  &  0.13  &  -0.15 \\ 0.33 &  -0.12  &  0.86 \end{bmatrix}$;  $~~~~~\theta_{AB}= 1.55$, \\ $~~~~~\begin{matrix} |\mathbf{A}|_*=1.08, & |\mathbf{B}|_*=1.07 \end{matrix}$.] {\label{fig:2plane_75T} \includegraphics[angle=0, height=0.3\textwidth, width=.33\textwidth]{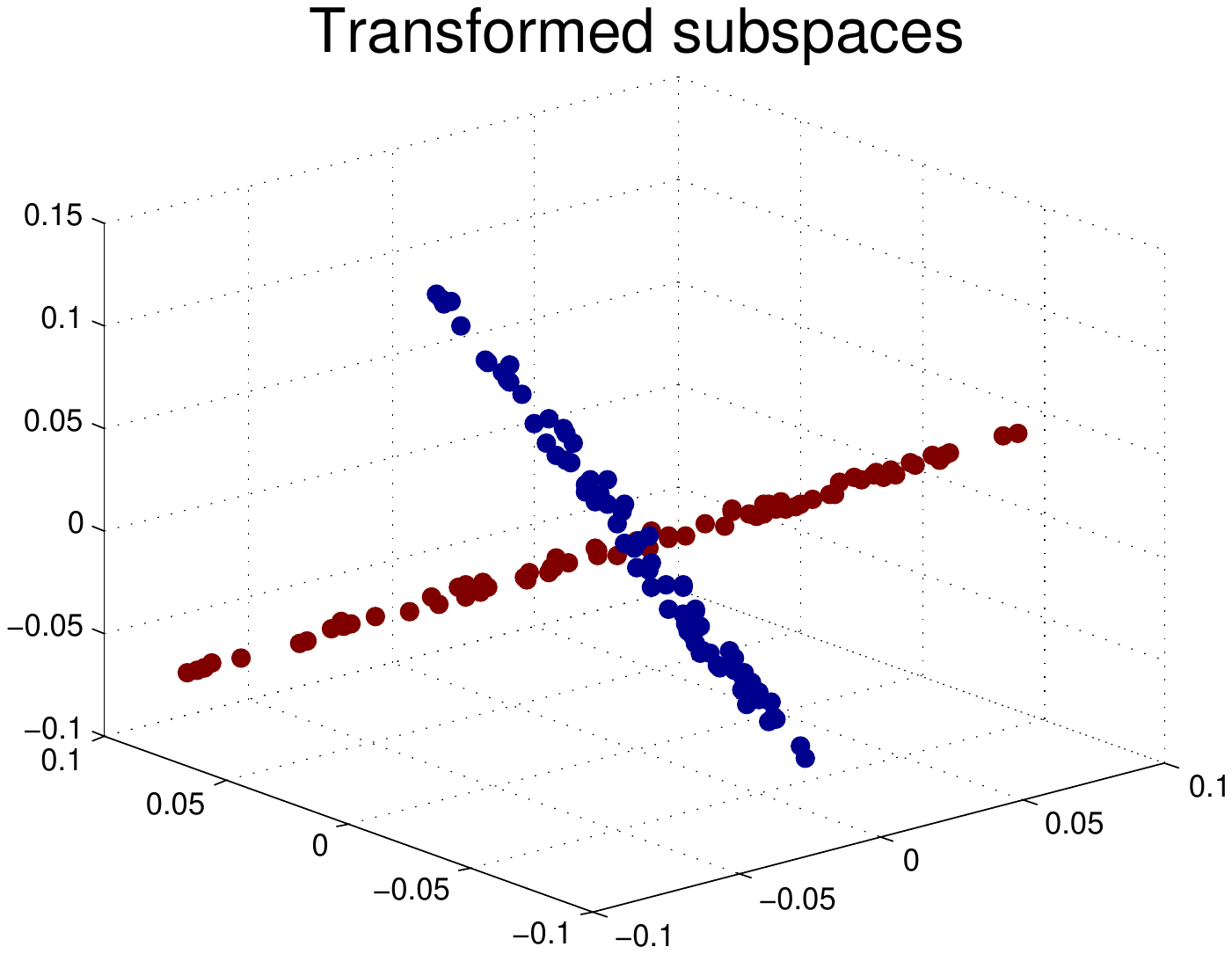}}
\caption{
{ Comparisons with the closed-form orthogonalizing transformation.
Two intersecting planes are shown in (a), and  each plane contains 200 points.
The closed-form orthogonalizing transformation significantly increase the angle between the two planes towards $\frac{\pi}{2}$ in (b).
Our leaned transformation in (c) introduces similar subspace separation, but simultaneously enables significantly reduced within subspace variation, indicated by the smaller nuclear norm values (close to 1).
The same set of experiments with 75 points per subspace are shown in the second row.
}
}
\label{fig:2plane}
\end{figure*}

\begin{figure*} [ht]
\centering
 \subfloat[][Two classes \{$\mathbf{Y}_+$,$\mathbf{Y}_-$\},
 $~\mathbf{Y}_+=\{\mathbf{A}(blue), \mathbf{B}(cyan)\}$, \\$~\mathbf{Y}_-=\{\mathbf{C}(yellow), \mathbf{D}(red)\} $, \\
 $\begin{bmatrix} \theta_{AB}=1.1,\theta_{AC}=1.1, \theta_{AD}=1.1, \\ \theta_{BC}=1.3, \theta_{BD}=1.4, \theta_{CD}=0.5 \end{bmatrix}$,\\
 $~~\begin{matrix} |\mathbf{Y}_+|_*=1.58, & |\mathbf{Y}_-|_*=1.27 \end{matrix}$. ] {\label{fig:4line_O} \includegraphics[angle=0, height=0.3\textwidth, width=.33\textwidth]{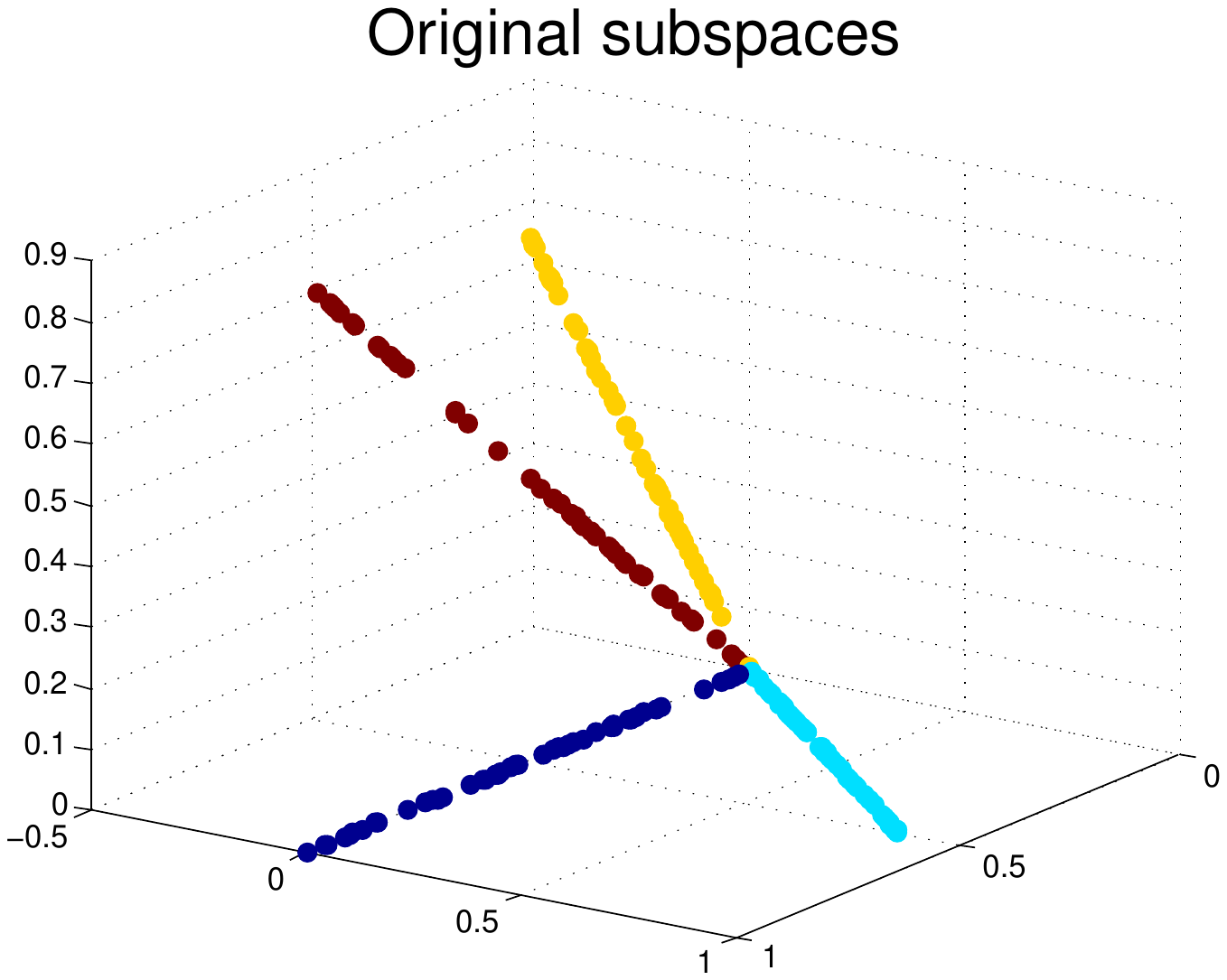}}
   \subfloat[][$\mathbf{T} = \begin{bmatrix} -3.64  & -1.95 & 5.98 \\  0.19  &   3.87 &  3.35  \end{bmatrix}$;\\  $\begin{bmatrix} \theta_{AB}=0.7,\theta_{AC}=0.78, \theta_{AD}= 0.2, \\ \theta_{BC}=1.5, \theta_{BD}=0.9, \theta_{CD}=0.57 \end{bmatrix}$,\\
   $~~\begin{matrix} |\mathbf{Y}_+|_*=1.35, & |\mathbf{Y}_-|_*=1.27 \end{matrix}$.] {\label{fig:4line_LDA} \includegraphics[angle=0, height=0.3\textwidth, width=.33\textwidth]{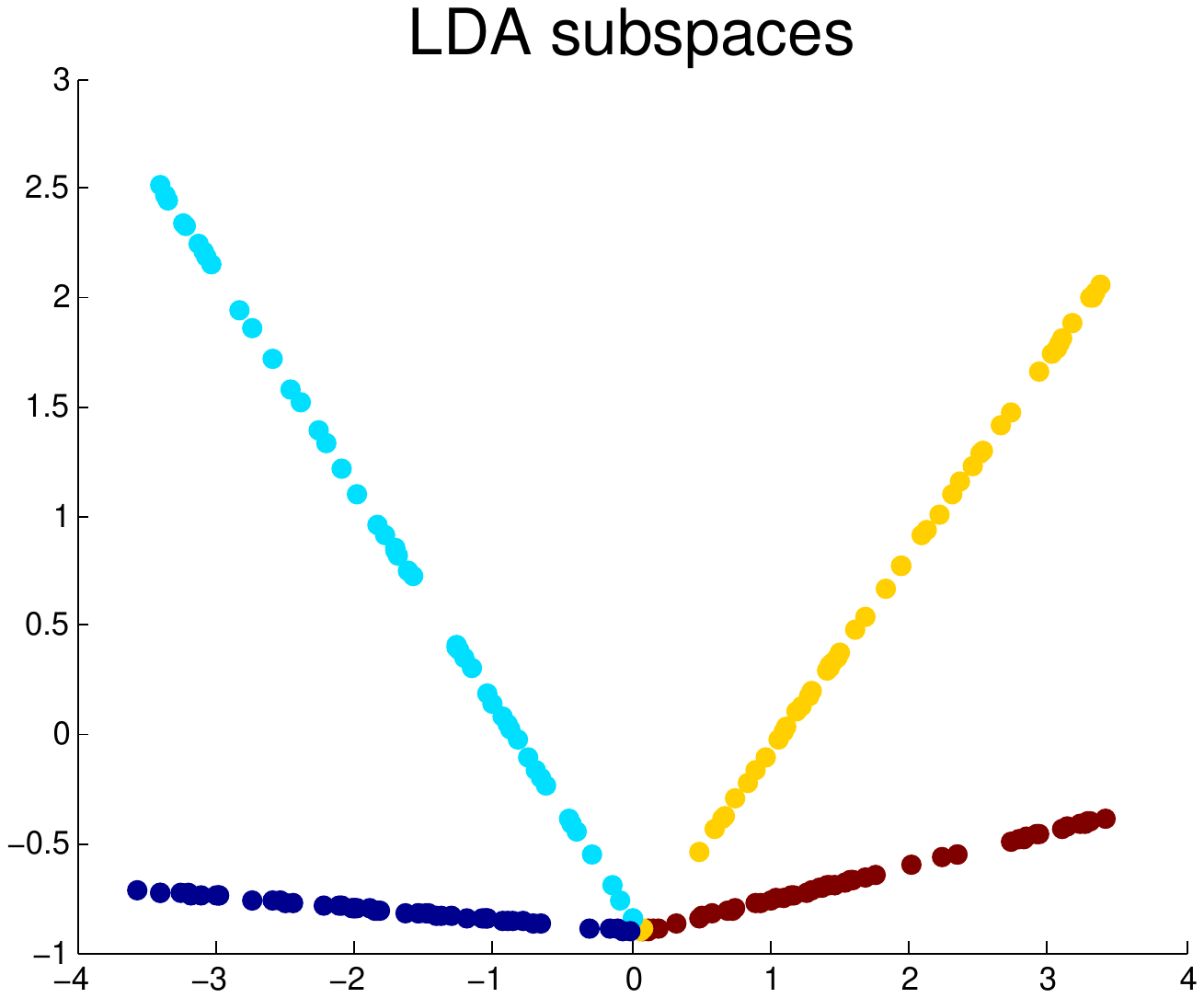}}
   \subfloat[][$\mathbf{T} = \begin{bmatrix} 1.47  & 0.26 &  -0.73 \\ 0.07  &  0.06 &  -1.62  \end{bmatrix}$;\\  $\begin{bmatrix} \theta_{AB}=0.04,\theta_{AC}=1.54, \theta_{AD}= 1.54, \\ \theta_{BC}=1.55, \theta_{BD}=1.56, \theta_{CD}=0.01 \end{bmatrix}$,\\
   $~~\begin{matrix} |\mathbf{Y}_+|_*=1.02, & |\mathbf{Y}_-|_*=1.00 \end{matrix}$.] {\label{fig:4line_T} \includegraphics[angle=0, height=0.3\textwidth, width=.33\textwidth]{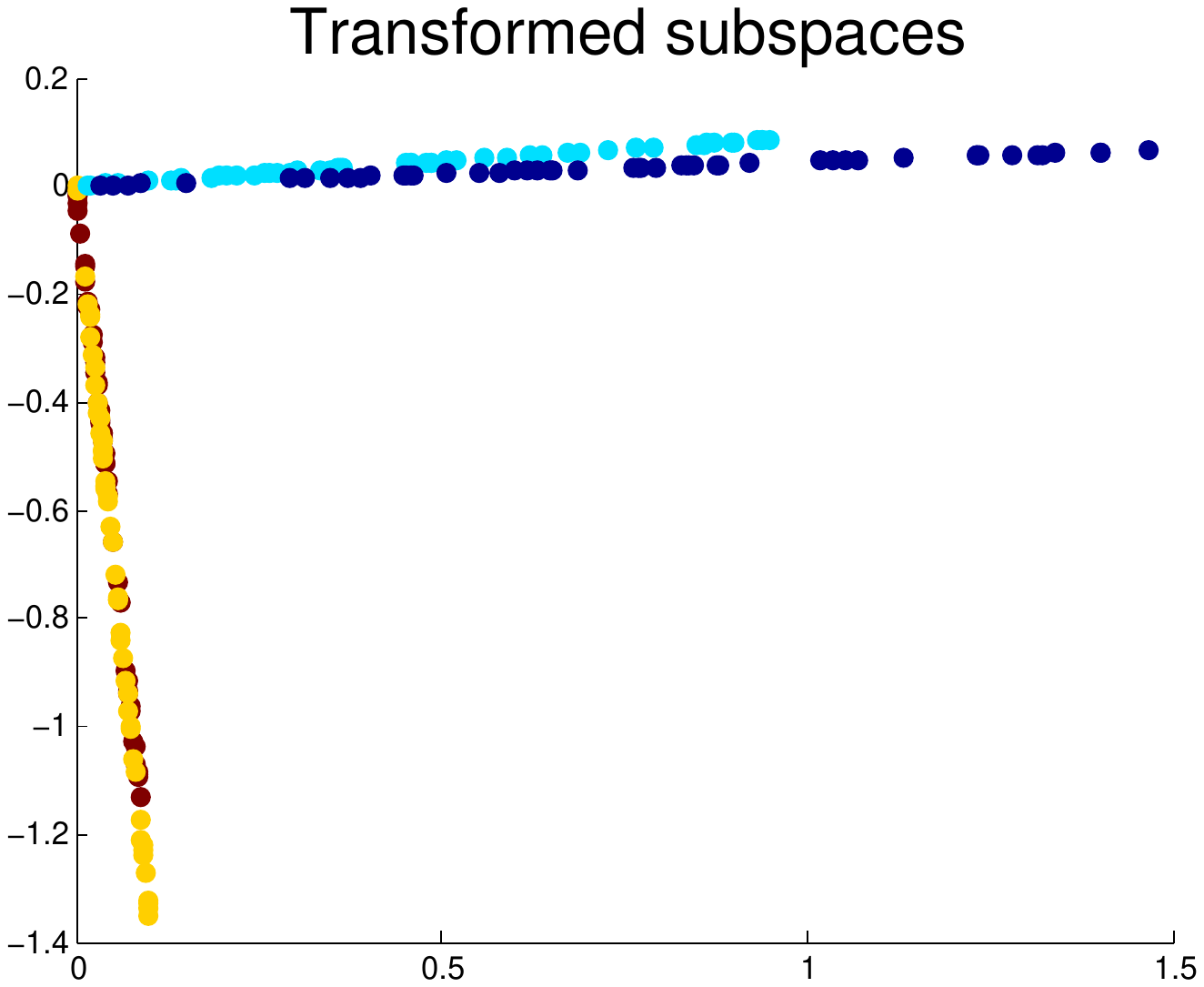}} \\
   \subfloat[][Two classes \{$\mathbf{A}$(blue),$\mathbf{B}$(red)\},\\
   $\theta_{AB}=0.31$,\\  $|\mathbf{A}|_*=1.91,  |\mathbf{B}|_*=1.88$.] {\label{fig:2plane_200O2D} \includegraphics[angle=0, height=0.3\textwidth, width=.33\textwidth]{rev1/2plane_200O.pdf}}
   \subfloat[][$\mathbf{T} = \begin{bmatrix} -0.54 & 2.60 &  -9.51 \\  0.56  &  -3.21 &  -1.02  \end{bmatrix}$;  $~~~~~\theta_{AB}= 0$, \\ $~~~~~\begin{matrix} |\mathbf{A}|_*=1.52, &|\mathbf{B}|_*=1.69 \end{matrix}$.] {\label{fig:2plane_200LDA} \includegraphics[angle=0, height=0.3\textwidth, width=.33\textwidth]{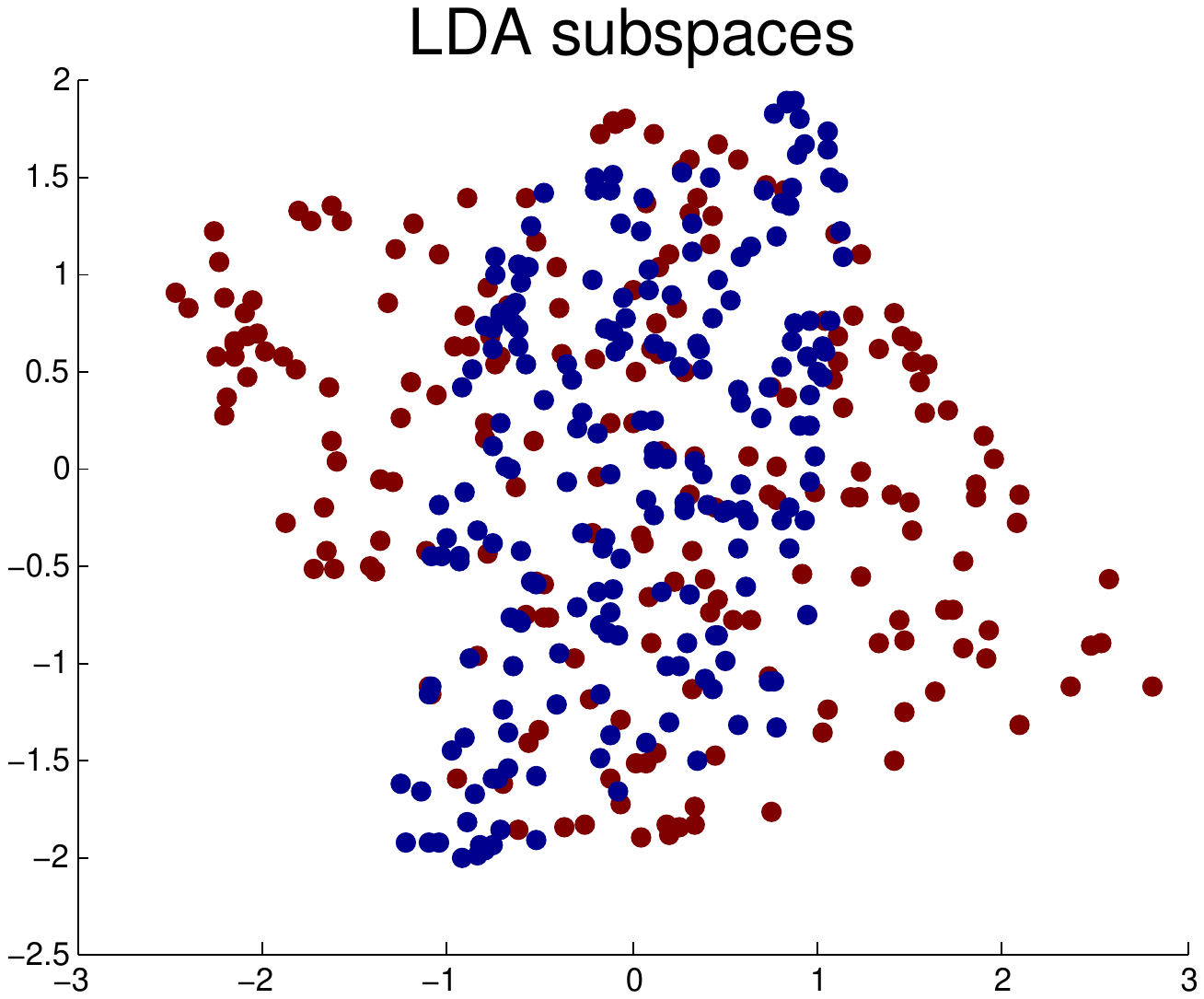}}
   \subfloat[][$\mathbf{T} = \begin{bmatrix} 0.49  & -0.11 &  1.27 \\  -0.09  & 0.29  &  -0.59 \end{bmatrix}$;  $~~~~~\theta_{AB}= 1.57$, \\ $~~~~~\begin{matrix} |\mathbf{A}|_*=1.08, & |\mathbf{B}|_*=1.03 \end{matrix}$.] {\label{fig:2plane_200T2D} \includegraphics[angle=0, height=0.3\textwidth, width=.33\textwidth]{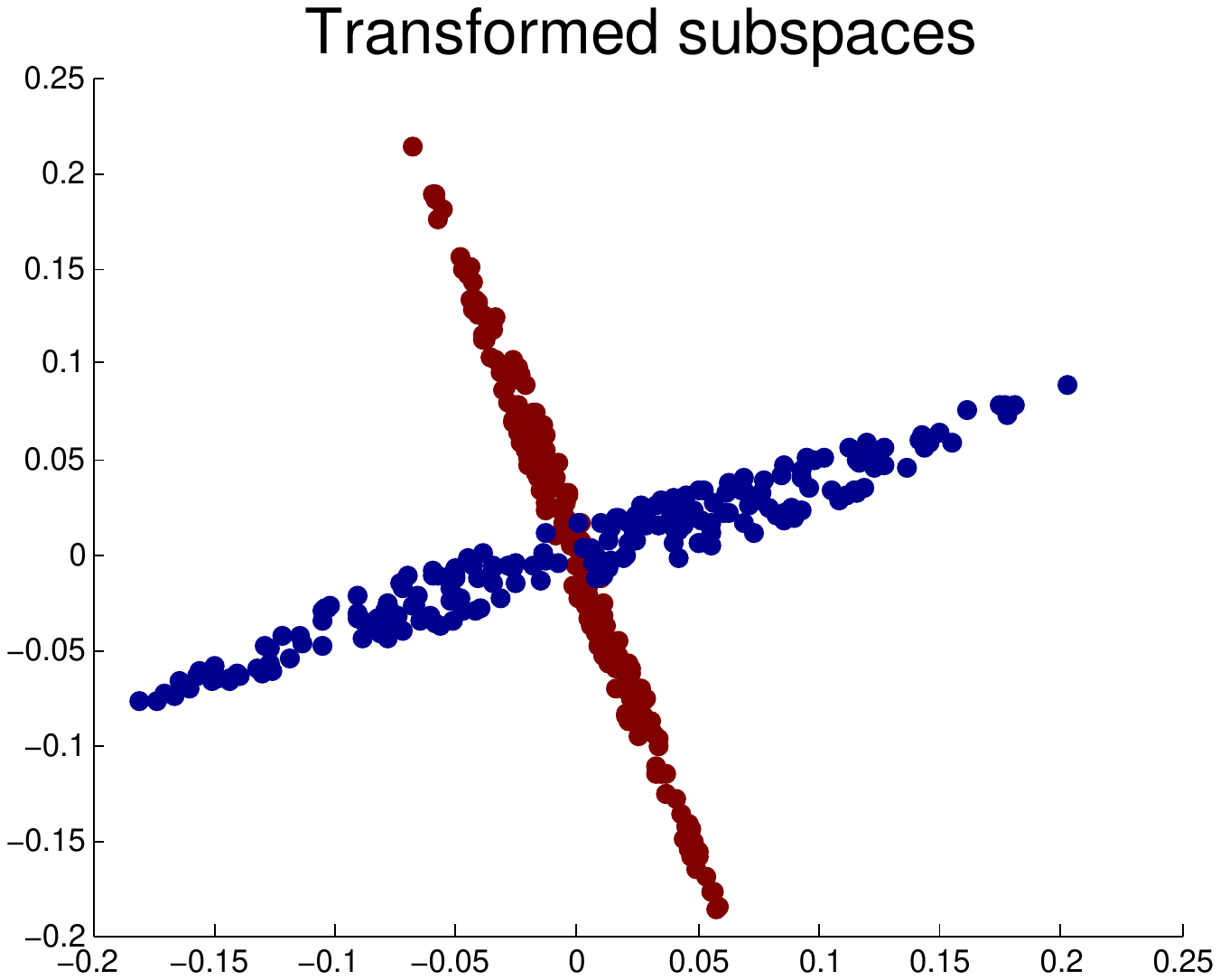}}
\caption{
{ Comparisons with the linear discriminant analysis (LDA).
Two classes $\mathbf{Y}_+$ and $\mathbf{Y}_-$ are shown in (a), each class consisting of two lines.
We notice that our learned transformation (c) shows smaller intra-class variation than LDA in (b) by merging two lines in each class, and simultaneously maximizes the angle between two classes towards $\frac{\pi}{2}$  (such two-class clustering and classification is critical for example for trees-based techniques \cite{forest_qiu}).
(d) shows an example of two non-linearly separable classes, i.e., two intersecting planes, which cannot be improved by LDA in (e). However, our learned transformation in (f) prepares data to be separable using subspace clustering.}
}
\label{fig:4line}
\end{figure*}

{
For independent subspaces, a transformation that renders them pairwise orthogonal can be obtained in a closed-form as follows: we take a basis $\mathbf{U}_c$ for the column space of $\mathbf{Y}_c$ for each subspace, form a matrix $\mathbf{U} = [ \mathbf{U}_1, ..., \mathbf{U}_C ]$, and then obtain the orthogonalizing transformation as $\mathbf{T} = (\mathbf{U}' \mathbf{U})^{-1} \mathbf{U}'$. To further elaborate the properties of our learned transformation, using synthetic examples, we compare with the closed-form orthogonalizing transformation in Fig.~\ref{fig:2plane} and with linear discriminant analysis (LDA) in Fig.~\ref{fig:4line}.

Two intersecting planes are shown in Fig.~\ref{fig:2plane_200O}.
Though subspaces here are neither independent nor disjoint,
the closed-form orthogonalizing transformation still significantly increases the angle between the two planes towards $\frac{\pi}{2}$ in Fig.~\ref{fig:2plane_200OT}  (note that the angle for the common line here is always 0).
Note also that the closed-form orthogonalizing transformation is of size $r \times d$, where $r$ is the sum of the dimension of each subspace, and we plot just the first 3 dimensions for visualization.
Comparing to the orthogonalizing transformation,
our leaned transformation in Fig.~\ref{fig:2plane_200T} introduces similar subspace separation, but enables significantly reduced within subspace variations, indicated by the decreased nuclear norm values (close to 1).
The same set of experiments with different samples per subspace are shown in the second row of Fig.~\ref{fig:2plane}.
Our formulation in (\ref{nuclear_obj}) not only maximizes the separations between the different classes subspaces, but also simultaneously reduces the variations within the same class subspaces.

Our learned transformation shares a similar methodology with LDA, i.e., minimizing intra-class variation and maximizing inter-class separation.
Two classes $\mathbf{Y}_+$ and $\mathbf{Y}_-$ are shown in Fig.~\ref{fig:4line_O},  each class consisting of two lines.
Our learned transformation in Fig.~\ref{fig:4line_T} shows smaller intra-class variation than LDA in Fig.~\ref{fig:4line_LDA} by merging two lines in each class, and simultaneously maximizes the angle between two classes towards $\frac{\pi}{2}$ (such two-class clustering and classification is critical for example for trees-based techniques \cite{forest_qiu}).
Note that we usually use LDA to reduce the data dimension to the number of classes minus 1; however, to better emphasize the distinction, we learn a $(d-1) \times d$ sized transformation matrix using both methods.
The closed-form orthogonalizing transformation discussed above also gives higher intra-class variations as $|\mathbf{Y}_+|_*=1.45$ and $|\mathbf{Y}_+|_*=1.68$.
Fig.~\ref{fig:2plane_200O2D} shows an example of two non-linearly separable classes, i.e., two intersecting planes, which cannot be improved by LDA, as shown in Fig.~\ref{fig:2plane_200LDA}. However, our learned transformation in Fig.~\ref{fig:2plane_200T2D}
prepares the data to be separable using subspace clustering.
As shown in \cite{forest_qiu}, the property demonstrated above makes our learned transformation a better learner than LDA  in a binary classification tree.

Lastly, we generated an interesting disjoint case: we consider three lines $A$, $B$ and $C$ on the same plane that intersect at the origin; the angles between them are $\theta_{AB}=0.08$, $\theta_{BC}=0.08$, and $\theta_{AC}=0.17$. As the closed-form orthogonalizing approach is valid for independent subspaces, it fails by producing $\theta_{AB}=0.005$, $\theta_{BC}=0.005$, $\theta_{BC}=0.01$.
Our framework is not limited to that, even if additional theoretical foundations are yet to come.
After our learned transformation, we have $\theta_{AB}=1.20$, $\theta_{BC}=1.20$, and $\theta_{AC}=0.75$. We can make two immediate observations: First, all angles are significantly increased within the valid range of $[0, \frac{\pi}{2}]$. Second, $\theta_{AB}+\theta_{BC}+\theta_{AC}=\pi$ (we made the same two observations while repeating the experiments with different subspace angles). Though at this point we have no clean interpretation about how those angles are balanced when pair-wise orthogonality is not possible, we strongly believe that some theories are behind the above persistent observations and we are currently exploring this.
}

\subsection{Discussions about Other Matrix Norms}

We now discuss the advantages of replacing the rank function in (\ref{rank_obj}) with the nuclear norm over other (popular) matrix norms, e.g., the induced 2-norm and the Frobenius norm.

\begin{proposition} \label{matrix_ineq}
Let $\mathbf{A}$ and $\mathbf{B}$ be matrices of the same row dimensions, and $\mathbf{[A,B]}$ be the concatenation of $\mathbf{A}$ and $\mathbf{B}$,  we have
\begin{align} \nonumber
||\mathbf{[A,B]}||_2  \le ||\mathbf{A}||_2 + ||\mathbf{B}||_2,
\end{align}
with equality if at least one of the two matrices is zero.
\end{proposition}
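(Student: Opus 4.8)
The plan is to reduce the claim to the variational definition of the induced $2$-norm, $\|\mathbf{M}\|_2 = \max_{\|\mathbf{x}\|_2=1}\|\mathbf{M}\mathbf{x}\|_2$. First I would write an arbitrary unit vector in the domain of $\mathbf{[A,B]}$ in block form $\mathbf{x} = [\mathbf{x}_A^{\,\prime},\mathbf{x}_B^{\,\prime}]^{\prime}$, conformably with the column partition, so that $\mathbf{[A,B]}\mathbf{x} = \mathbf{A}\mathbf{x}_A + \mathbf{B}\mathbf{x}_B$ and $\|\mathbf{x}_A\|_2^2 + \|\mathbf{x}_B\|_2^2 = \|\mathbf{x}\|_2^2 = 1$; in particular $\|\mathbf{x}_A\|_2\le 1$ and $\|\mathbf{x}_B\|_2\le 1$.

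The core estimate is then a short chain combining the triangle inequality in the ambient Euclidean space with the definition of the induced norm:
\begin{align} \nonumber
\|\mathbf{[A,B]}\mathbf{x}\|_2 \le \|\mathbf{A}\mathbf{x}_A\|_2 + \|\mathbf{B}\mathbf{x}_B\|_2 \le \|\mathbf{A}\|_2\|\mathbf{x}_A\|_2 + \|\mathbf{B}\|_2\|\mathbf{x}_B\|_2 \le \|\mathbf{A}\|_2 + \|\mathbf{B}\|_2 .
\end{align}
Taking the maximum over all unit $\mathbf{x}$ on the left-hand side gives $\|\mathbf{[A,B]}\|_2 \le \|\mathbf{A}\|_2 + \|\mathbf{B}\|_2$. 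For the equality assertion, note that if (say) $\mathbf{B}=\mathbf{0}$ then $\mathbf{[A,B]}$ is just $\mathbf{A}$ with extra zero columns appended, which changes none of its singular values and in particular not $\sigma_{\max}$; hence $\|\mathbf{[A,B]}\|_2 = \|\mathbf{A}\|_2 = \|\mathbf{A}\|_2 + \|\mathbf{B}\|_2$, and the case $\mathbf{A}=\mathbf{0}$ is symmetric.

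An alternative route I might mention uses $\|\mathbf{M}\|_2^2 = \|\mathbf{M}\mathbf{M}^{\prime}\|_2$ together with $\mathbf{[A,B]}\mathbf{[A,B]}^{\prime} = \mathbf{A}\mathbf{A}^{\prime} + \mathbf{B}\mathbf{B}^{\prime}$ and subadditivity of $\|\cdot\|_2$; this yields the sharper bound $\|\mathbf{[A,B]}\|_2 \le \sqrt{\|\mathbf{A}\|_2^2 + \|\mathbf{B}\|_2^2}$, from which the stated inequality follows since $\sqrt{a^2+b^2}\le a+b$ for $a,b\ge 0$. Either way, there is no real obstacle: the only points needing a little care are the block bookkeeping and the observation that the block norms are bounded by $1$ (not equal to $1$); and since the equality condition is claimed only as sufficient, nothing about its necessity needs to be argued.
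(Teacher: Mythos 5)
Your proof is correct. The paper actually states Proposition~\ref{matrix_ineq} without providing a proof (only Theorems~\ref{nuclear_ineq} and \ref{nuclear_eq} are proved in the appendices), so there is no paper argument to compare against; your block-decomposition argument via the variational characterization $\|\mathbf{M}\|_2 = \max_{\|\mathbf{x}\|_2=1}\|\mathbf{M}\mathbf{x}\|_2$ is sound, the observation that $\|\mathbf{x}_A\|_2,\|\mathbf{x}_B\|_2\le 1$ (rather than $=1$) is exactly the point that needs care, and the treatment of the equality case as a sufficient condition matches the (one-directional) claim in the statement. Your alternative route via $\|[\mathbf{A},\mathbf{B}]\|_2^2=\|\mathbf{A}\mathbf{A}'+\mathbf{B}\mathbf{B}'\|_2\le\|\mathbf{A}\|_2^2+\|\mathbf{B}\|_2^2$ is also valid and gives a genuinely sharper bound, which is a nice observation worth keeping.
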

%\begin{proof} See Appendix~\ref{sec:matrix_ineq}.
%\end{proof}

\begin{proposition} \label{fnorm_ineq}
Let $\mathbf{A}$ and $\mathbf{B}$ be matrices of the same row dimensions, and $\mathbf{[A,B]}$ be the concatenation of $\mathbf{A}$ and $\mathbf{B}$,  we have
\begin{align} \nonumber
||\mathbf{[A,B]}||_F  \le ||\mathbf{A}||_F + ||\mathbf{B}||_F,
\end{align}
with equality if and only if at least one of the two matrices is zero.
\end{proposition}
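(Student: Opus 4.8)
The plan is to reduce the statement to the elementary scalar inequality $\sqrt{a^2+b^2}\le a+b$ for nonnegative reals $a,b$. The key observation is that the Frobenius norm is entrywise: $\|\mathbf{M}\|_F^2$ is the sum of the squares of all entries of $\mathbf{M}$. Since $\mathbf{[A,B]}$ is formed by placing the columns of $\mathbf{A}$ and the columns of $\mathbf{B}$ side by side, no entries are created or destroyed, so the multiset of entries of $\mathbf{[A,B]}$ is exactly the union of the entries of $\mathbf{A}$ and those of $\mathbf{B}$. The first step, then, is to record the identity $\|\mathbf{[A,B]}\|_F^2 = \|\mathbf{A}\|_F^2 + \|\mathbf{B}\|_F^2$. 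Equivalently, one can argue via the trace: $\|\mathbf{[A,B]}\|_F^2 = \mathrm{tr}(\mathbf{[A,B]}'\mathbf{[A,B]}) = \mathrm{tr}(\mathbf{A}'\mathbf{A}) + \mathrm{tr}(\mathbf{B}'\mathbf{B}) = \|\mathbf{A}\|_F^2 + \|\mathbf{B}\|_F^2$, using that the Gram matrix $\mathbf{[A,B]}'\mathbf{[A,B]}$ has diagonal blocks $\mathbf{A}'\mathbf{A}$ and $\mathbf{B}'\mathbf{B}$ and the trace only sees the diagonal.

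Next I would set $a=\|\mathbf{A}\|_F\ge 0$ and $b=\|\mathbf{B}\|_F\ge 0$, so the identity above reads $\|\mathbf{[A,B]}\|_F = \sqrt{a^2+b^2}$. Since $(a+b)^2 = a^2 + 2ab + b^2 \ge a^2+b^2$ and both sides are nonnegative, taking square roots yields $\sqrt{a^2+b^2}\le a+b$, which is the claimed inequality. For the equality characterization, equality holds iff $(a+b)^2 = a^2+b^2$, i.e. iff $2ab=0$, i.e. iff $a=0$ or $b=0$; and because the Frobenius norm is a genuine (positive definite) norm, $\|\mathbf{A}\|_F = 0$ iff $\mathbf{A}=0$, so this is precisely the condition that at least one of $\mathbf{A},\mathbf{B}$ is zero.

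There is essentially no obstacle here; the one point that warrants care is the ``only if'' direction of the equality case, which relies on the Frobenius norm being positive definite, so that $a=0$ really forces $\mathbf{A}=0$. This is exactly where the Frobenius norm behaves better than the induced $2$-norm: a nonzero matrix can have a vanishing singular value, which is why Proposition~\ref{matrix_ineq} only obtains an ``if'' in its equality clause. Highlighting this contrast is worthwhile, since it is precisely the distinction the surrounding discussion is drawing between the candidate matrix norms.
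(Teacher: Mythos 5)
Your proof is correct. The paper actually states Proposition~\ref{fnorm_ineq} without proof, so there is nothing to compare against directly; but your argument rests on exactly the identity $||\mathbf{[A,B]}||_F^2 = ||\mathbf{A}||_F^2 + ||\mathbf{B}||_F^2$ that the paper itself invokes (without proof) inside its proof of Theorem~\ref{nuclear_ineq}, and the reduction to $\sqrt{a^2+b^2}\le a+b$ with equality iff $ab=0$ is the standard and intended route. Your closing remark correctly identifies why the equality clause here is ``if and only if'' while Proposition~\ref{matrix_ineq} only claims ``if'': positive definiteness of the Frobenius norm forces $\mathbf{A}=0$ when $||\mathbf{A}||_F=0$, whereas the analogous step fails for the induced $2$-norm.
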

%\begin{proof} See Appendix~\ref{sec:fnorm_ineq}.
%\end{proof}

 We choose the nuclear norm in (\ref{nuclear_obj}) for two major advantages that are not so favorable in other (popular) matrix norms:
\begin{itemize*}
  \item The nuclear norm is the best convex approximation of the rank function \cite{rank-min}, which helps to reduce the variation within the subspaces (first term in (\ref{nuclear_obj}));
  \item The objective function (\ref{nuclear_obj}) is optimized when the distance between every pair of subspaces is maximized after transformation, which helps to introduce separations between the subspaces.
  \end{itemize*}

Note that (\ref{rank_obj}), which is based on the rank, reaches the minimum when subspaces are independent but not necessarily maximally distant. Propositions \ref{matrix_ineq} and \ref{fnorm_ineq} show that the property of the nuclear norm in Theorem \ref{nuclear_ineq} holds for the induced 2-norm and the Frobenius norm. However, if we replace the rank function in (\ref{rank_obj}) with the  induced 2-norm norm or the Frobenius norm,
the objective function is minimized at the trivial solution $\mathbf{T}=0$, which is prevented by the normalization condition $||\mathbf{T}||_2=\gamma~ (\gamma>0)$.

\subsection{Online Learning Low-rank Transformations}

When data $\mathbf{Y}$ is big, we use an online algorithm to learn the low-rank transformation $\mathbf{T}$:

\begin{itemize*}
\item We first randomly partition the data set $\mathbf{Y}$ into $B$ mini-batches;
\item Using mini-batch subgradient descent, a variant of stochastic subgradient descent, the subgradient in (\ref{gradstep}) in Appendix~\ref{gradesc} is approximated by a sum of subgradients obtained from each mini-batch of samples,
%$\mathbf{T}^{(t+1)} = \mathbf{T}^{(t)} - \nu \sum_{b=1}^B \Delta \mathbf{T}_b$,
\begin{align} \label{olgradstep}
\mathbf{T}^{(t+1)} = \mathbf{T}^{(t)} - \nu \sum_{b=1}^B \Delta \mathbf{T}_b,
\end{align}
where $\Delta \mathbf{T}_b$ is obtained from (\ref{stran_sub}) Appendix~\ref{gradesc} using only data points in the $b$-th mini-batch;
\item Starting with the first mini-batch, we learn the subspace transformation $\mathbf{T}_b$ using data only in the $b$-th mini-batch, with $\mathbf{T}_{b-1}$ as warm restart.
\end{itemize*}

\subsection{Subspace Transformation with Compression}

Given data $\mathbf{Y} \subseteq \mathbb{R}^d$, so far, we considered a square linear transformation $\mathbf{T}$ of size $d \times d$. If we devise a ``fat" linear transformation $\mathbf{T}$ of size $r \times d$, where $(r<d)$, we enable dimension reduction along with transformation.
This connects the proposed framework with the literature on compressed sensing, though the goal here is to learn a ``sensing" matrix $\mathbf{T}$ for subspace classification and not for reconstruction \cite{CS1}. The nuclear-norm minimization provides a new metric for such compressed sensing design (or compressed feature learning) paradigm.  Results with this reduced dimensionality will be presented in Section~\ref{sec:expr}.

\section{Subspace Clustering using Low-rank Transformations}
\label{sec:sc}

We now move from classification, where we learned the transform from training labeled data, to clustering, where no training data is available. In particular, we address the \emph{subspace clustering} problem, meaning to partition the data set $\mathbf{Y}$ into $C$ clusters corresponding to their underlying subspaces.
We first present a general procedure to enhance the performance of existing subspace clustering methods in the literature. Then we further propose a specific fast subspace clustering technique to fully exploit the low-rank structure of (learned) transformed subspaces.

\subsection{A Learned Robust Subspace Clustering (LRSC) Framework}

In clustering tasks, the data labeling  is of course not known beforehand in practice.  The proposed algorithm, Algorithm~\ref{algorsc}, iterates between two stages:
In the first assignment stage, we obtain clusters using any subspace clustering methods, e.g., SSC (\cite{SSC}), LSA (\cite{LSA}), LBF (\cite{SLBF}).
In particular, in this paper we often use the new improved technique introduced in Section~\ref{sec:r-ssc}.
In the second update stage, based on the current clustering result, we compute the optimal subspace transformation that minimizes (\ref{nuclear_obj}). The algorithm is repeated until the clustering assignments stop changing.

{
The LRSC algorithm is a general procedure to enhance the performance of any subspace clustering methods (part of the beauty of the proposed model is that it can be applied to any such algorithm, and even beyond \cite{forest_qiu}).
We don't enforce an overall objective function at the present form for such versatility purpose.

To study convergence, one way is to adopt the subspace clustering method for the LRSC assignment step by optimizing the same LRSC update criterion (\ref{nuclear_obj}): given the cluster assignment and the transformation $\mathbf{T}$ at the current LRSC iteration, we take a point $\mathbf{y}_i$ out of its current cluster (keep the rest assignments no change) and place it into a cluster $\mathbf{Y}_c$ that minimize $\sum_{c=1}^C ||\mathbf{T Y}_c||_*$. We iteratively perform this for all points, { and then update $\mathbf{T}$ using current  $\mathbf{T}$ as warm restart.} In this way, we decrease (or keep) the overall objective function (\ref{nuclear_obj}) after each LRSC iteration.

However, the above approach is computational expensive and only allow one specific subspace clustering method. Thus, in the present implementation,  an overall objective function of the type that the LRSC algorithm optimizes can take a form such as,
 \begin{align} \label{lrsc_obj}
\underset{\mathbf{T}, \{\mathcal{S}_c\}_{c=1}^C} \arg \min
\sum_{c=1}^C \sum_{\mathbf{y}_i \in \mathcal{S}_c} || \mathbf{T} \mathbf{y}_i - P_{\mathbf{T} \mathbf{Y}_c} \mathbf{T} \mathbf{y}_i||_2^2
+\lambda [\sum_{c=1}^C ||\mathbf{T Y}_c||_* - ||\mathbf{T Y}||_*], ~~s.t. ||\mathbf{T}||_2 = \gamma,
\end{align}
where $\mathbf{Y}_c$ denotes the set of points $\mathbf{y}_i$ in the c-th subspace $\mathcal{S}_c$, and
$P_{\mathbf{T} \mathbf{Y}_c}$ denotes the projection onto $\mathbf{T} \mathbf{Y}_c$.
The LRSC iterative algorithm optimize (\ref{lrsc_obj}) through alternative minimization (with a similar form as the popular k-means, but with a different data model and with the learned transform).
While formally studying its convergence is the subject of future research, the experimental validation presented already demonstrates excellent performance, with LRSC just one of the possible applications of the proposed learned transform.

In all our experiments, we observe significant clustering error reduction in the first few LRSC iterations, and the proposed LRSC iterations enable significantly cleaner subspaces for all subspace clustering benchmark data in the literature.
The intuition behinds the observed empirical convergence is that the update step in each LRSC iteration
decreases the second term in (\ref{lrsc_obj}) to a small value close to 0 as discussed in Section~\ref{sec:form}; at the same time, the updated transformation tends to reduce the intra-subspace variation, which further reduces the first cluster deviation term in (\ref{lrsc_obj}) even with assignments derived from various subspace clustering methods.

}

\begin{algorithm}[ht]
%\SetAlFnt{\footnotesize \sf}
\footnotesize
\KwIn{A set of data points $\mathbf{Y} = \{ \mathbf{y}_i\}_{i=1}^N \subseteq \mathbb{R}^d$ in a union of $C$ subspaces.}
\KwOut{A partition of $\mathbf{Y}$ into $C$ disjoint clusters $\{ \mathbf{Y}_c\}_{c=1}^C$ based on underlying subspaces.}
\Begin{
\BlankLine
1. Initial a transformation matrix $\mathbf{T}$ as the identity matrix \;
\BlankLine
\Repeat {assignment convergence}{
\textbf{{Assignment stage:}}\\
2. Assign points in $\mathbf{TY}$  to clusters with any subspace clustering methods, e.g., the proposed R-SSC\;
\BlankLine
\textbf{{Update stage:}}\\
3. Obtain transformation $\mathbf{T}$ by minimizing (\ref{nuclear_obj}) based on the current clustering result \;
}
\BlankLine
4. Return the current clustering result $\{ \mathbf{Y}_c\}_{c=1}^C$ \;
}
\caption{Learning a robust subspace clustering (LRSC) framework.}
\label{algorsc}
\end{algorithm}

\subsection{Robust Sparse Subspace Clustering (R-SSC)}
\label{sec:r-ssc}

Though Algorithm~\ref{algorsc} can adopt any subspace clustering methods, to fully exploit the low-rank structure of the learned transformed subspaces, we further propose the following specific technique for the clustering step in the LRSC framework, called Robust Sparse Subspace Clustering (R-SSC):

\begin{enumerate*}
\item For the transformed subspaces, we first recover their low-rank representation $\mathbf{L}$ by performing a low-rank decomposition (\ref{rpca}), e.g., using RPCA (\cite{rpca}),\footnote{Note that while the learned transform $\mathbf{T}$ encourages low-rank in each sub-space, outliers might still exists. Moreover, during the iterations in Algorithm~\ref{algorsc}, the intermediate learned $\mathbf{T}$ is not yet the desired one. This justifies the incorporation of this further low-rank decomposition.}
    \begin{align} \label{rpca}
\underset{\mathbf{L}, \mathbf{S}} \arg \min ||\mathbf{L}||_* + \beta ||\mathbf{S}||_1 ~~s.t.~ \mathbf{TY} =\mathbf{L}+\mathbf{S}.
\end{align}
\item Each transformed point $\mathbf{Ty}_i$ is then sparsely decomposed over $\mathbf{L}$,
\begin{align} \label{SSC}
\underset{\mathbf{x}_i} \arg \min \|\mathbf{Ty}_i-\mathbf{L}\mathbf{x}_i\|_{2}^{2} ~~s.t.~ \|\mathbf{x}_i\|_{0}\leq K ,
\end{align}
where $K$ is a predefined sparsity value ($K > d$).
As explained in \cite{SSC}, a data point in a linear or affine subspace of dimension $d$ can be written as a linear or affine combination of $d$ or $d+1$ points in the same subspace. Thus, if we represent a point as a linear or affine combination of all other points, a sparse linear or affine combination can be obtained by choosing $d$ or $d+1$ nonzero coefficients.
\item As the optimization process for (\ref{SSC}) is computationally demanding, we further simplify (\ref{SSC}) using Local Linear Embedding (\cite{LLE}, \cite{llc}). Each transformed point $\mathbf{Ty}_i$ is represented using its $K$ Nearest Neighbors (NN) in $\mathbf{L}$, which are denoted as $\mathbf{L}_i$,
 \begin{align} \label{LLE}
 \underset{\mathbf{x}_i} \arg \min \|\mathbf{Ty}_i-\mathbf{L}_i\mathbf{x}_i\|_{2}^{2} ~~s.t.~ \|\mathbf{x}_i\|_{1} = 1 .
\end{align}
Let $\mathbf{\bar{L}}_i = \mathbf{L}_i- \mathbf{1}\mathbf{Ty}_i^T$. $\mathbf{x}_i$ can then  be efficiently obtained in closed form,
\[
\mathbf{x}_i = \mathbf{\bar{L}}_i \mathbf{\bar{L}}_i^T \setminus \mathbf{1} ,
\]
where $\mathbf{x} = \mathbf{A} \setminus \mathbf{B}$ solves the system of linear equations $\mathbf{A}\mathbf{x} = \mathbf{B}$.
As suggested in \cite{LLE}, if the correlation matrix $\mathbf{\bar{L}}_i \mathbf{\bar{L}}_i^T$ is nearly singular, it can be conditioned by adding a small multiple of the identity matrix. From experiments, we observe this simplification step dramatically reduces the running time, without sacrificing the accuracy.
\item Given the sparse representation $\mathbf{x}_i$ of each transformed data point $\mathbf{Ty}_i$, we denote the sparse representation matrix as $\mathbf{X} = [\mathbf{x}_1 \ldots \mathbf{x}_N]$. It is noted that $\mathbf{x}_i$ is written as an $N$-sized vector with no more than $K<<N$ non-zero values ($N$ being the total number of data points).
    The pairwise affinity matrix is now defined as
    $
    \mathbf{W} = |\mathbf{X}|+|\mathbf{X}^T|,
    $
and the subspace clustering is obtained using spectral clustering (\cite{spectral}).
\end{enumerate*}
Based on experimental results presented in Section~\ref{sec:expr}, the proposed R-SSC outperforms  state-of-the-art subspace clustering techniques, in both accuracy and running time, e.g., about 500 times faster than the original SSC using the implementation provided in \cite{SSC}.  Performance is further enhanced when R-SCC is used as an internal step of LRSC in Algorithm~\ref{algorsc}.

\section{Classification using Single or Multiple Low-rank Transformations}
\label{sec:rec}

In Section~\ref{sec:form}, learning one global transformation over all classes has been discussed, and then incorporated into a clustering framework in Section~\ref{sec:sc}.
The availability of data labels for training  enables us to consider instead learning individual class-based linear transformation.
The problem of class-based linear transformation learning can be formulated as (\ref{nuclear_obj2}).
\begin{align} \label{nuclear_obj2}
\underset{\{\mathbf{T}_c\}_{c=1}^C} \arg \min \sum_{c=1}^C [ ||\mathbf{T}_c \mathbf{Y}_c||_* - \lambda||\mathbf{T}_c \mathbf{Y}_{\neg c}||_* ],
\end{align}
where $\mathbf{T}_c \in \mathbb{R}^{d \times d}$ denotes the transformation for the c-{th} class, $\mathbf{Y}_{\neg c} = \mathbf{Y}  \setminus \mathbf{Y}_c$ denotes all data except the c-{th} class,
and $\lambda$ is a positive balance parameter.

When a global transformation matrix $\textbf{T}$ is learned, we can perform classification in the transformed space by simply considering the transformed data $\mathbf{TY}$ as the new features. For example, when a Nearest Neighbor (NN) classifier is used, a testing sample $\mathbf{y}$ uses $\mathbf{Ty}$ as the feature and searches for nearest neighbors among $\mathbf{TY}$.

To fully exploit the low-rank structure of the transformed data, we propose to perform classification through the following procedure:

\begin{itemize*}
\item For the c-th class, we first recover its low-rank representation $\mathbf{L}_c$ by performing low-rank decomposition (\ref{rpca2}), e.g., using RPCA (\cite{rpca}):\footnote{Note that this is done only once and can be considered part of the training stage. As before, this further low-rank decomposition helps to handle outliers not addressed by the learned transform.}
    \begin{align} \label{rpca2}
\underset{\mathbf{L}_c, \mathbf{S}_c} \arg \min ||\mathbf{L}_c||_* + \beta ||\mathbf{S}_c||_1 ~~s.t.~ \mathbf{TY}_c =\mathbf{L}_c+\mathbf{S}_c.
\end{align}
\item Each testing image $\mathbf{y}$ will then be assigned to the low-rank subspace $\mathbf{L}_c$ that gives the minimal reconstruction error through sparse decomposition (\ref{omp}), e.g., using OMP (\cite{omp}),
\begin{align} \label{omp}
\underset{\mathbf{x}} \arg \min \|\mathbf{Ty}-\mathbf{L}_i\mathbf{x}\|_{2}^{2} ~~s.t.~ \|\mathbf{x}\|_{0}\leq T ,
\end{align}
where $T$ is a predefined sparsity value.
\end{itemize*}
When class-based transformations $\{\mathbf{T}_c\}_{c=1}^C$ are learned, we perform recognition in a similar way. However, now we apply all the learned transforms $\mathbf{T}_c$ to each testing data point and then pick the best one using the same criterion of minimal reconstruction error through sparse decomposition (\ref{omp}).

\section{Experimental Evaluation}
\label{sec:expr}

This section first presents experimental evaluations on subspace clustering using three public datasets (standard benchmarks): the MNIST handwritten digit dataset, the Extended YaleB face dataset (\cite{yaleb}) and the Hopkins 155 database of motion segmentation.
 The MNIST dataset consists of 8-bit grayscale handwritten digit images of ``0"  through
``9" and 7000 examples for each class.
The Extended YaleB face dataset contains 38 subjects with near frontal pose under 64 lighting conditions.
All the images are resized to $16 \times 16$.
{
The classical  Hopkins 155 database of motion segmentation, which is available at \url{http://www.vision.jhu.edu/data/hopkins155}, contains 155 video sequences along with extracted feature trajectories, where 120 of the videos have two motions and 35 of the videos have three motions.
}

Subspace clustering methods compared are SSC (\cite{SSC}), LSA (\cite{LSA}), and LBF (\cite{SLBF}).
Based on the studies in \cite{SSC}, \cite{SubspaceClustering} and \cite{SLBF}, these three methods exhibit state-of-the-art  subspace clustering performance.
We adopt the LSA and SSC implementations provided in \cite{SSC} from \url{http://www.vision.jhu.edu/code/}, and the LBF implementation provided in \cite{SLBF} from
\url{http://www.ima.umn.edu/~zhang620/lbf/}.
We adopt similar setups as described in \cite{SLBF} for experiments on subspace clustering.

This section then presents experimental evaluations on classification using two public face datasets: the CMU PIE dataset (\cite{pie}) and the Extended YaleB dataset. The PIE dataset consists of 68 subjects imaged simultaneously under 13 different poses and 21 lighting conditions.
All the face images are resized to $20 \times 20$. We adopt a NN classifier unless otherwise specified.

\subsection{Subspace Clustering with Illustrative Examples}

\begin{figure*} [ht]
\centering
  \subfloat[Original subspaces for digits \{1, 2\}.] {\label{fig:12-1} \includegraphics[angle=0, height=0.16\textwidth, width=.8\textwidth]{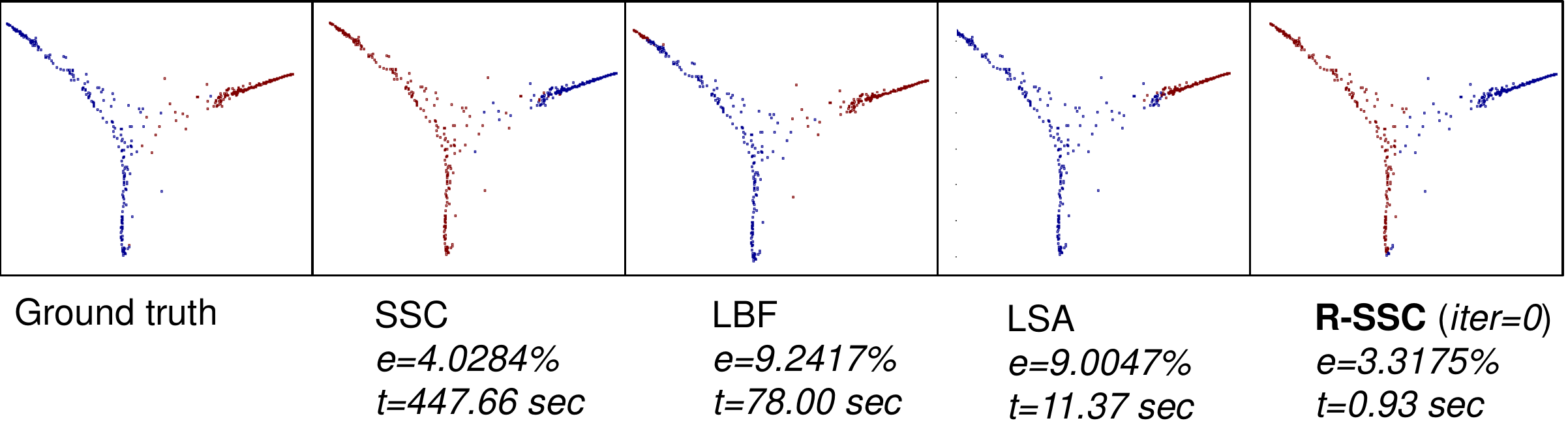}}\\
 \subfloat[Transformed subspaces for digits \{1, 2\}.] {\label{fig:12-2} \includegraphics[angle=0, height=0.21\textwidth, width=.5\textwidth]{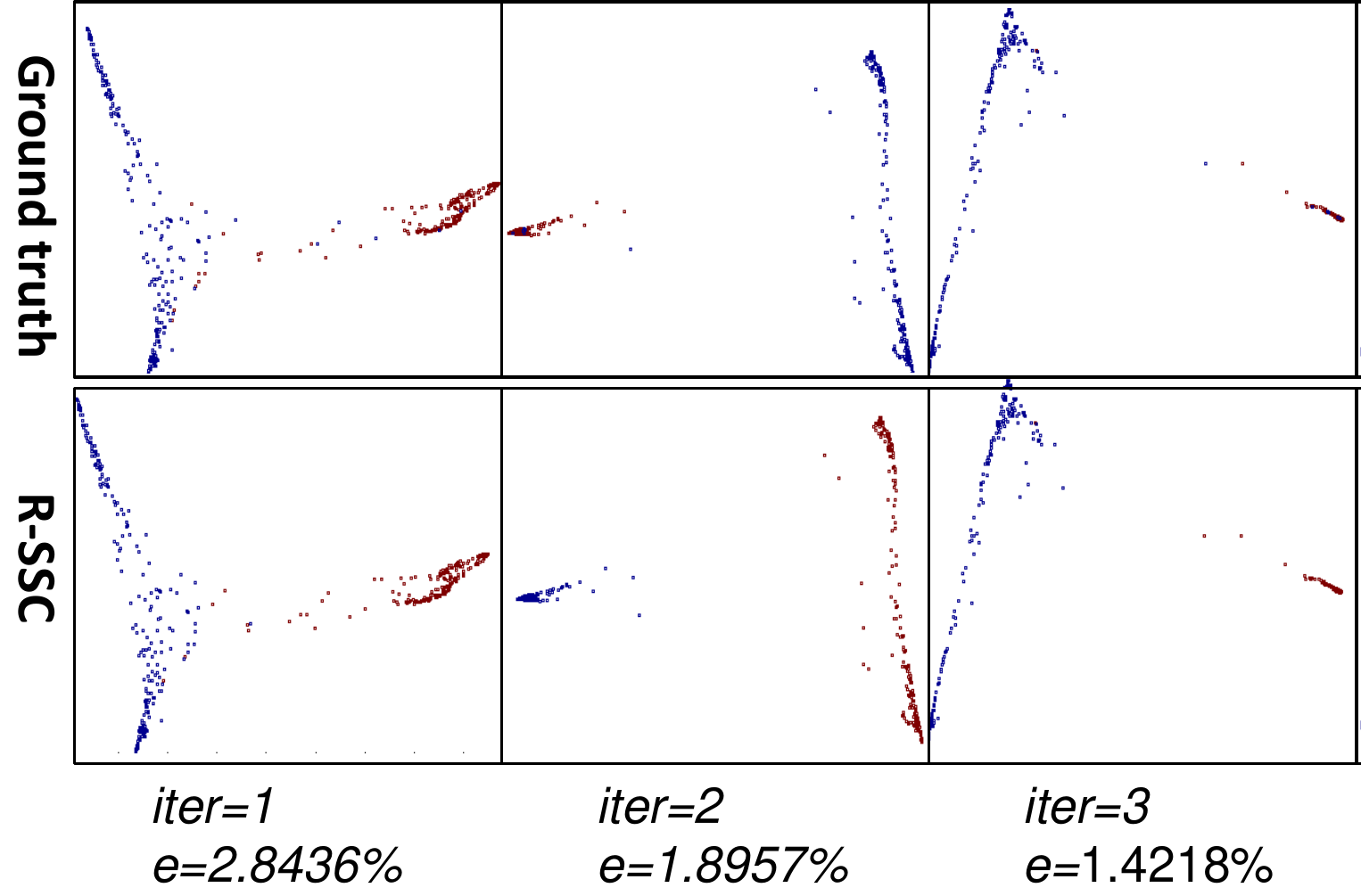} } \\
 \subfloat[Original subspaces for digits \{1, 7\}.] {\label{fig:17-1} \includegraphics[angle=0, height=0.16\textwidth, width=.8\textwidth]{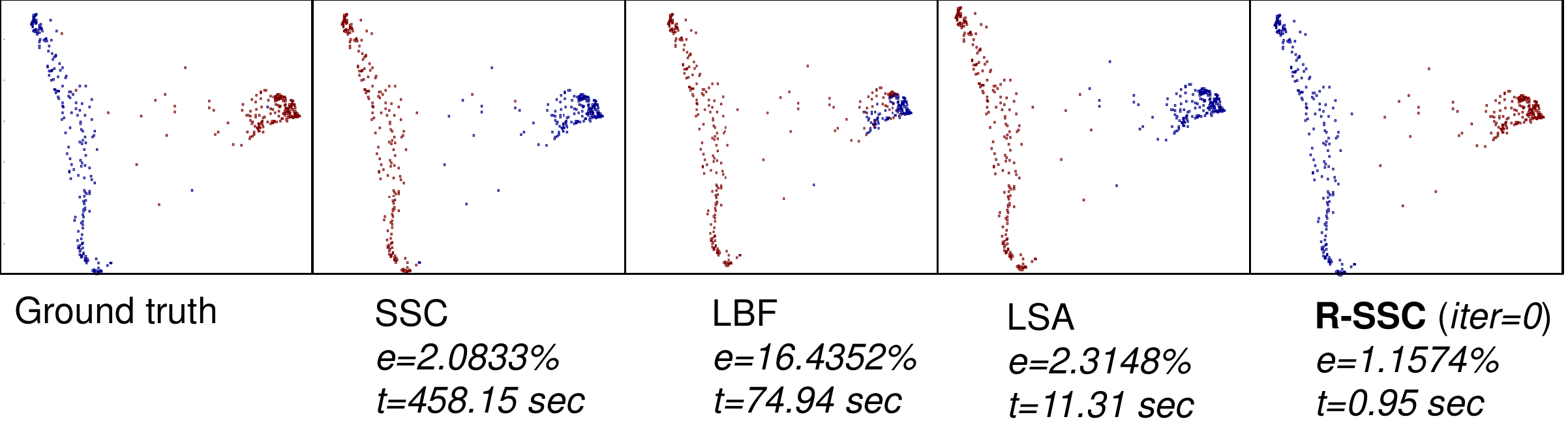}} \\
  \subfloat[Transformed subspaces for digits \{1, 7\}.] {\label{fig:17-2} \includegraphics[angle=0, height=0.21\textwidth, width=.5\textwidth]{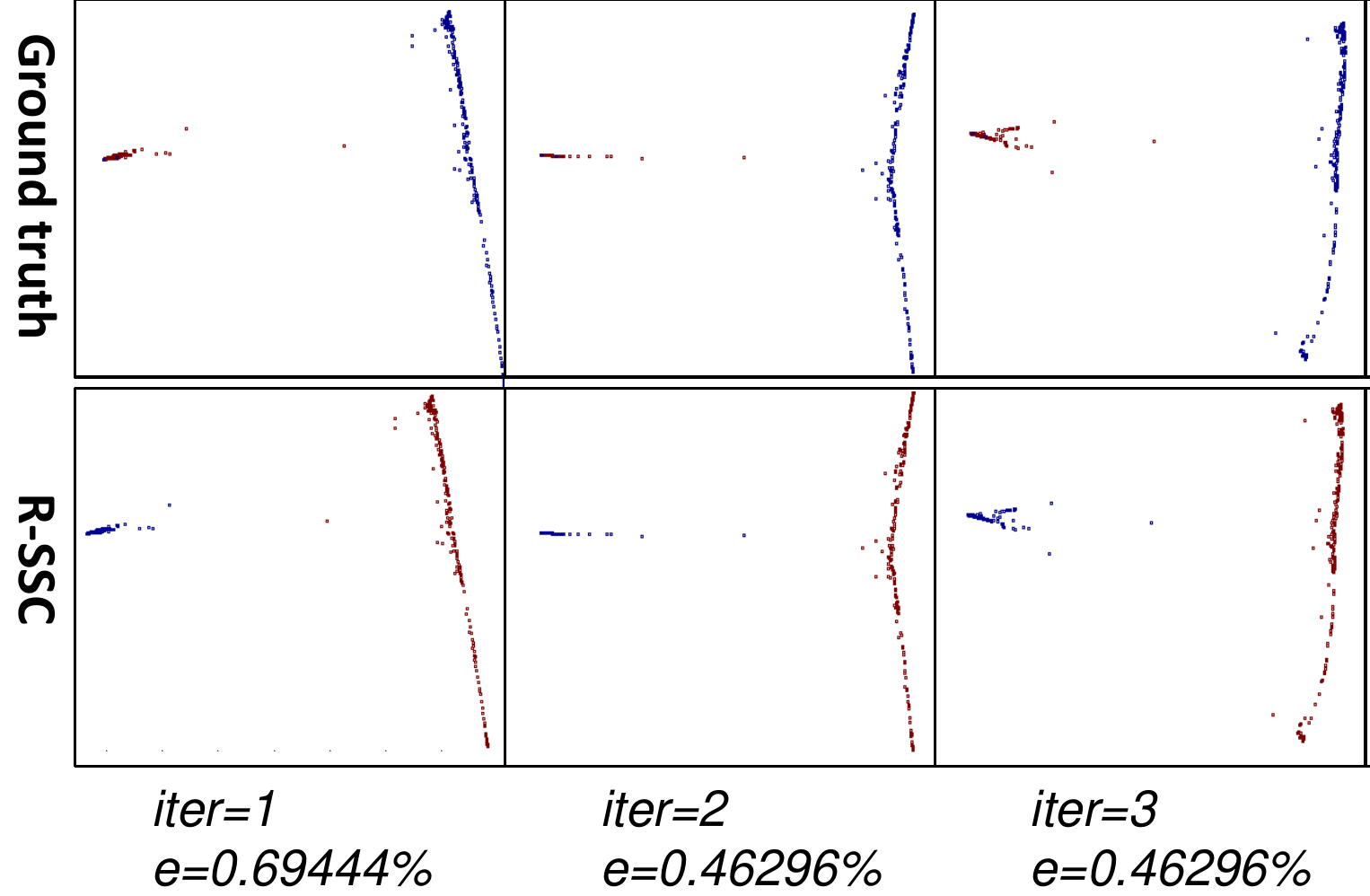}}
\caption{Misclassification rate (\emph{e})  and running time (\emph{t}) on clustering 2 digits. Methods compared are SSC \cite{SSC}, LSA \cite{LSA}, and LBF \cite{SLBF}.
For visualization, the data are plotted with the dimension reduced to 2 using Laplacian Eigenmaps \cite{eigenmap}.
Different clusters are represented by different colors and the \emph{ground truth} is plotted with the true cluster labels.
$iter$ indicates the number of LRSC iterations in Algorithm~\ref{algorsc}.
The proposed R-SSC outperforms state-of-the-art methods in terms of both clustering accuracy and running time, e.g., about 500 times faster than SSC.
The clustering performance of R-SSC is further improved using the proposed LRSC framework.
Note how the data is clearly clustered in clean subspaces in the transformed domain (best viewed zooming on screen).
}
\label{fig:2digit}
%\end{center}
\end{figure*}

\begin{figure*} [ht]
\centering
 \subfloat[Digits \{1, 2, 3\}.] {\label{fig:123} \includegraphics[angle=0, height=0.25\textwidth, width=.9\textwidth]{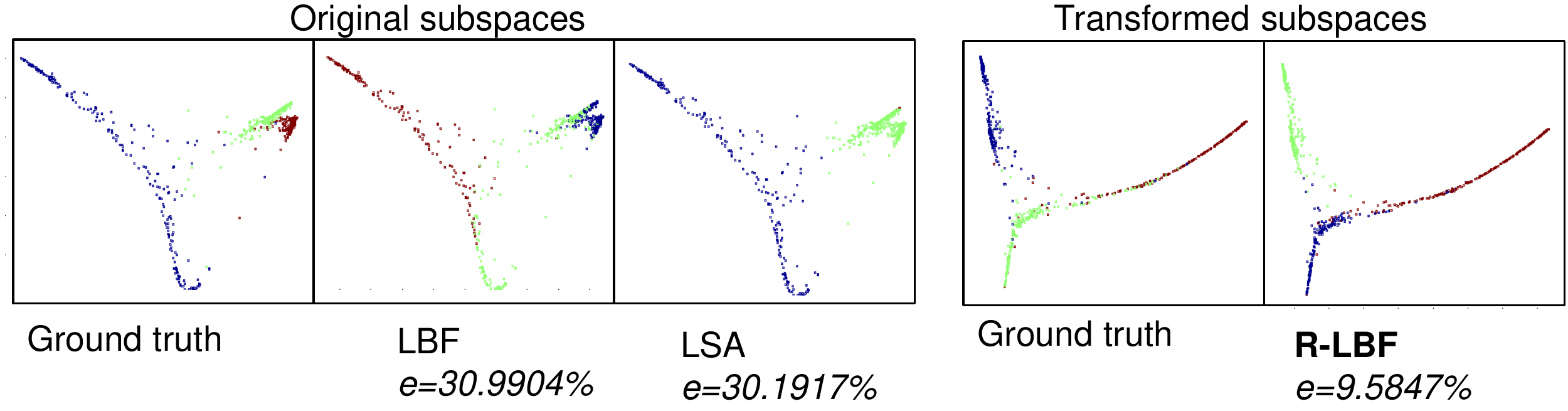} \hspace{10pt}} \\
  \subfloat[Digits \{2, 4, 8\}.] {\label{fig:248} \includegraphics[angle=0, height=0.25\textwidth, width=.92\textwidth]{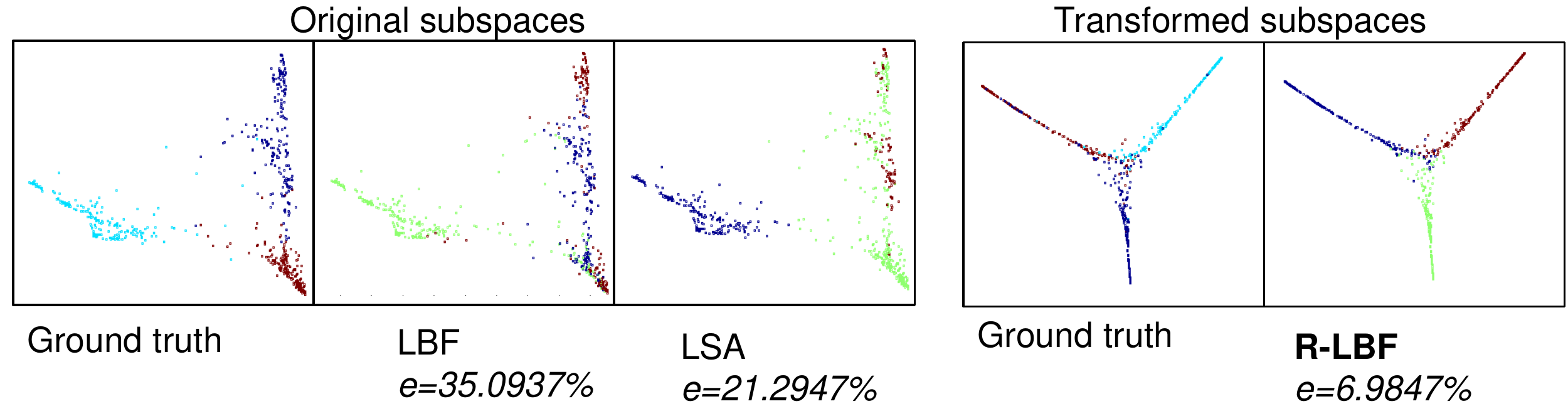}}
\caption{Misclassification rate (\emph{e}) on clustering 3 digits.
Methods compared are LSA \cite{LSA} and LBF \cite{SLBF}.
LBF is adopted in the proposed LRSC framework and denoted as {R-LBF}.
After convergence, R-LBF significantly outperforms state-of-the-art methods.}
\label{fig:3digit}
%\end{center}
\end{figure*}

For illustration purposes, we conduct the first set of experiments on a subset of the MNIST dataset.
We adopt a similar setup as described in \cite{SLBF}, using the same sets of 2 or 3 digits, and randomly choose 200 images for each digit.  We set the sparsity value $K=6$ for R-SSC, and perform $100$ iterations for the subgradient updates while learning the transformation on subspaces. The subgradient update step was $\nu =0.02$ (see Appendix~\ref{gradesc} for details on the projected subgradient optimization algorithm).

{
Unless otherwise stated, we do not perform dimension reduction, such as PCA or random projections, to preprocess the data, thereby further saving computations (please note that the learned transform can itself reduce dimensions if so desired, see Section ~\ref{sec:compress}).
In the literature, e.g., \cite{SSC}, \cite{SubspaceClustering} and \cite{SLBF}, projection to a very low dimension is usually performed to enhance the clustering performance. However, it is often not obvious how to determine the correct projection dimension for real data, and many subspace clustering methods show sensitive to the choice of the projection dimension.
This dimension reduction step is not needed in the framework here proposed.
}

Fig.~\ref{fig:2digit} shows the misclassification rate (\emph{e})  and running time (\emph{t}) on clustering subspaces of two digits.
The misclassification rate is the ratio of misclassified points to the total number of points\footnote{Meaning the ratio of points that were assigned to the wrong cluster.}.
For visualization  purposes, the data are plotted with the dimension reduced to 2 using Laplacian Eigenmaps \cite{eigenmap}.
Different clusters are represented by different colors and the ground truth is plotted using the true cluster labels.
The proposed R-SSC outperforms state-of-the-art methods, both in terms of clustering accuracy and running time.
The clustering error of R-SSC is further reduced using the proposed LRSC framework in Algorithm~\ref{algorsc} through the learned low-rank subspace transformation.
The clustering converges after about 3 LRSC iterations.
The learned transformation not only recovers a low-rank structure for
data from the same subspace, but also increases the separations between the subspaces for more accurate clustering.

Fig.~\ref{fig:3digit} shows misclassification rate (\emph{e}) on clustering subspaces of three digits.
Here we adopt LBF in our LRSC framework, denoted as Robust LBF (R-LBF),  to illustrate that the performance of existing subspace clustering methods can be enhanced using the proposed LRSC algorithm.
After convergence, R-LBF, which uses the proposed learned subspace transformation, significantly outperforms state-of-the-art methods.

{
Table~\ref{tab:mnist2} shows the misclassification rate on clustering different number of digits, $[0:c]$ denotes the subset of $c+1$ digits from digit $0$ to $c$. We randomly pick 100 samples per digit to compare the performance when a fewer number of data points per class are present.
For all cases, the proposed LRSC method significantly outperforms state-of-the-art methods.
}

\begin{table}[ht]
%\begin{center}
\centering
	\caption{{
Misclassification rate ($e\%$) on clustering different numbers of digits in the MNIST dataset,
$[0:c]$ denotes the subset of $c+1$ digits from digit $0$ to $c$. We randomly pick 100 samples per digit.
For all cases, the proposed LRSC method significantly outperforms state-of-the-art methods.
}}
{%\small
	\begin{tabular}{l|l|l|l|l|l|l|l|l}
	\hline
Subsets & [0:1] & [0:2] & [0:3] & [0:4] & [0:5] & [0:6] & [0:7] & [0:8] \\
	\hline
$C$ & 2 & 3 & 4 & 5 & 6 & 7 & 8 & 9 \\
	\hline
 \hline
LSA & 0.47 & 47.57 & 36.73 & 30.90 & 40.46 & 48.13 & 39.87 & 44.03\\
LBF & 0.47 & 23.62 & 29.19 & 51.37 & 48.99 & 53.01 & 39.87  & 38.79 \\
LRSC & \textbf{0} & \textbf{3.88} & \textbf{3.89} & \textbf{5.31} &\textbf{ 14.04} & \textbf{13.79} & \textbf{14.50}  &  \textbf{16.05} \\
\hline
	\end{tabular}
}	
	\label{tab:mnist2}
%\end{center}
\end{table}

\subsubsection{Online vs. Batch Learning}

\begin{figure*} [ht]
\centering
 \subfloat[Batch learning with various $\gamma$ values.] {\label{fig:batchconv} \includegraphics[angle=0, height=0.3\textwidth, width=.42\textwidth]{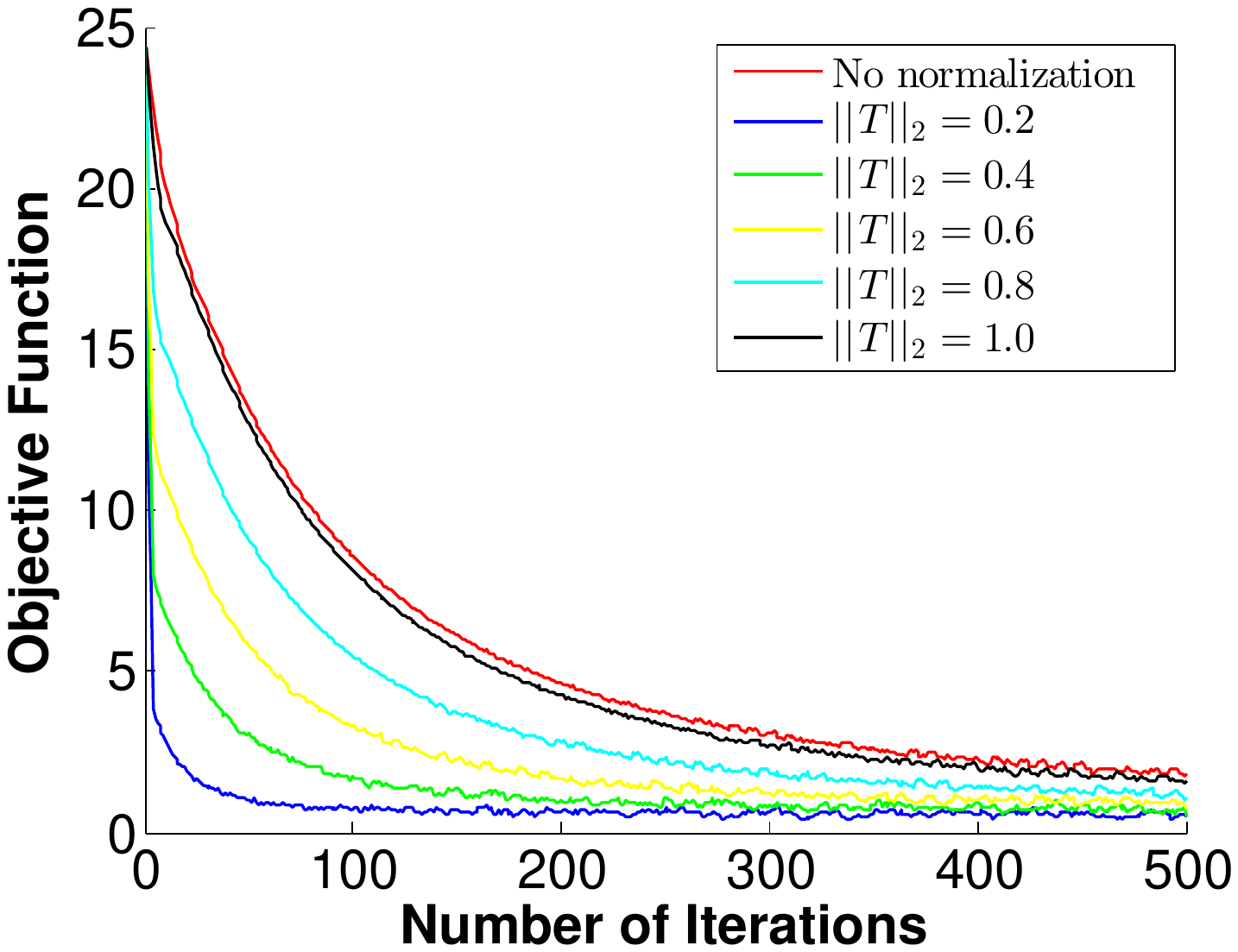} \hspace{20pt}}
  \subfloat[Online vs. batch learning ($\gamma=1$).] {\label{fig:olconv} \includegraphics[angle=0, height=0.3\textwidth, width=.4\textwidth]{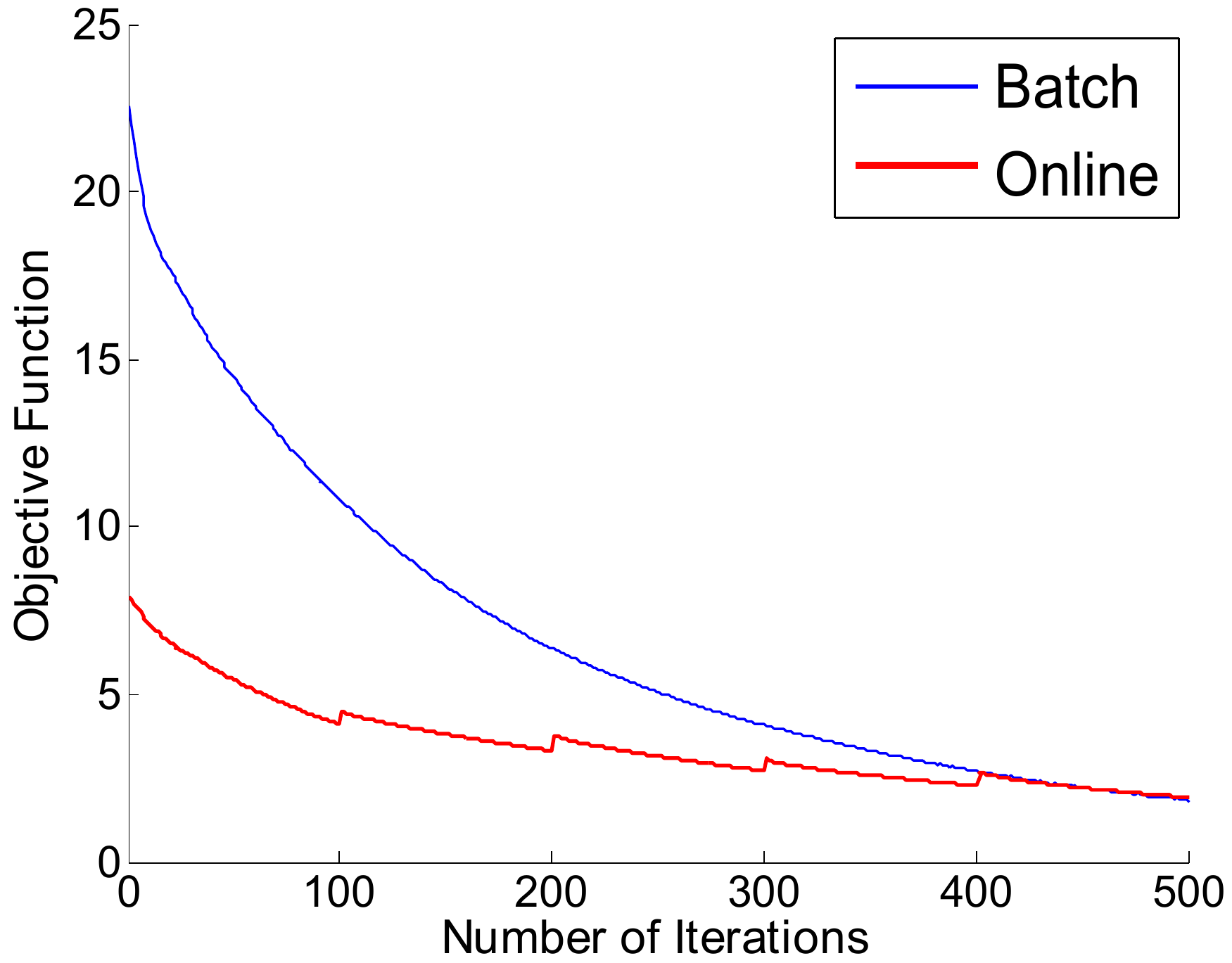}}
\caption{Convergence of the objective function (\ref{nuclear_obj}) using online and batch learning for subspace transformation. We always observe empirical convergence for both online and batch learning.
In (a), we vary the value of $\gamma$ in the norm constraint (``No normalization" denotes removing the norm constraint). More discussions on convergence can be found in Appendix~\ref{gradesc}.
In (b), to converge to the same objective function value, it takes $131.76$ sec. for online learning and $700.27$ sec. for batch learning.
}
\label{fig:convergence}
%\end{center}
\end{figure*}

In this set of experiments, we use digits \{1, 2\} from the MNIST dataset. We select 1000 images for each digit, and randomly partition them into 5 mini-batches.
{ We first perform one iteration of LRSC in Algorithm~\ref{algorsc} over all selected data with various  $\gamma$ values.
 As shown in Fig.~\ref{fig:batchconv}, we always observe empirical convergence for subspace transformation learning via (\ref{nuclear_obj}). The projected subgradient method presented in Appendix~\ref{gradesc} converges to a local minimum (or a stationary point).
 More discussions on convergence can be found in Appendix~\ref{gradesc}.
 }

Starting with the first mini-batch, we then perform one iteration of LRSC  over one mini-batch a time, with the subspace transformation learned from the previous mini-batch as warm restart. We adopt here $100$ iterations for the subgradient descent updates. As shown in Fig.~\ref{fig:olconv}, we observe similar empirical convergence for online transformation learning. To converge to the same objective function value, it takes
$131.76$ sec. for online learning and $700.27$ sec. for batch learning.

\subsection{Application to Face Clustering}

\begin{figure*} [ht]
\centering
 \subfloat[Example illumination conditions.] {\label{fig:yalelight} \includegraphics[angle=0, height=0.12\textwidth, width=.9\textwidth]{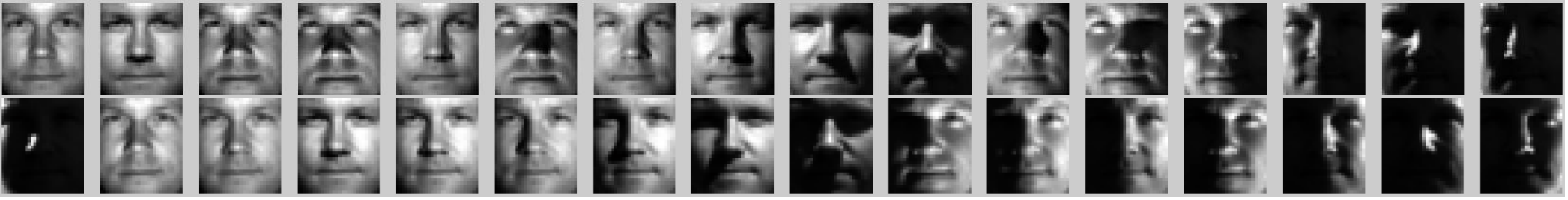} } \\
  \subfloat[Example subjects.] {\label{fig:yalesub} \includegraphics[angle=0, height=0.12\textwidth, width=.9\textwidth]{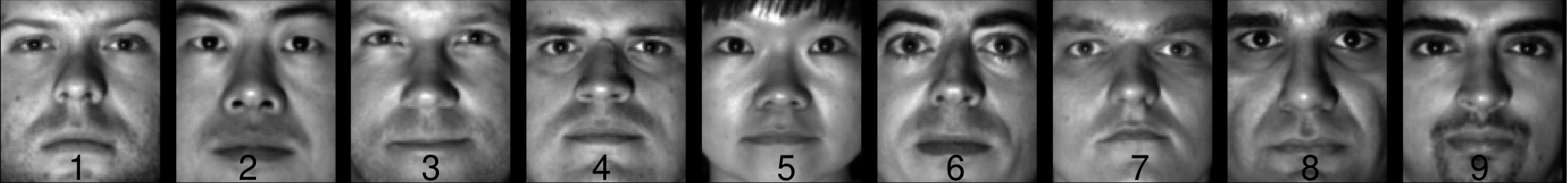}}
\caption{The extended YaleB face dataset.}
\label{fig:yaledata}
%\end{center}
\end{figure*}

\begin{figure*} [ht]
\centering
 \subfloat[Ground truth.] {\label{fig:9sub_truth} \includegraphics[angle=0, height=0.15\textwidth, width=.18\textwidth]{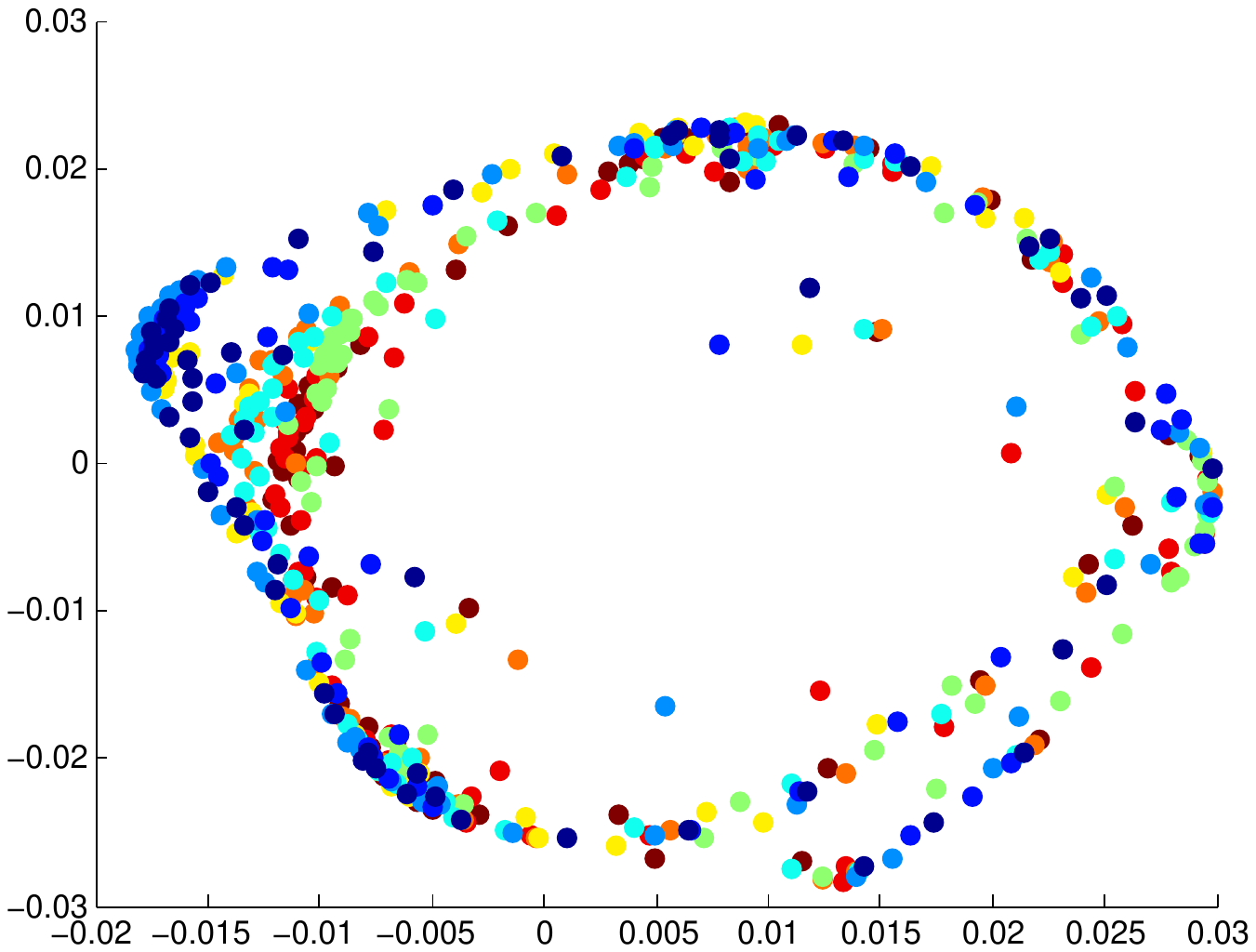}}
 \subfloat[][SSC, $e=71.25\%$, \\ $~~~~~~~t=714.99~sec$.] {\label{fig:9sub_ssc} \includegraphics[angle=0, height=0.15\textwidth, width=.2\textwidth]{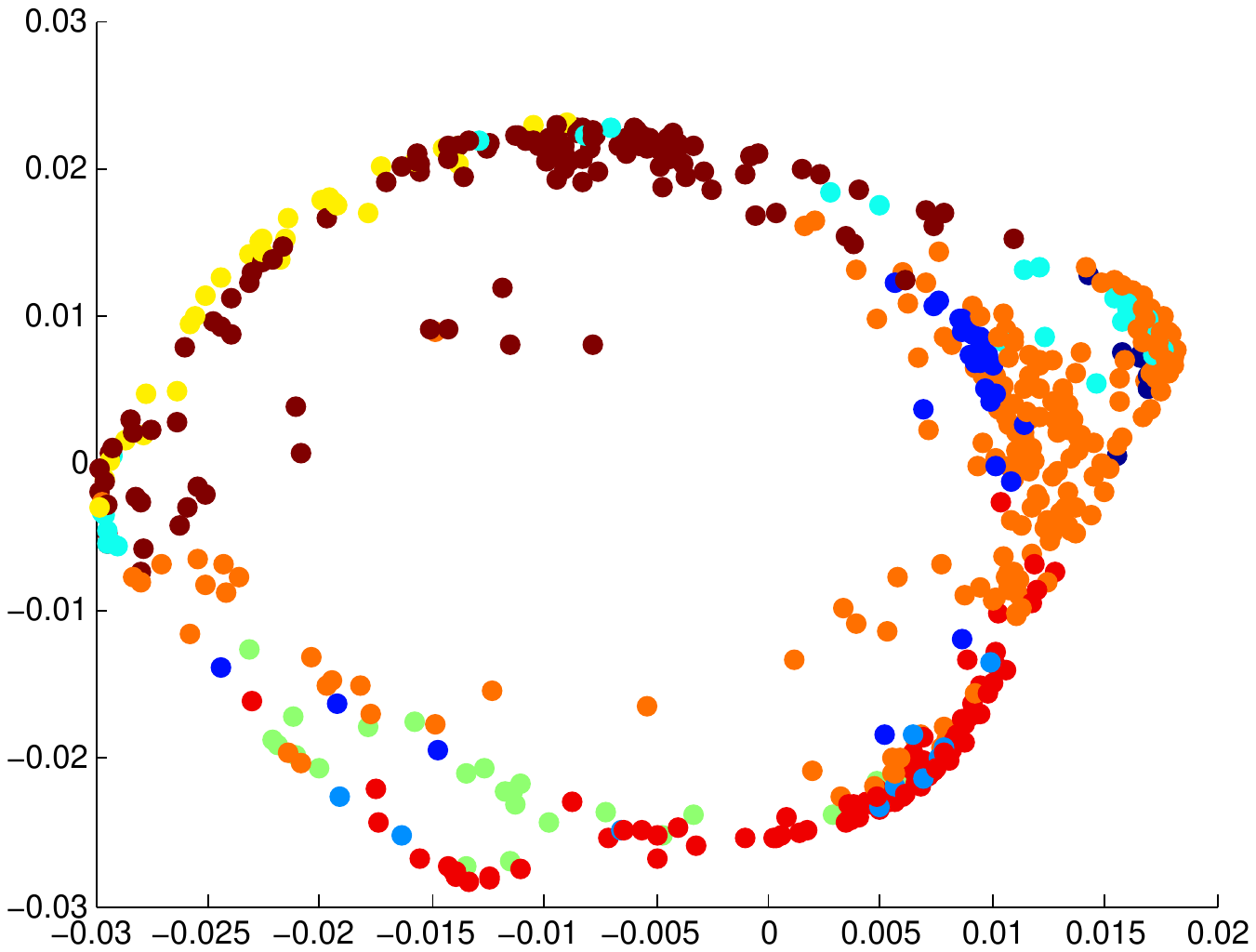}}
 \subfloat[][LBF, $e=76.37\%$, \\ $~~~~~~t=460.76~sec$.] {\label{fig:9sub_lbf} \includegraphics[angle=0, height=0.15\textwidth, width=.2\textwidth]{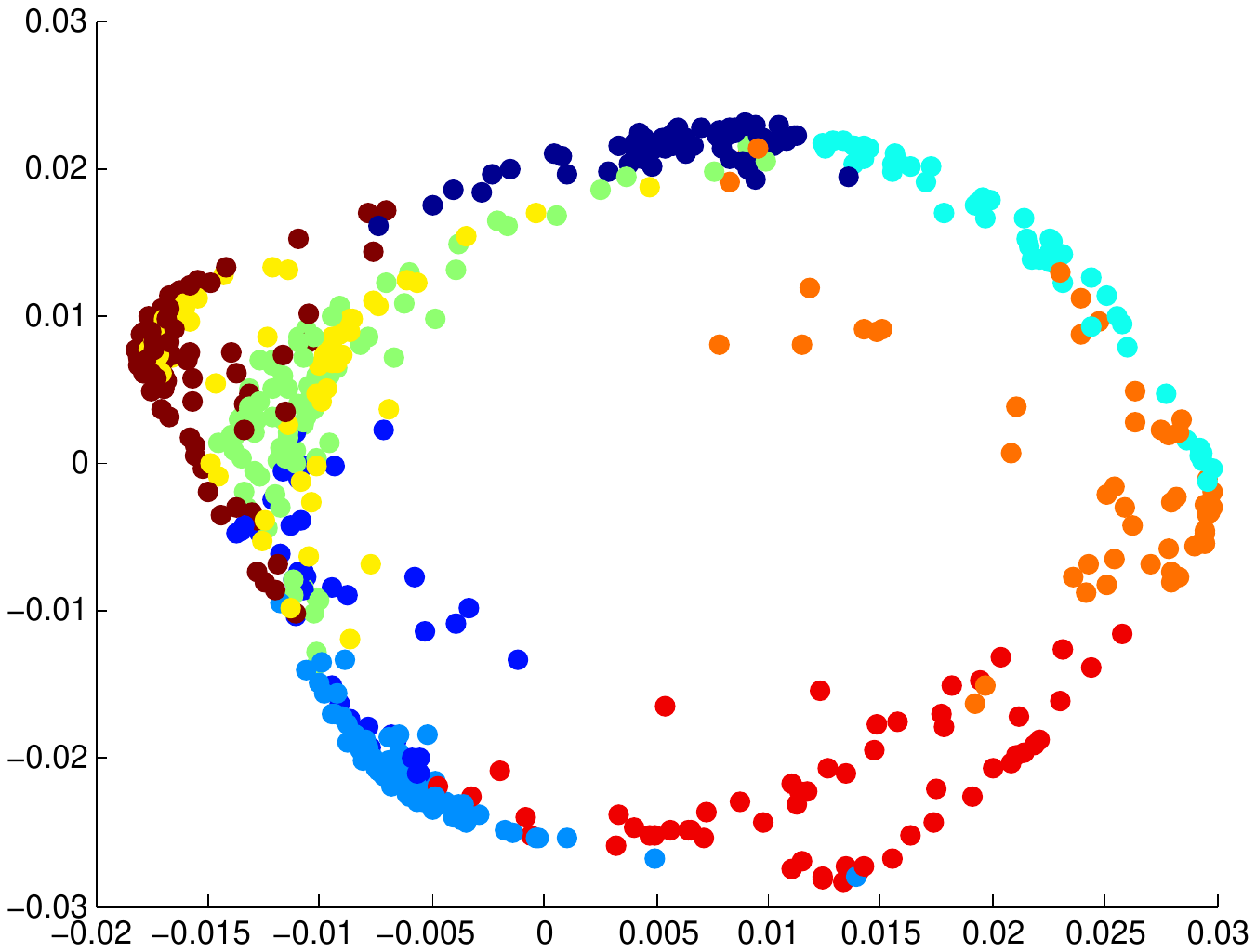}}
 \subfloat[][LSA, $e=71.96\%$, \\ $~~~~~~t=22.57~sec$.] {\label{fig:9sub_lsa} \includegraphics[angle=0, height=0.15\textwidth, width=.21\textwidth]{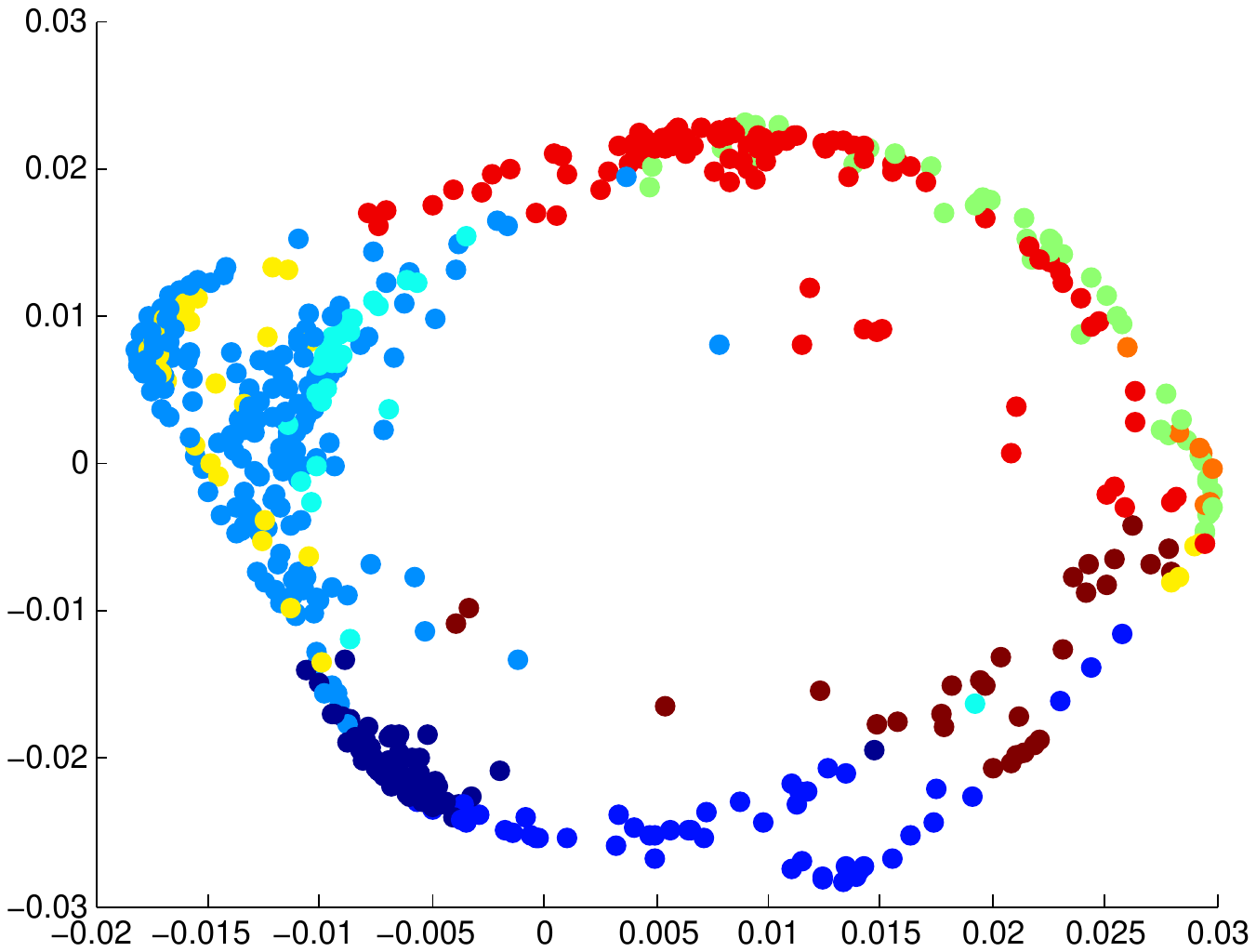}}
 \subfloat[][R-SSC, $e=\mathbf{67.37}\%$,\\ $~~~~~~t=\mathbf{1.83}~sec$.] {\label{fig:9sub_rssc} \includegraphics[angle=0, height=0.16\textwidth, width=.23\textwidth]{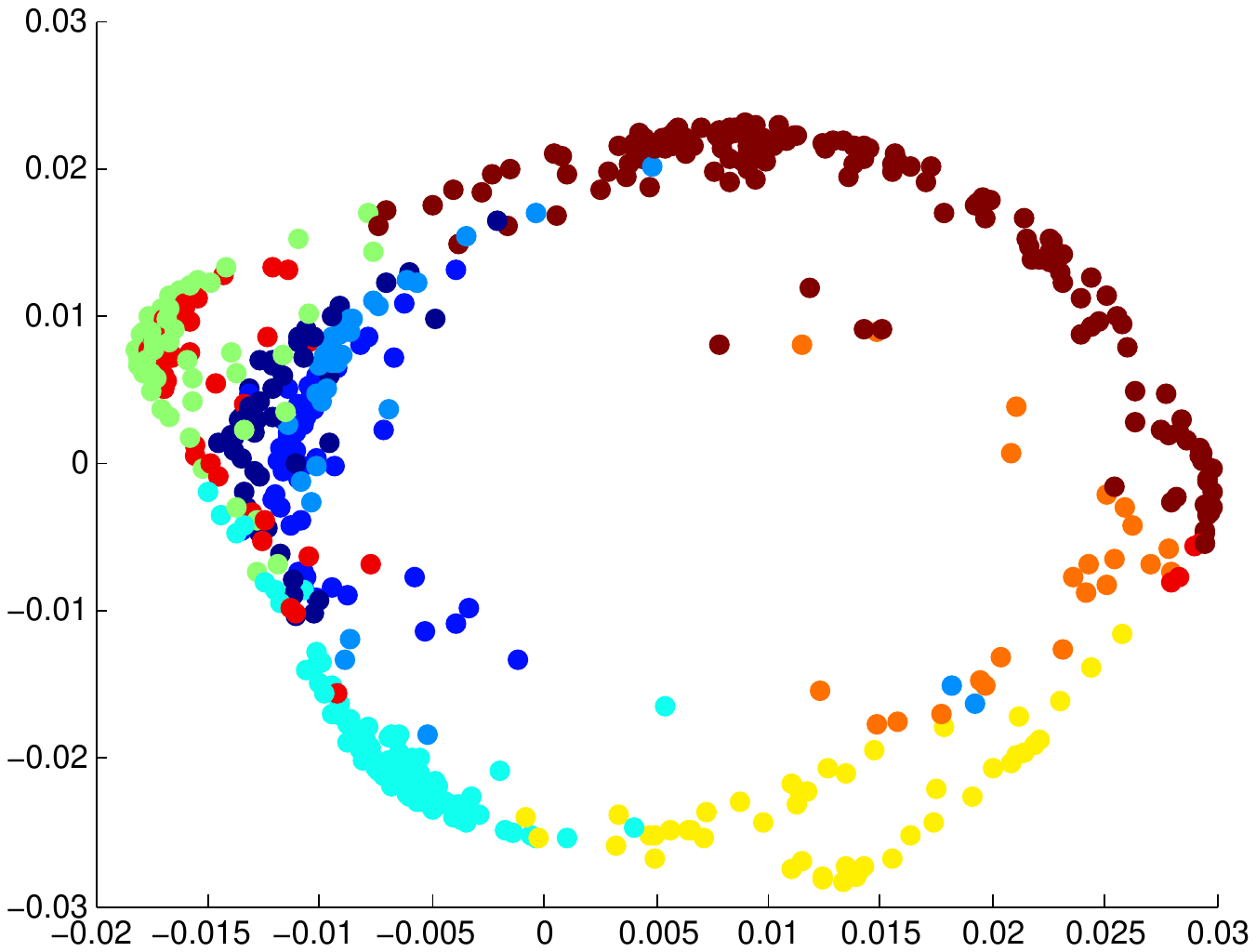}}\\
\caption{Misclassification rate (\emph{e})  and running time (\emph{t}) on clustering 9 subjects using different subspace clustering methods.
The proposed R-SSC outperforms state-of-the-art methods both in accuracy and running time.
This is further improved using the learned transform, LRSC reduces the error to 4.94\%, see Fig.~\ref{fig:9sub_lrsc}.
}
\label{fig:9sub_sc}
%\end{center}
\end{figure*}

\begin{figure*} [ht]
\centering
 \subfloat[Ground truth (iter=1).] {\label{fig:9sub1_g} \includegraphics[angle=0, height=0.2\textwidth, width=.25\textwidth]{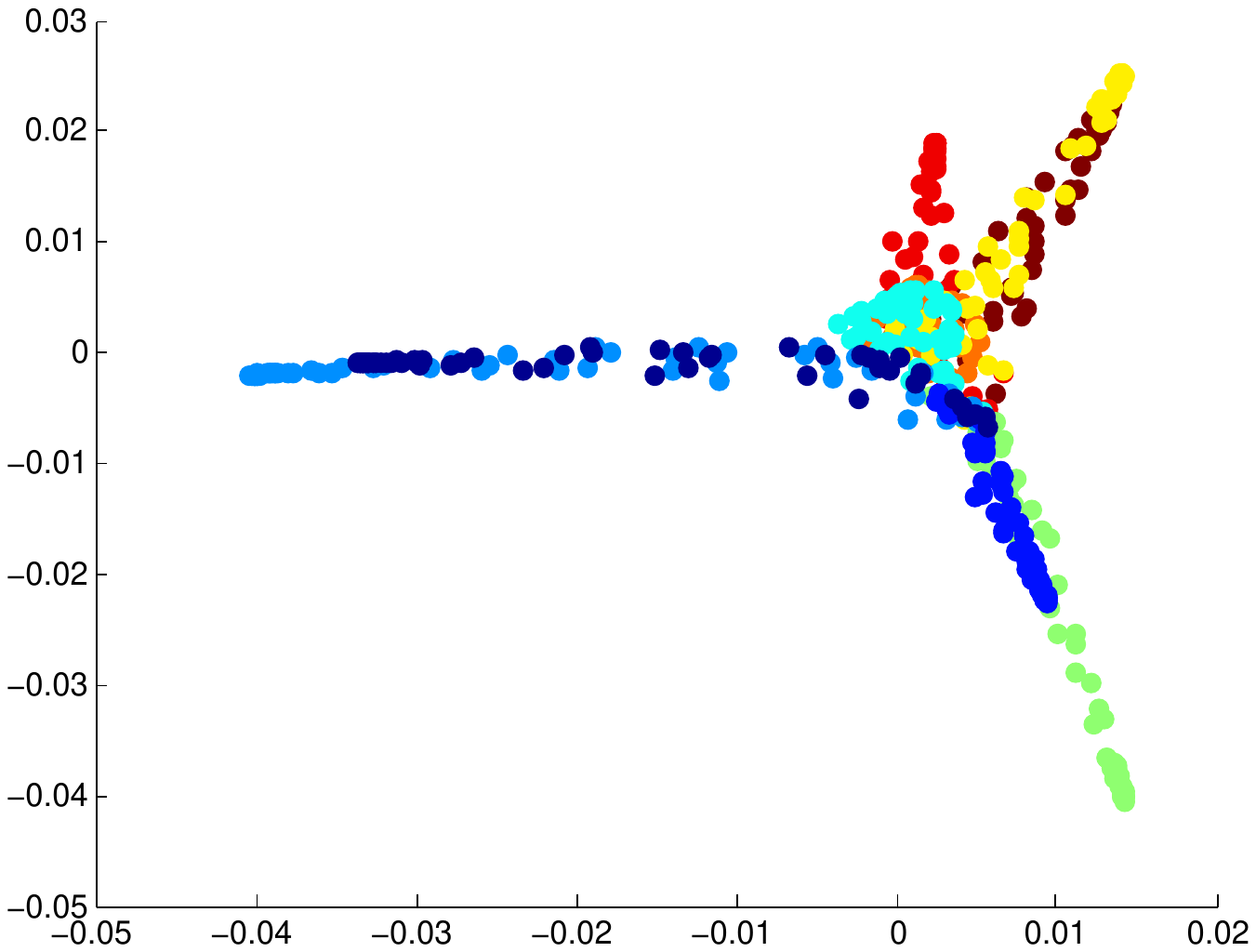}}
 \subfloat[$e=40.39\%$ (iter=1).] {\label{fig:9sub1_c} \includegraphics[angle=0, height=0.2\textwidth, width=.25\textwidth]{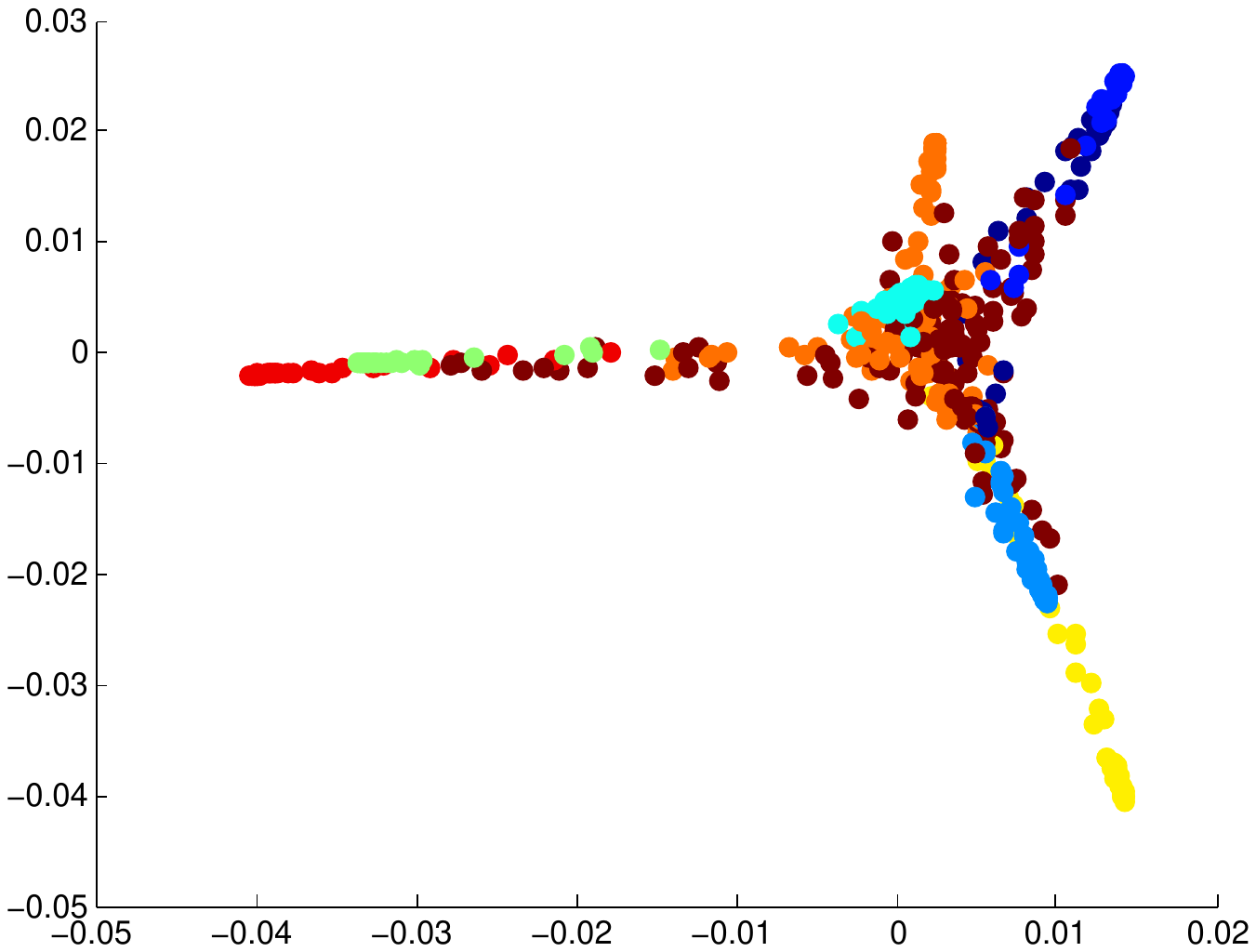}\hspace{0pt}}
  \subfloat[Ground truth (iter=2).] {\label{fig:9sub2_g} \includegraphics[angle=0, height=0.2\textwidth, width=.25\textwidth]{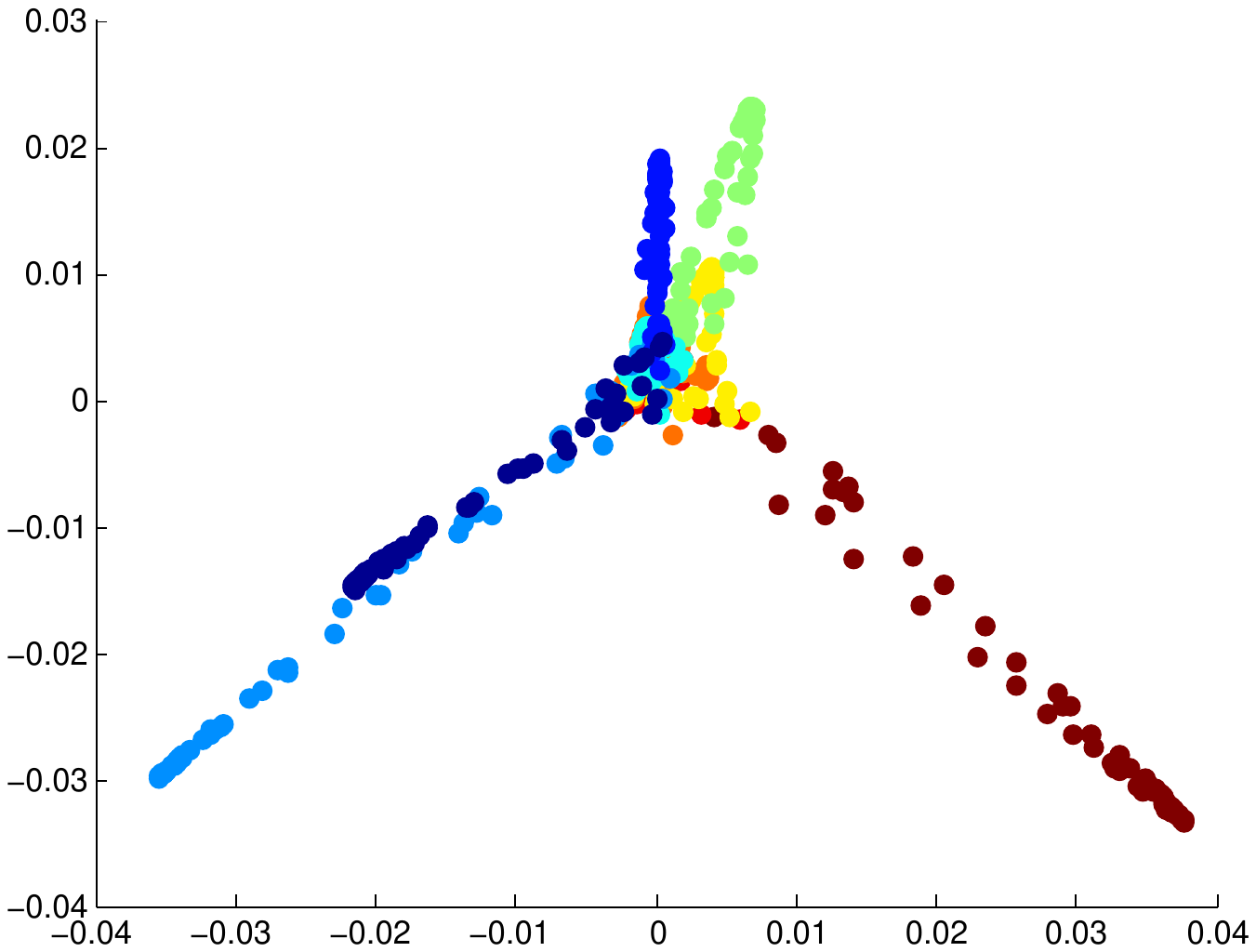}}
 \subfloat[$e=33.51\%$ (iter=2).] {\label{fig:9sub2_c} \includegraphics[angle=0, height=0.2\textwidth, width=.25\textwidth]{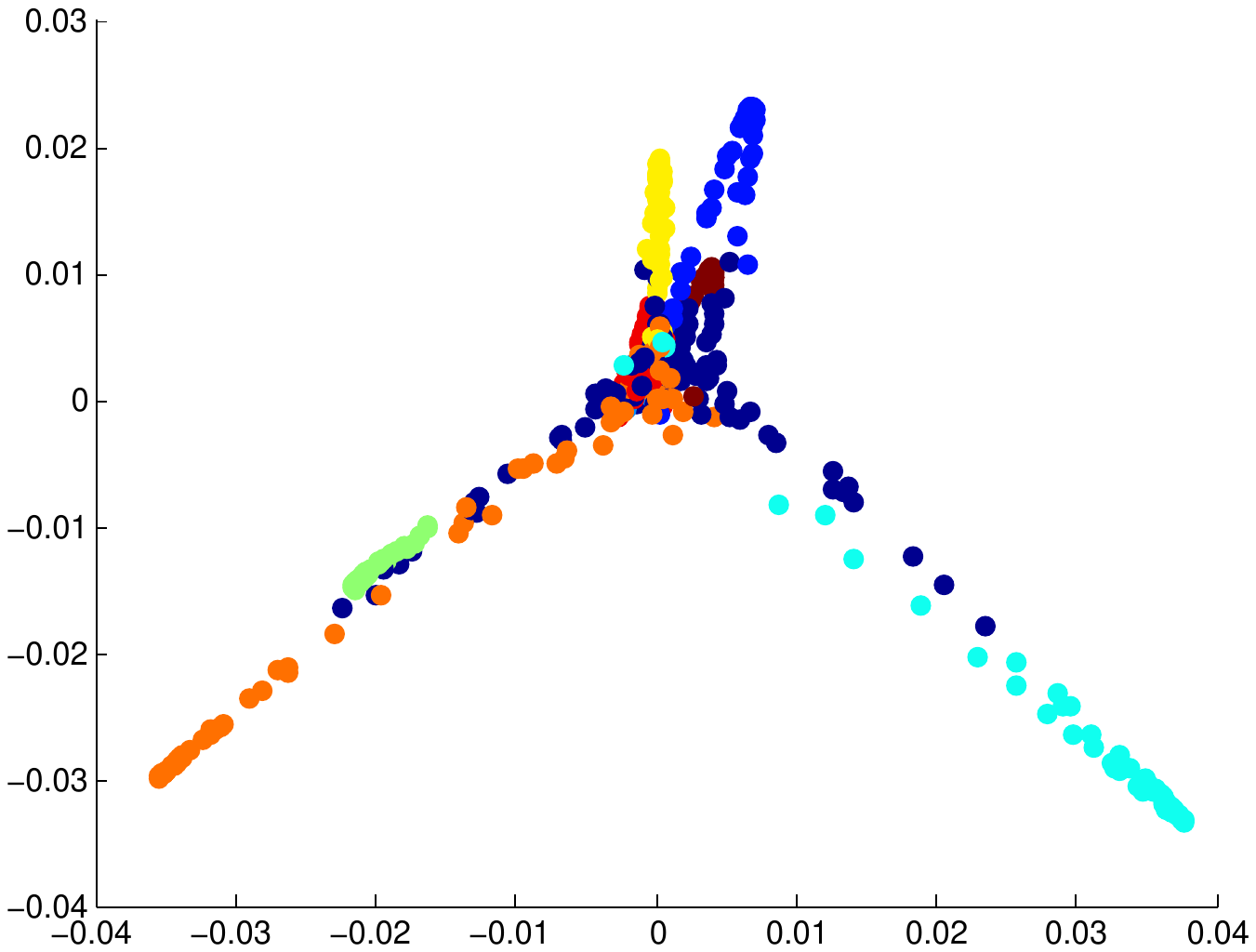}}\\
  \subfloat[Ground truth (iter=3).] {\label{fig:9sub3_g} \includegraphics[angle=0, height=0.2\textwidth, width=.25\textwidth]{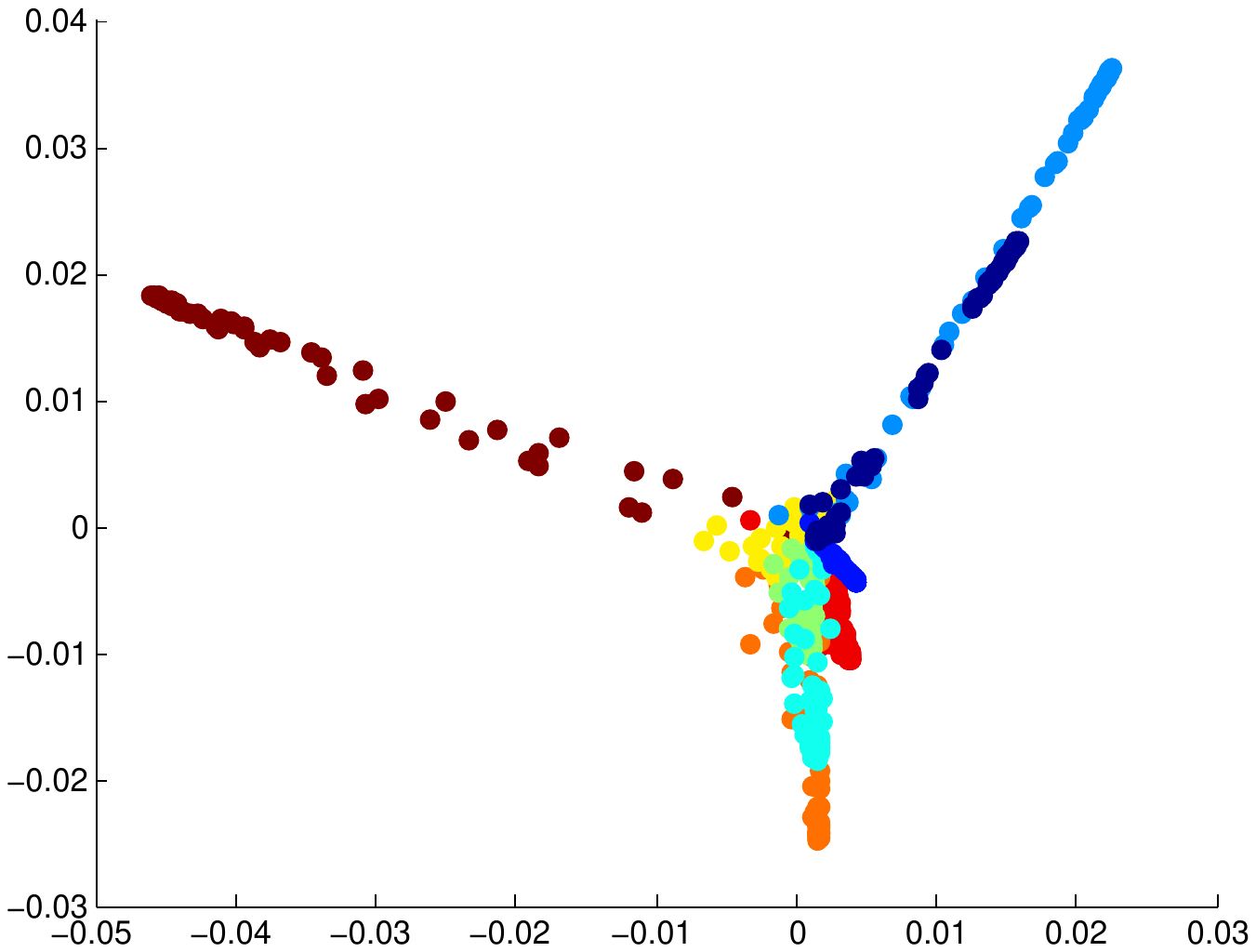}}
 \subfloat[$e=29.98\%$ (iter=3).] {\label{fig:9sub3_c} \includegraphics[angle=0, height=0.2\textwidth, width=.25\textwidth]{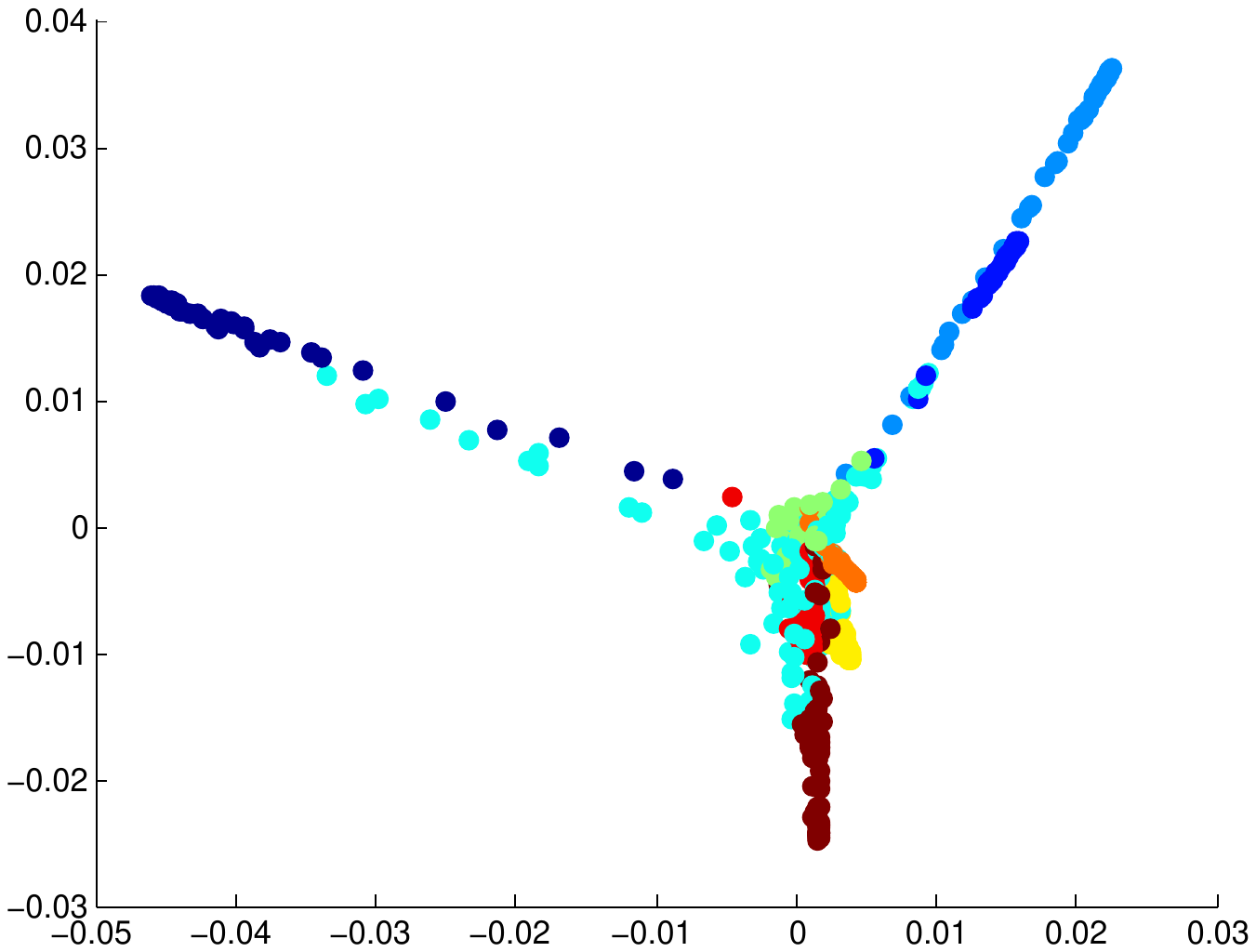}}
 \subfloat[Ground truth (iter=6).] {\label{fig:9sub6_g} \includegraphics[angle=0, height=0.2\textwidth, width=.25\textwidth]{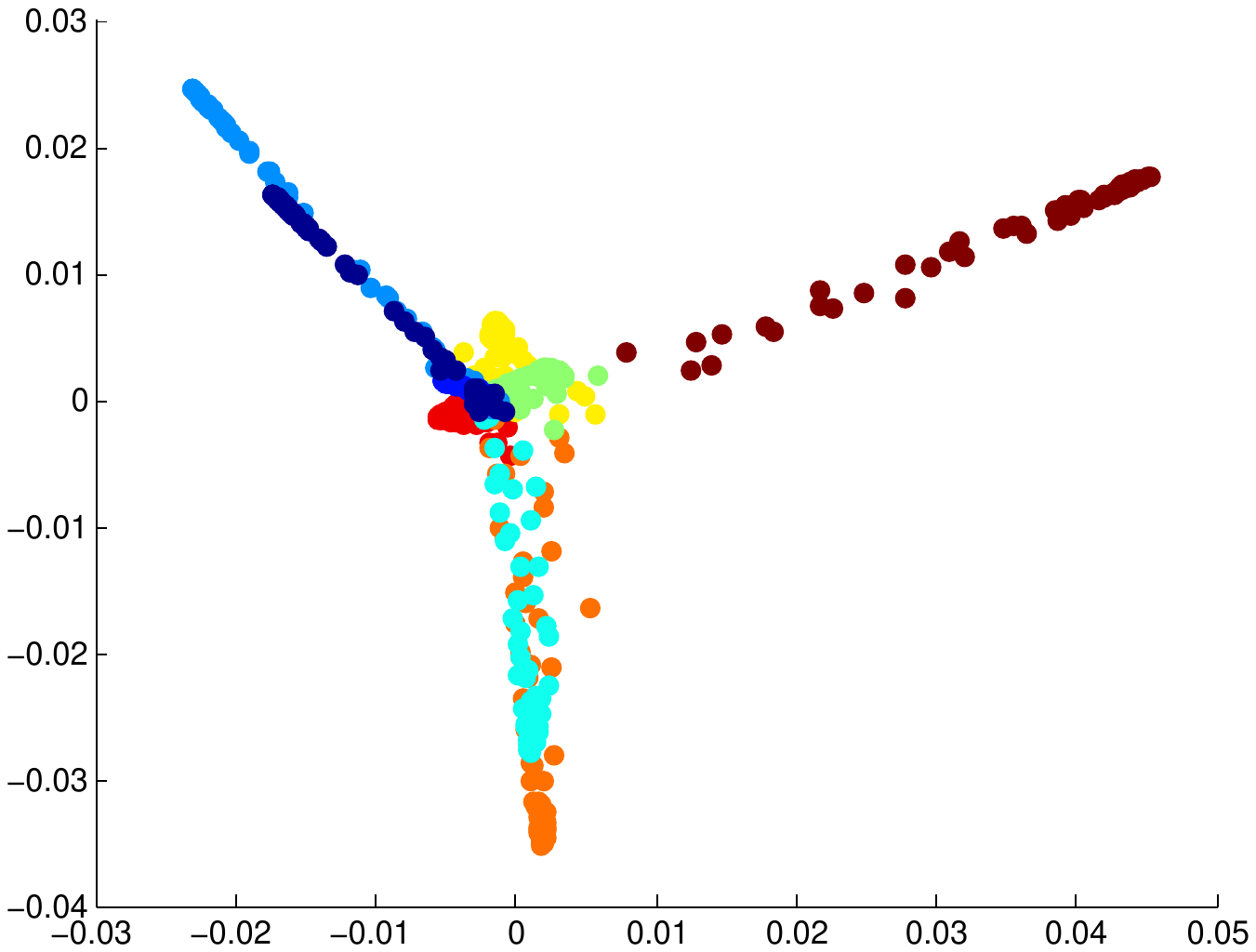}}
 \subfloat[$e=13.40\%$ (iter=6).] {\label{fig:9sub6_c} \includegraphics[angle=0, height=0.2\textwidth, width=.25\textwidth]{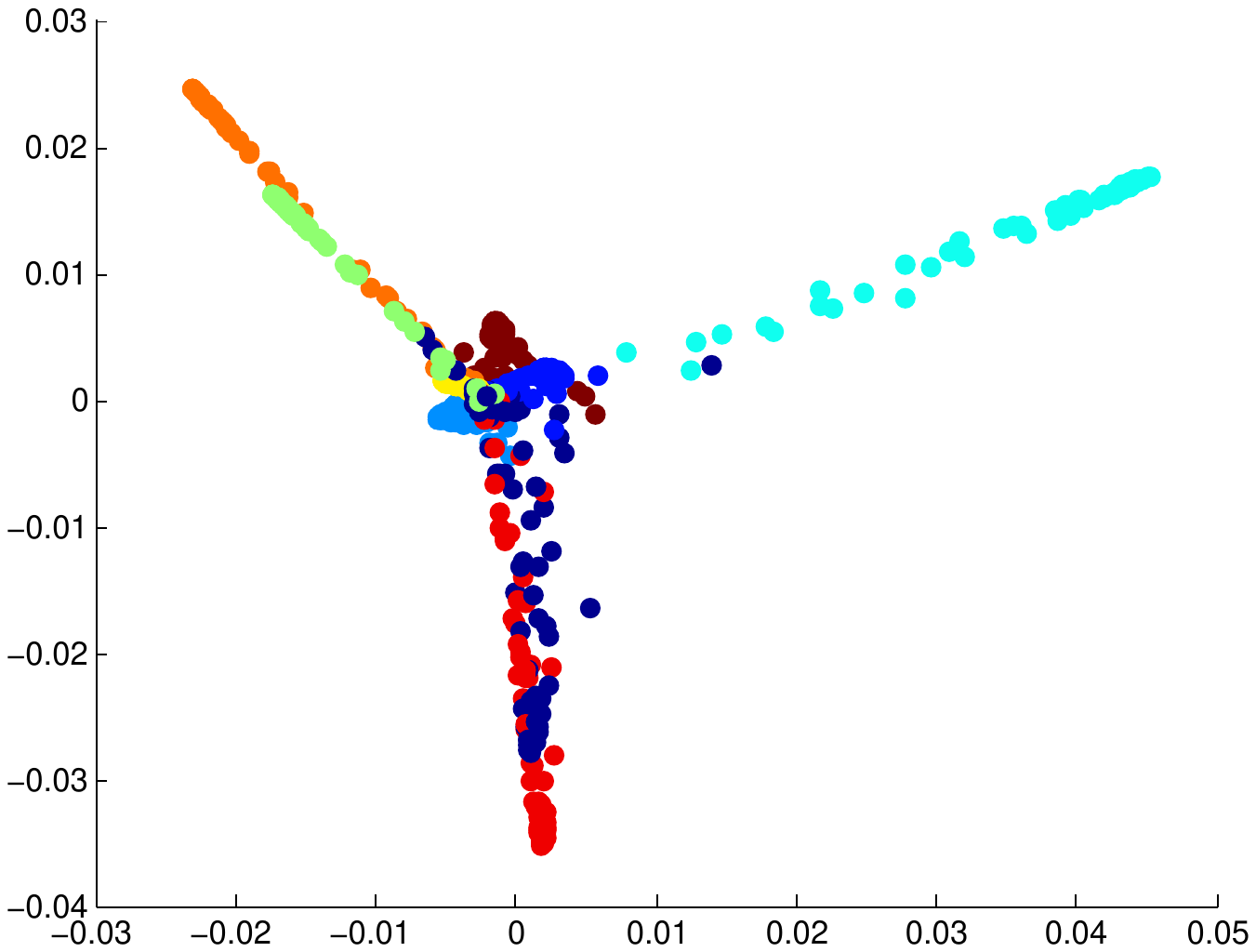}}\\
  \subfloat[Ground truth (iter=8).] {\label{fig:9sub8_g} \includegraphics[angle=0, height=0.2\textwidth, width=.25\textwidth]{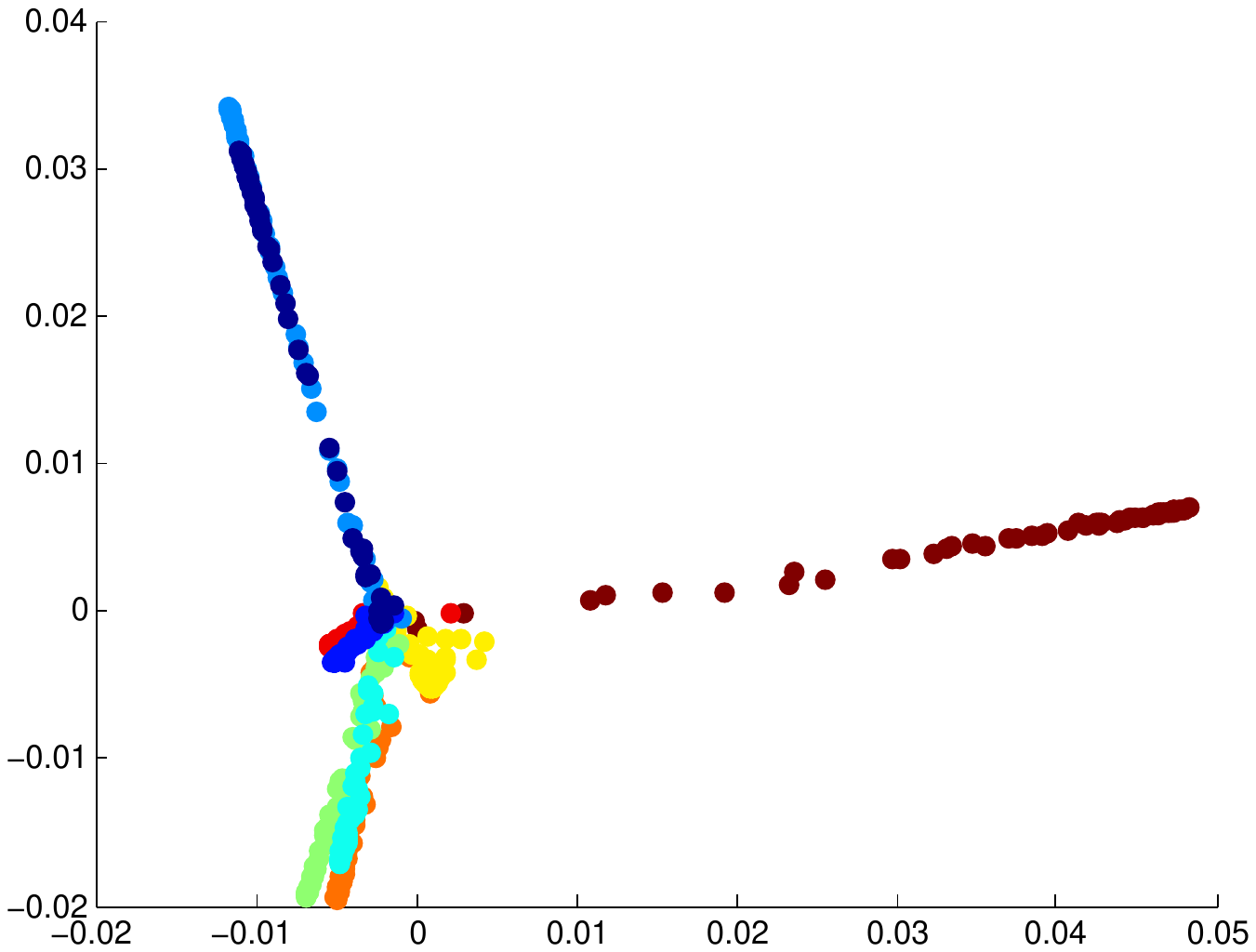}}
 \subfloat[$e=6.17\%$ (iter=8).] {\label{fig:9sub8_c} \includegraphics[angle=0, height=0.2\textwidth, width=.25\textwidth]{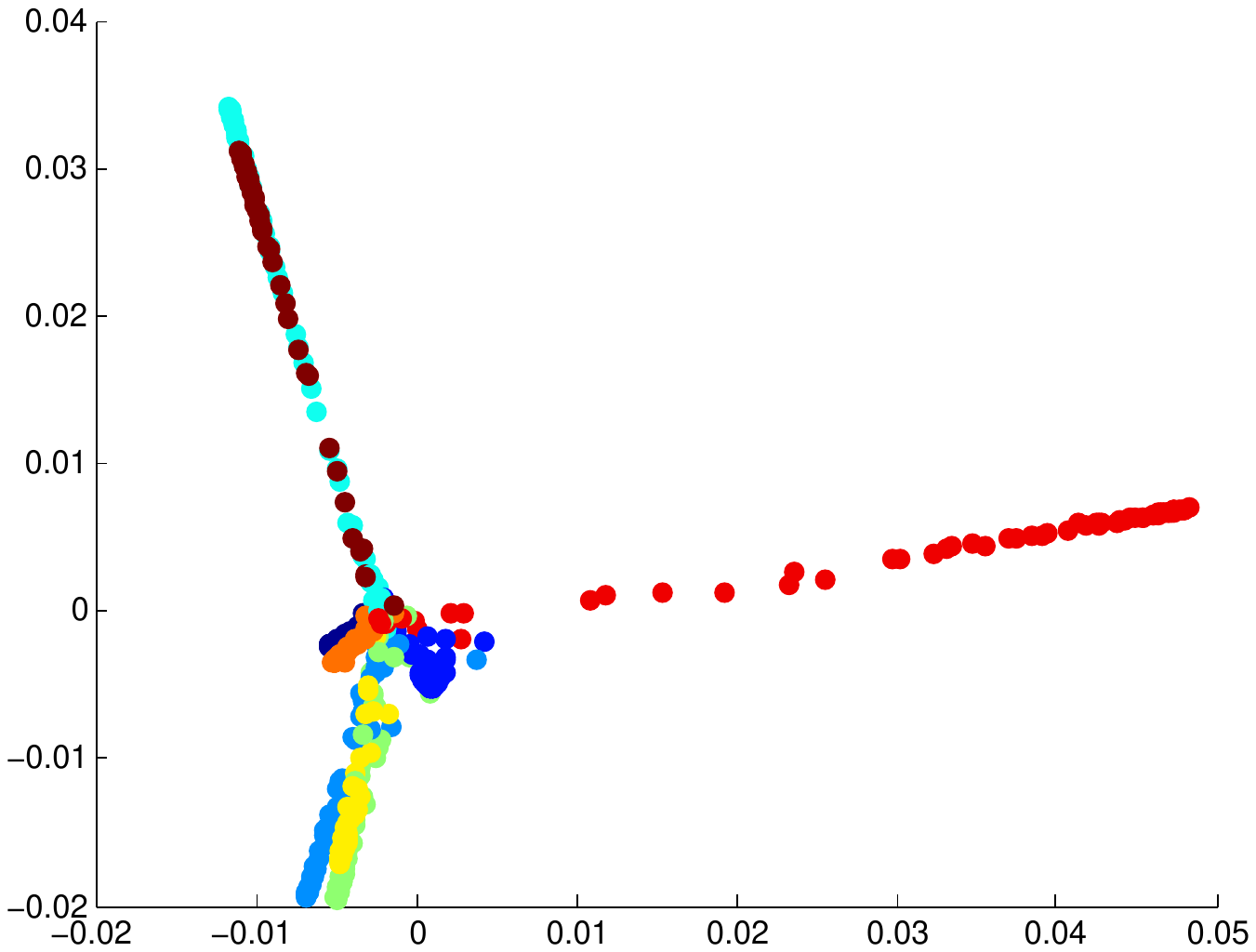}}
  \subfloat[Ground truth (iter=12).] {\label{fig:9sub12_g} \includegraphics[angle=0, height=0.2\textwidth, width=.26\textwidth]{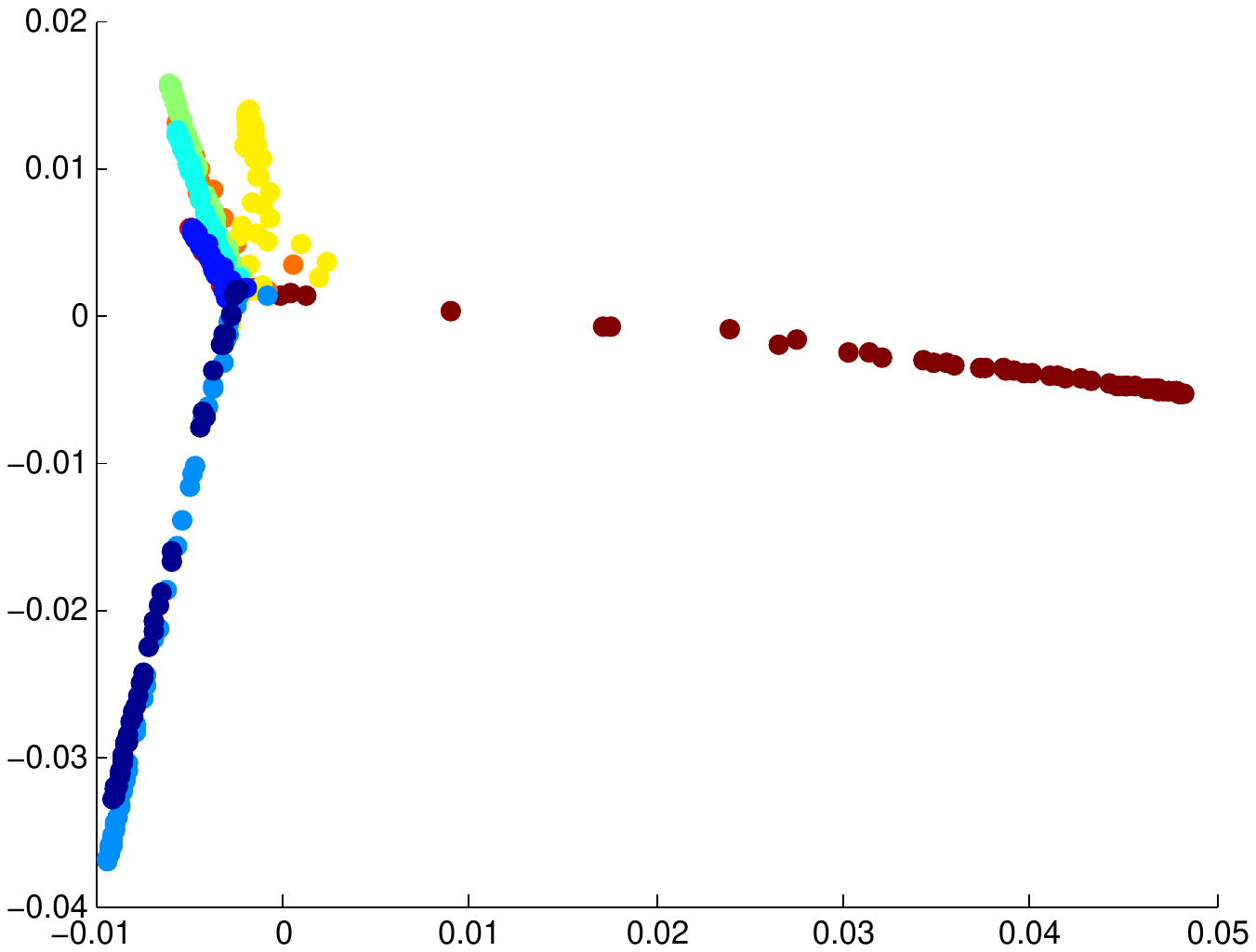}}
 \subfloat[$e=\mathbf{4.94}\%$ (iter=12).] {\label{fig:9sub12_c} \includegraphics[angle=0, height=0.2\textwidth, width=.25\textwidth]{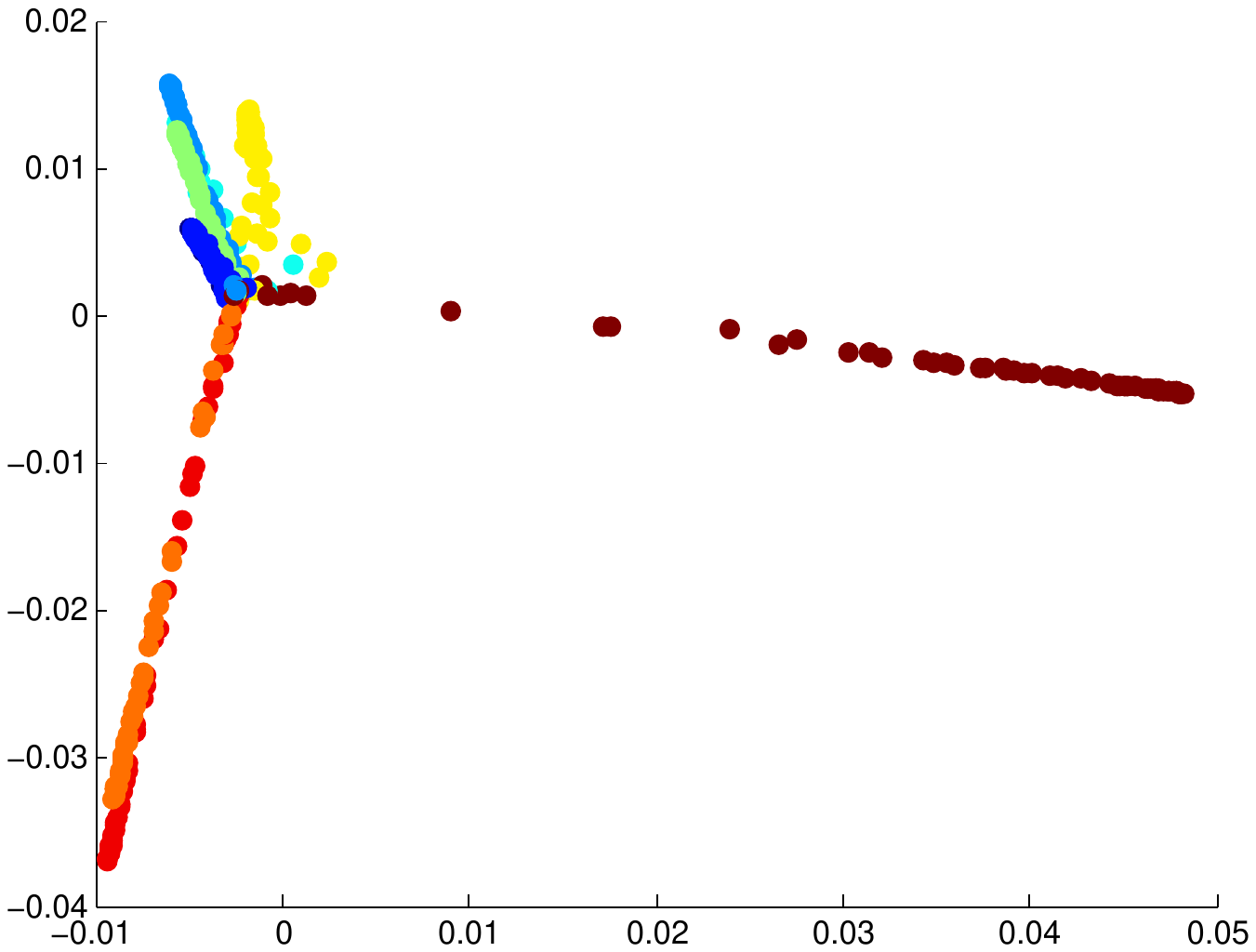}\hspace{30pt}}\\
  \subfloat[Misclassification rate.] {\label{fig:9sub_error} \includegraphics[angle=0, height=0.25\textwidth, width=.35\textwidth]{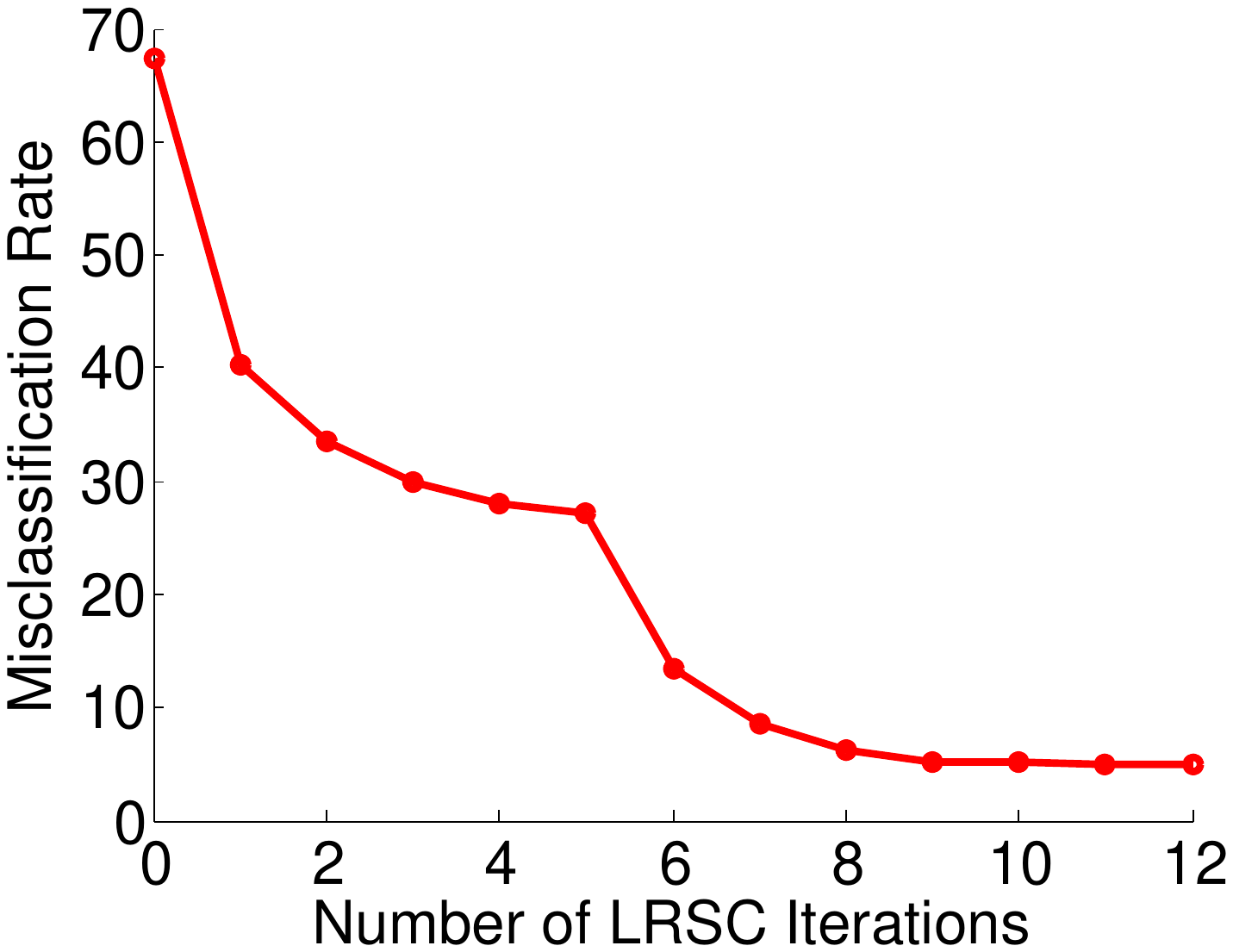}}
    \subfloat[Convergence of $\mathbf{T}$ updating] {\label{fig:conv_T_yale} \includegraphics[angle=0, height=0.25\textwidth, width=.35\textwidth]{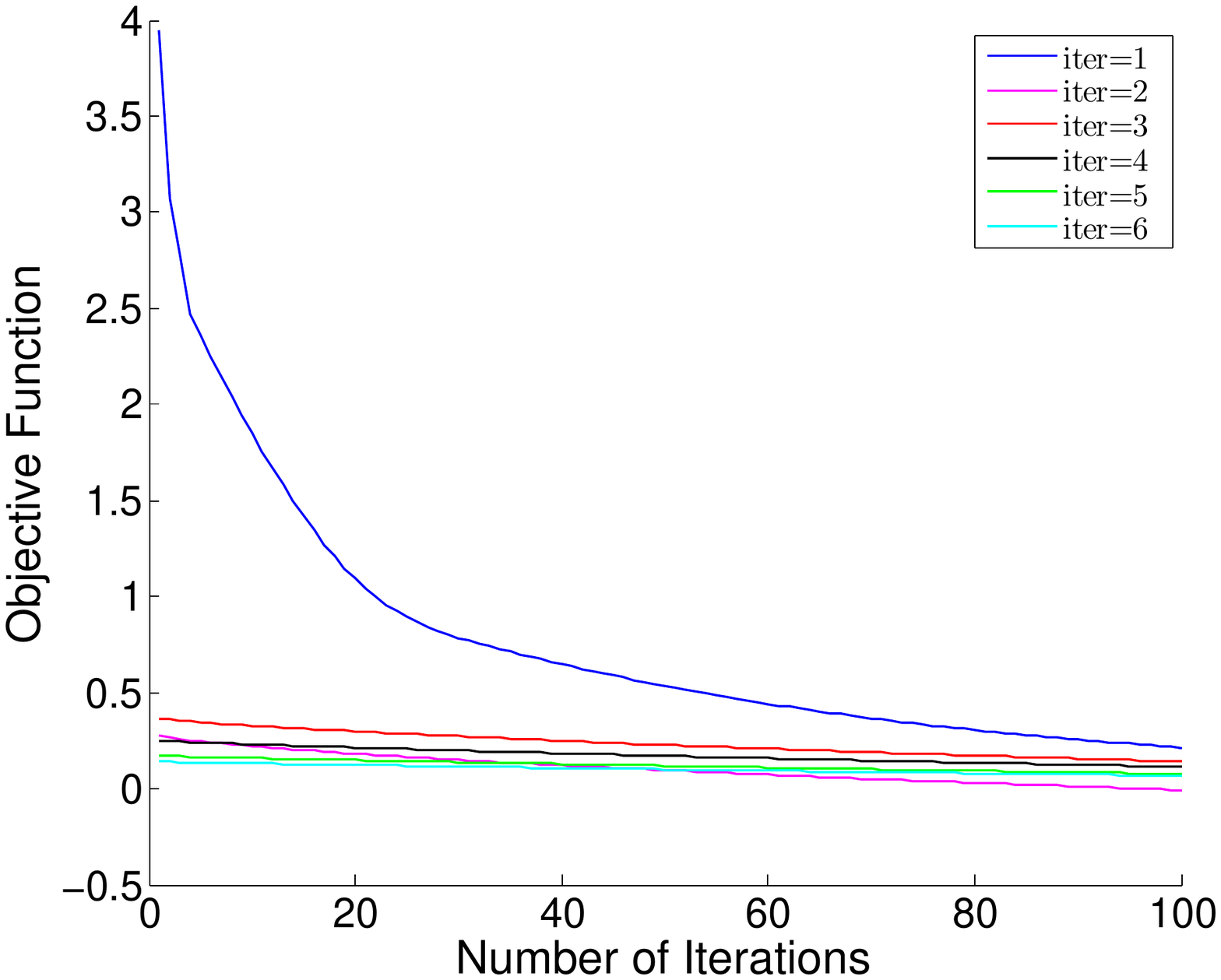}}\\
\caption{Misclassification rate (\emph{e}) on clustering 9 subjects using the proposed LRSC framework. We adopt the proposed R-SSC technique for the clustering step.
With the proposed LRSC framework, the clustering error of R-SSC is further reduced significantly, e.g., from $67.37 \%$ to $4.94 \%$ for the 9-subject case. Note how the classes are clustered in clean subspaces in the transformed domain.
}
\label{fig:9sub_lrsc}
%\end{center}
\end{figure*}

\begin{figure*} [ht]
\centering
 \subfloat[Original smallest angles.] {\label{fig:9sub0_angle} \includegraphics[angle=0, height=0.33\textwidth, width=.33\textwidth]{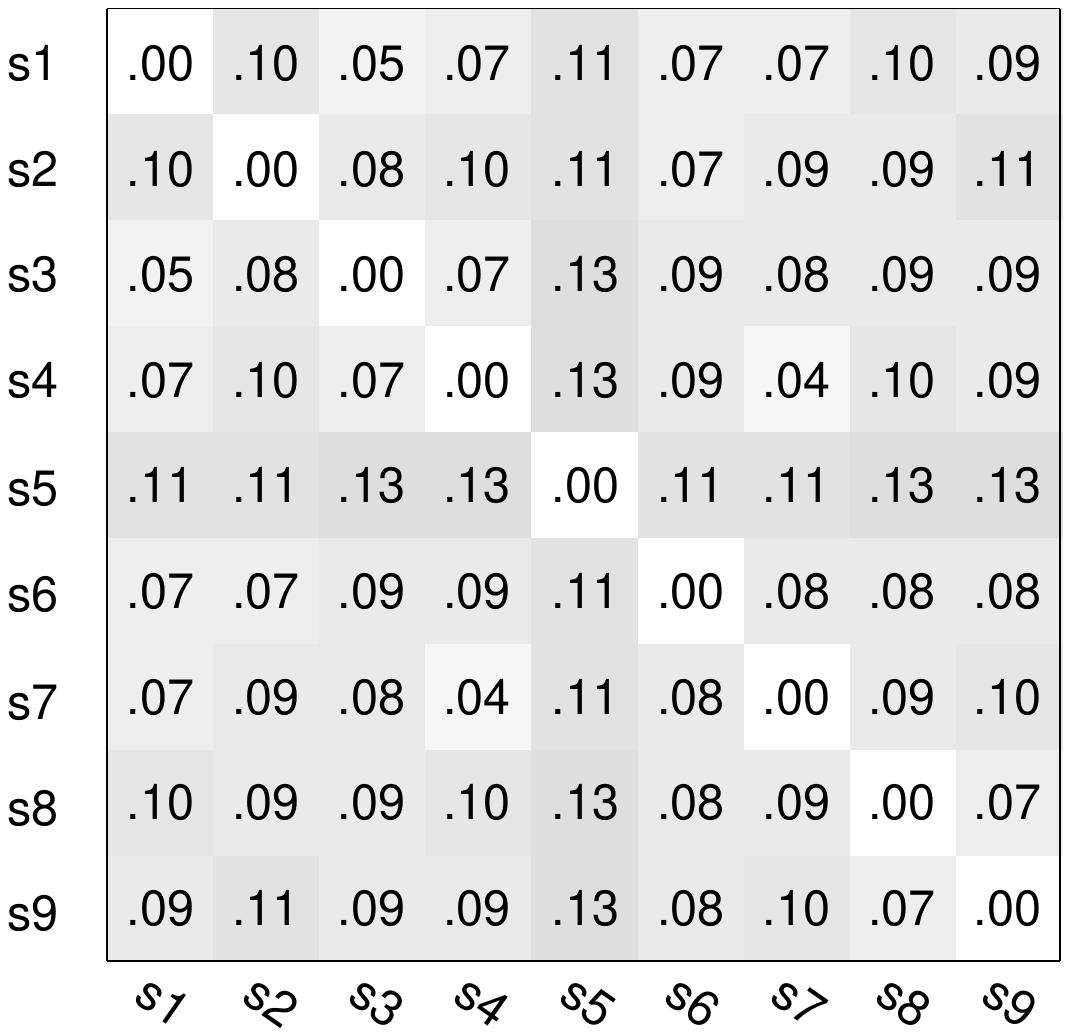}}
 \subfloat[Transformed smallest angles.] {\label{fig:9sub12_angle} \includegraphics[angle=0, height=0.33\textwidth, width=.33\textwidth]{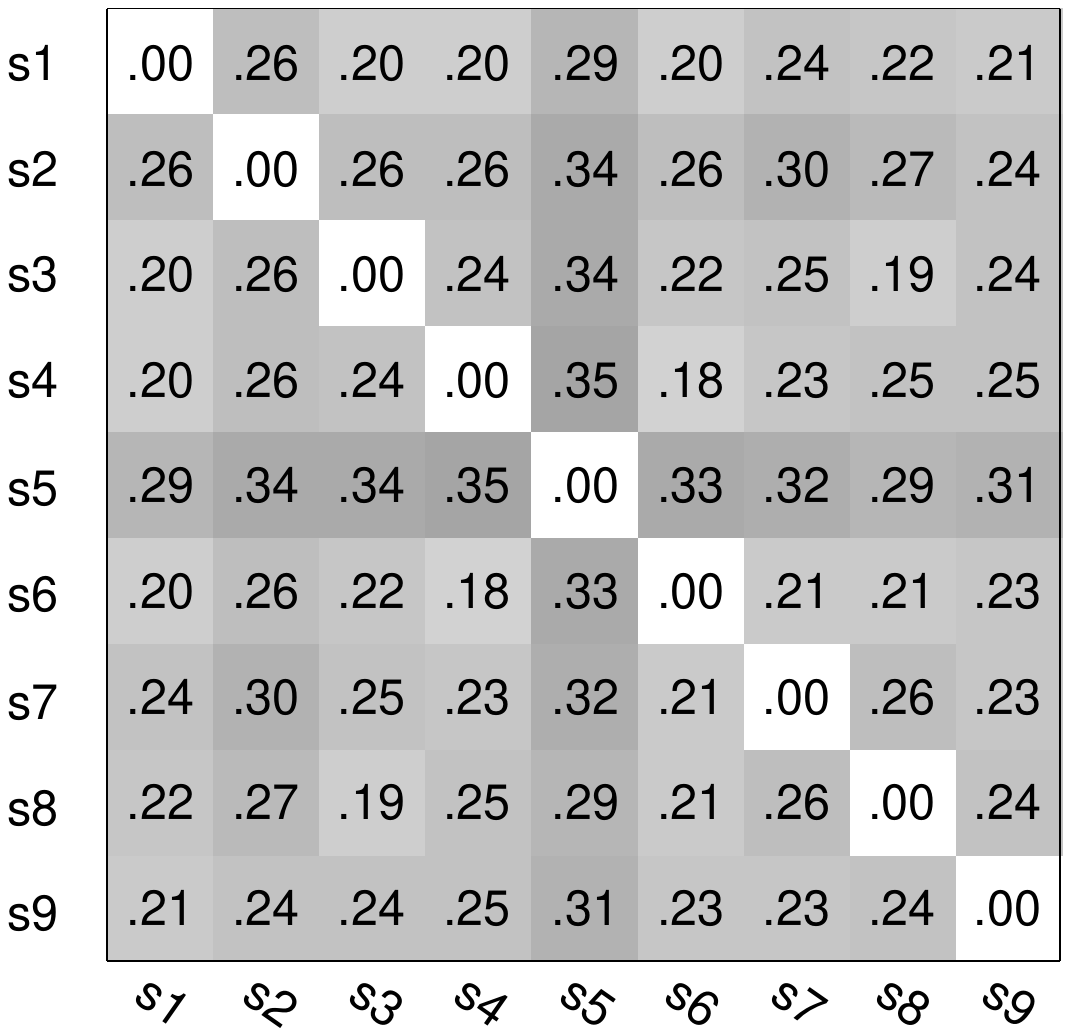}} \\
  \subfloat[Original mean cosine angles.] {\label{fig:9sub0_angle} \includegraphics[angle=0, height=0.33\textwidth, width=.33\textwidth]{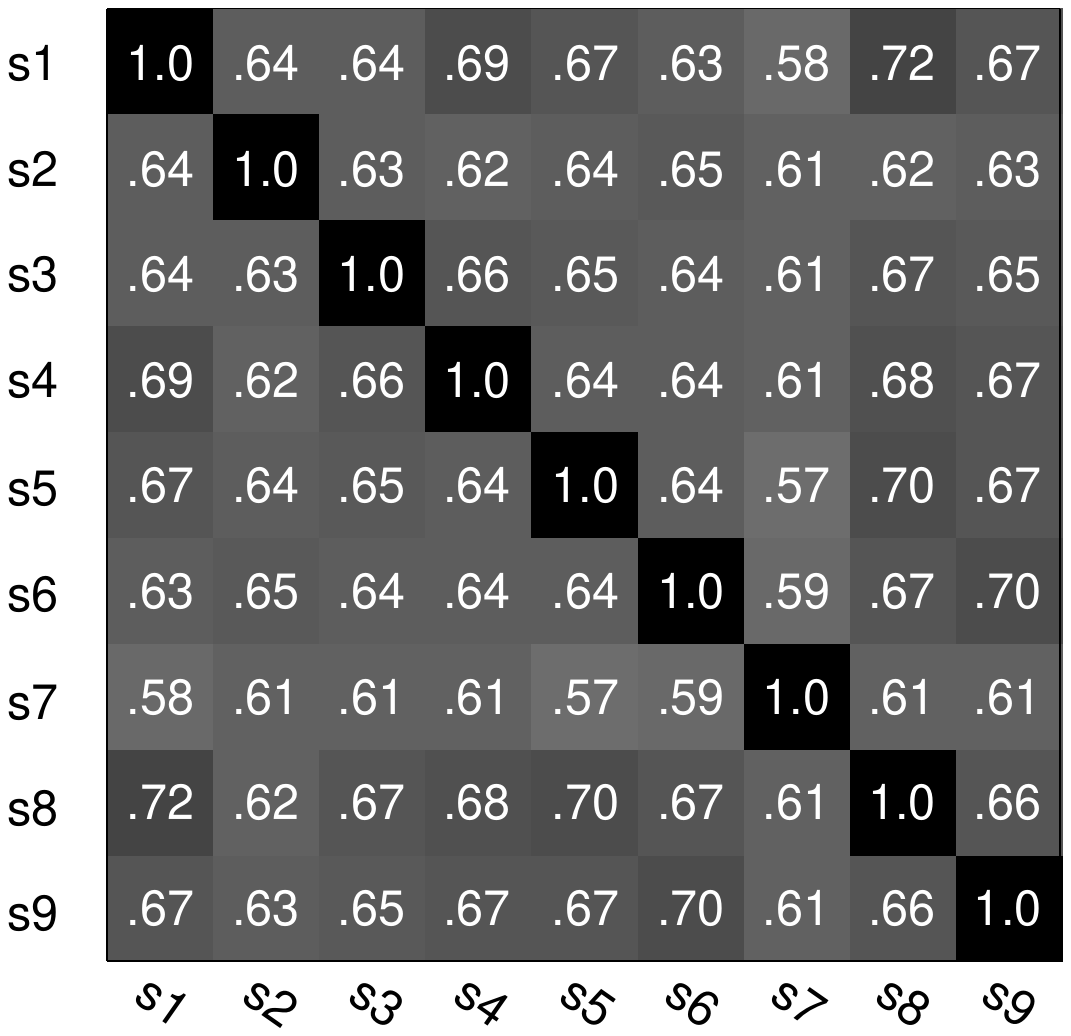}}
 \subfloat[Transformed mean cosine angles.] {\label{fig:9sub12_angle} \includegraphics[angle=0, height=0.33\textwidth, width=.34\textwidth]{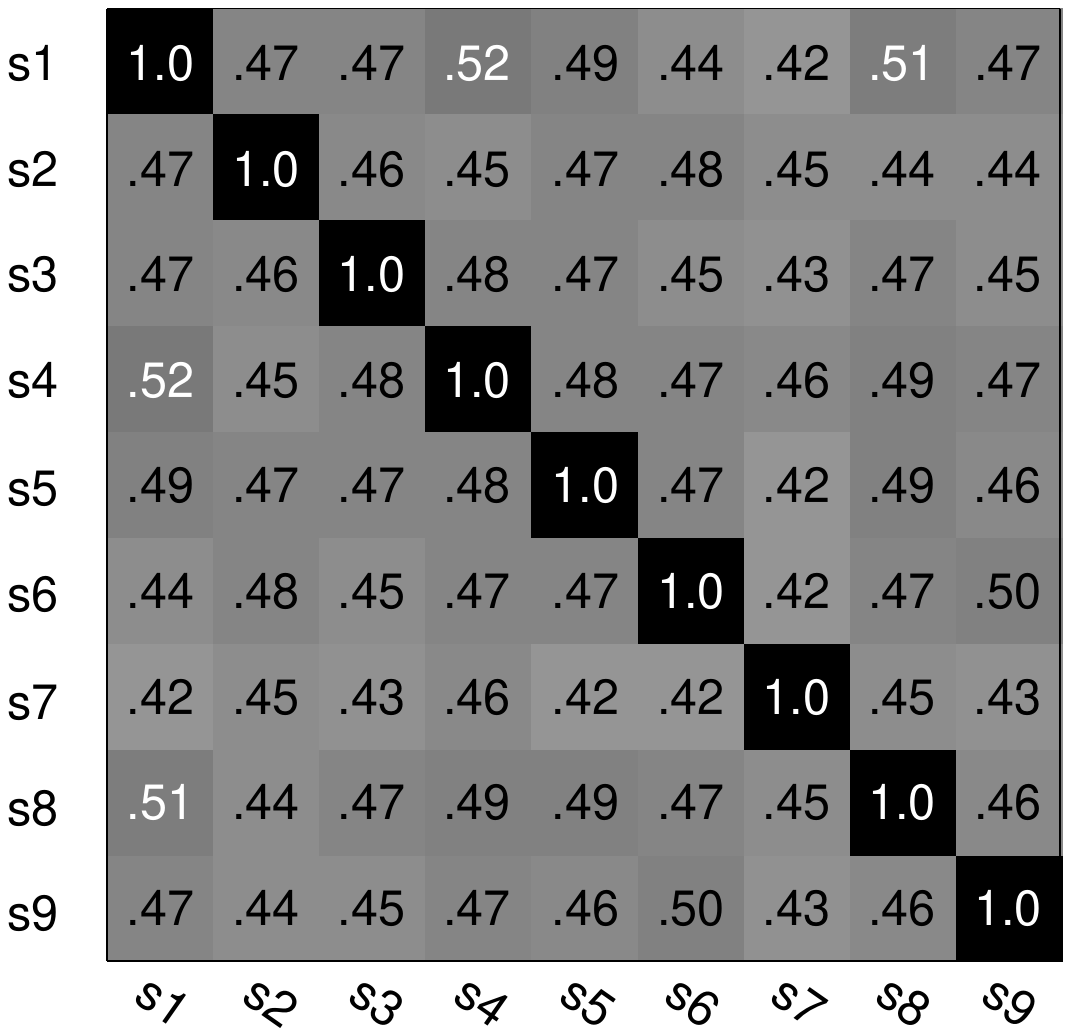}}
 \subfloat[Subspace nuclear norm.] {\label{fig:9sub_nuorm} \includegraphics[angle=0, height=0.33\textwidth, width=.32\textwidth]{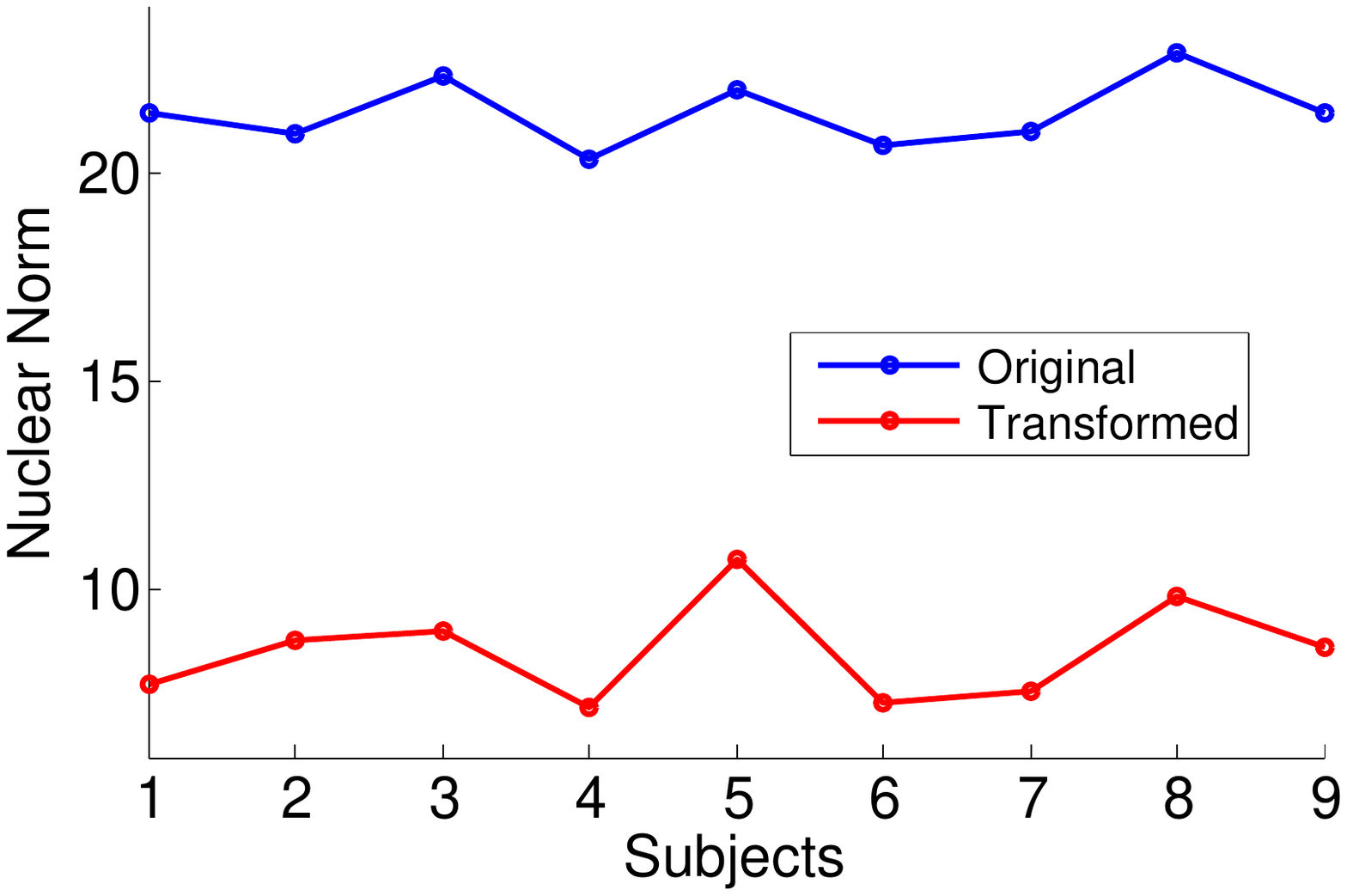}}
\caption{The smallest and mean principal angles between pairs of 9 subject subspaces and the nuclear norms of 9 subject subspaces before and after transformation.
Note that each entry in (a) and (b) denotes the smallest principal angle, and each entry in (c) and (d) denotes the average cosine over all principal angles.
%i.e., $\frac{1}{K} \sum_{k=1}^K \cos \theta^{(k)}_{ij} $, where $\theta^{(k)}_{ij}$ is the $k$-th principle angle between two subspaces $\mathcal{S}_i$ and $\mathcal{S}_j$.
We observe that the learned subspace transformation increases the angles between subspaces and also reduces the nuclear norms of subspaces.
Overall, the average smallest principal angles between subspaces increased from 0.09 to 0.26, and the average subspace nuclear norm decreased from 21.43 to 8.53.
}
\label{fig:9sub_angle}
%\end{center}
\end{figure*}

 In the Extended YaleB dataset, each of the 38 subjects is imaged under 64 lighting conditions, shown in Fig.~\ref{fig:yalelight}.
{ Under the assumption of Lambertian reflectance,  face images of each subject under different lighting conditions can be accurately approximated with a 9-dimensional linear subspace (\cite{9point}).
 }
  We conduct the face clustering experiments on the first 9 subjects shown in Fig.~\ref{fig:yalesub}.
  We set the sparsity value $K=10$ for R-SSC, and perform $100$ iterations for the subgradient descent updates while learning the transformation.

Fig.~\ref{fig:9sub_sc} shows error rate (\emph{e}) and running time (\emph{t}) on clustering
 subspaces of 9 subjects using different subspace clustering methods. The proposed R-SSC techniques outperforms state-of-the-art methods both in accuracy and running time.
As shown in Fig.~\ref{fig:9sub_lrsc}, using the proposed LRSC algorithm  (that is, learning the transform), the misclassification errors of R-SSC are further reduced significantly, for example, from $67.37 \%$ to $4.94 \%$ for the 9 subjects.
{
Fig.~\ref{fig:conv_T_yale} shows the convergence of the $\mathbf{T}$ updating step in the first few LRSC iterations.
The dramatic performance improvement can be explained in Fig.~\ref{fig:9sub_angle}. We observe, as expected from the theory presented before, that the learned subspace transformation increases the distance (the smallest principal angle) between subspaces and, at the same time, reduces the nuclear norms of subspaces.
More results on clustering subspaces of 2 and 3 subjects are shown in Fig.~\ref{fig:yaleacc}.
}

\begin{figure*} [ht]
\centering
 \subfloat[Subjects \{1, 2\}.] {\label{fig:y12-1} \includegraphics[angle=0, height=0.15\textwidth, width=.7\textwidth]{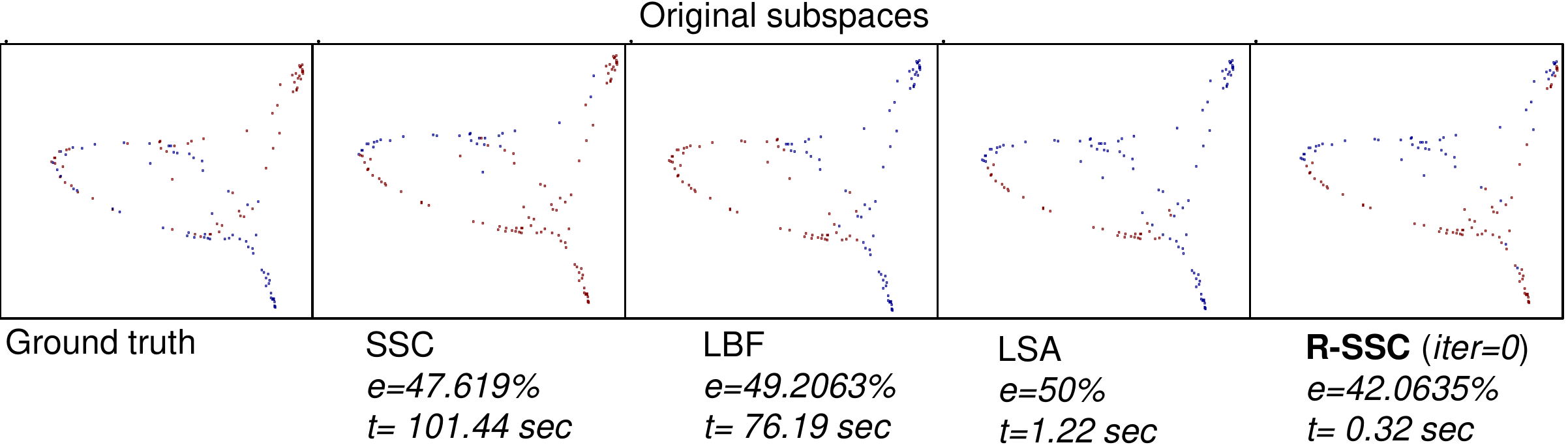} }
 \subfloat[Subjects \{1, 2\}.] {\label{fig:y12-2} \includegraphics[angle=0, height=0.15\textwidth, width=.3\textwidth]{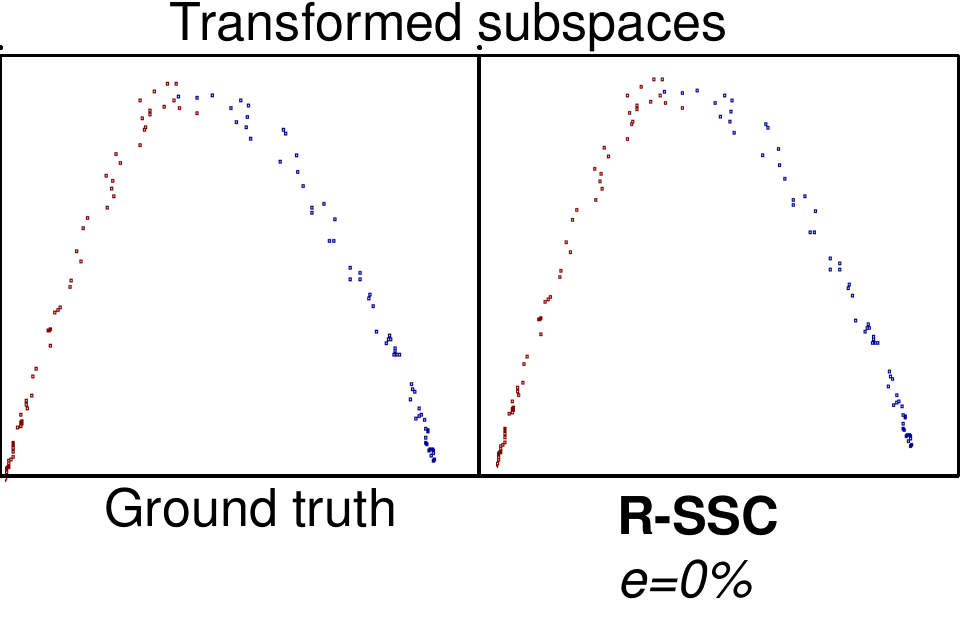}} \\
  \subfloat[Subjects \{2, 3\}.] {\label{fig:y23-1} \includegraphics[angle=0, height=0.15\textwidth, width=.7\textwidth]{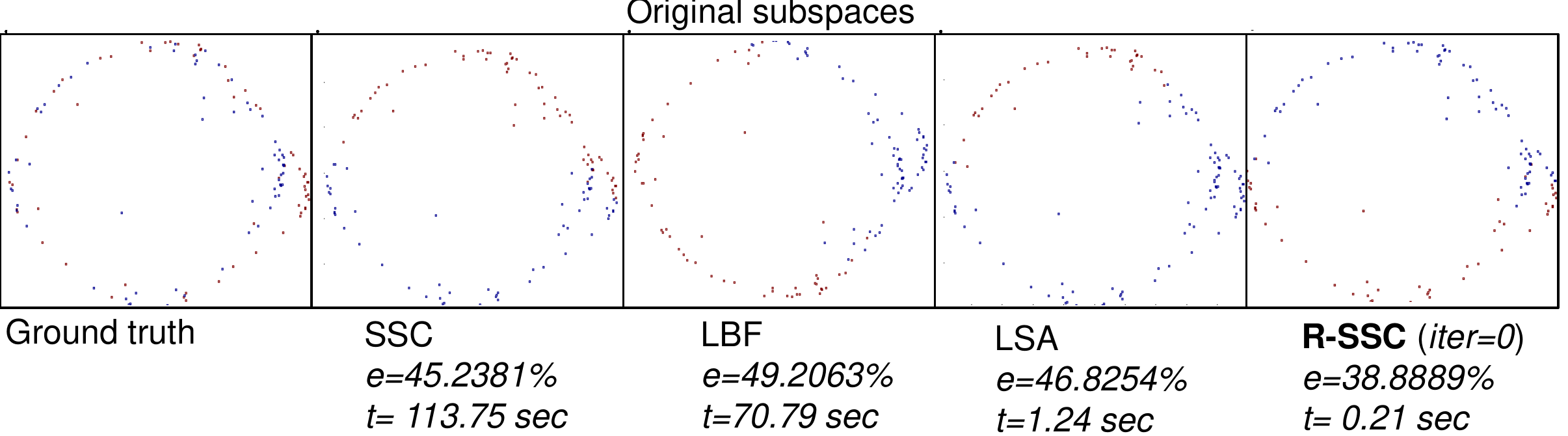}}
 \subfloat[Subjects \{2, 3\}.] {\label{fig:y23-2} \includegraphics[angle=0, height=0.15\textwidth, width=.3\textwidth]{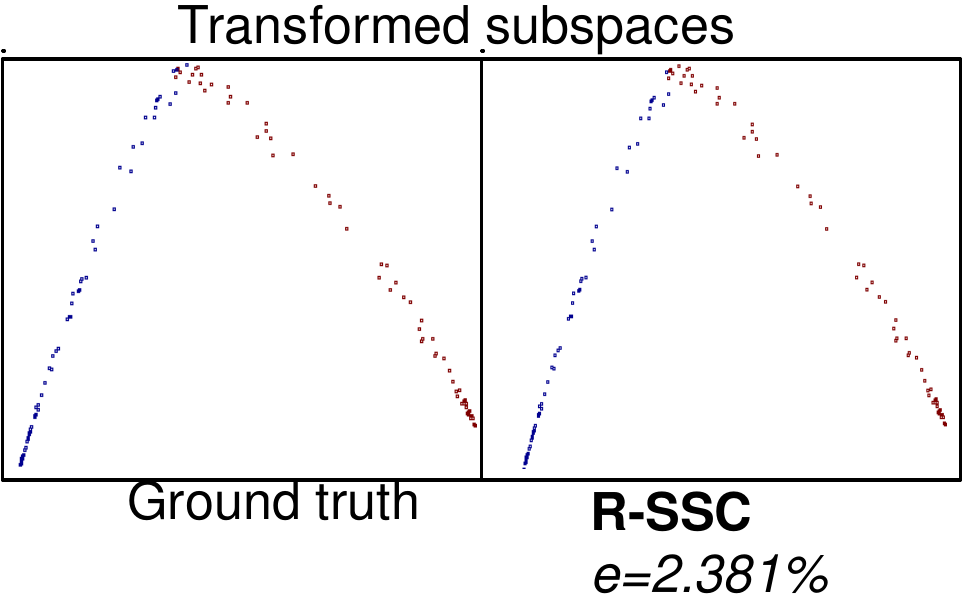} } \\
 \subfloat[Subjects \{4, 5, 6\}.] {\label{fig:y456-1} \includegraphics[angle=0, height=0.15\textwidth, width=.7\textwidth]{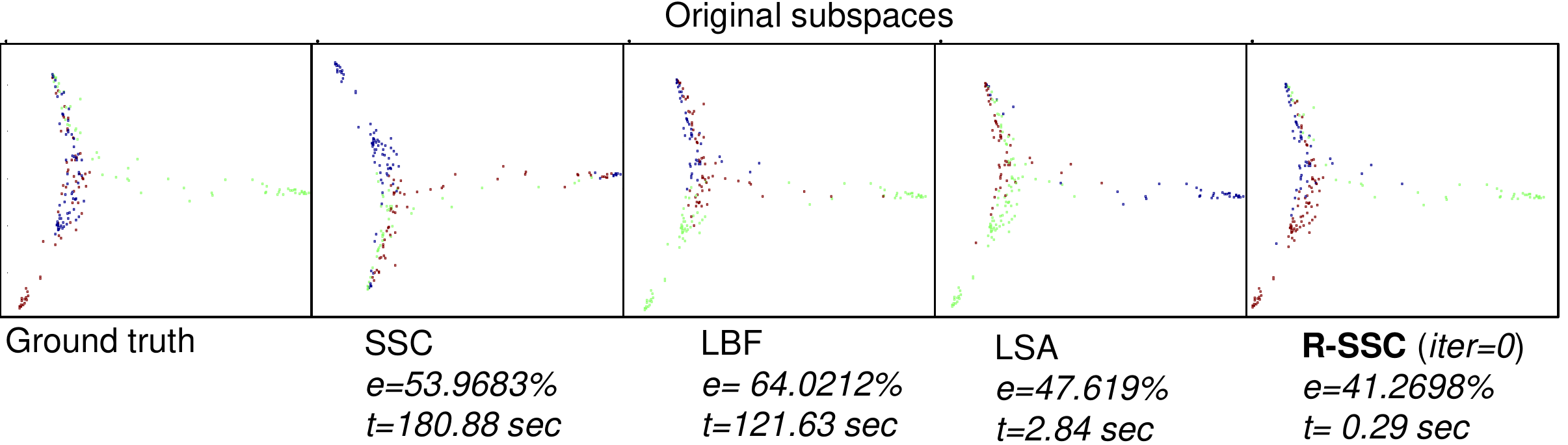}}
  \subfloat[Subjects \{4, 5, 6\}.] {\label{fig:y456-2} \includegraphics[angle=0, height=0.15\textwidth, width=.3\textwidth]{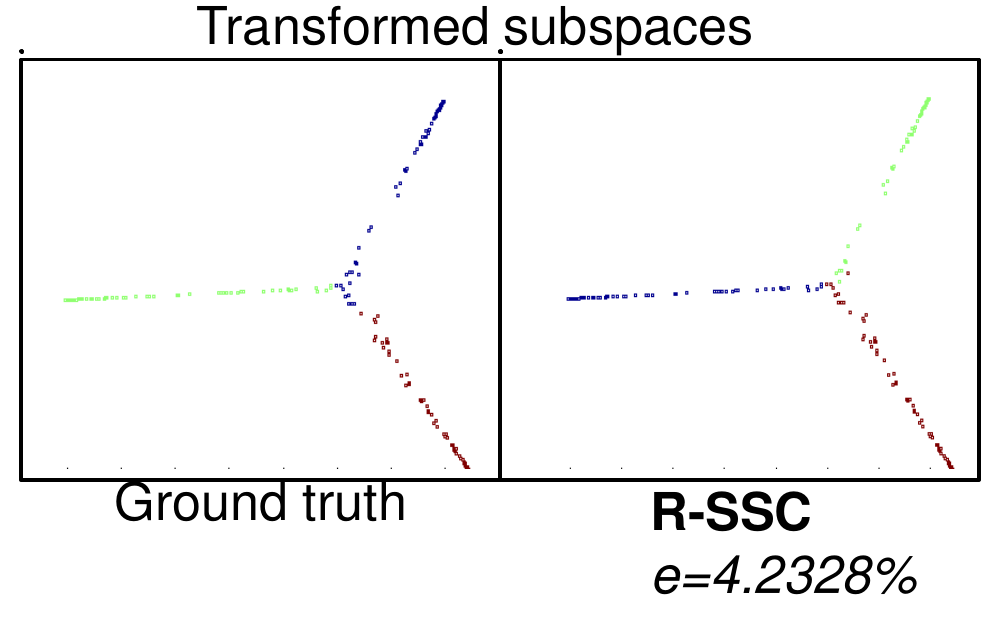}} \\
 \subfloat[Subjects \{7, 8, 9\}.] {\label{fig:y789-1} \includegraphics[angle=0, height=0.15\textwidth, width=.7\textwidth]{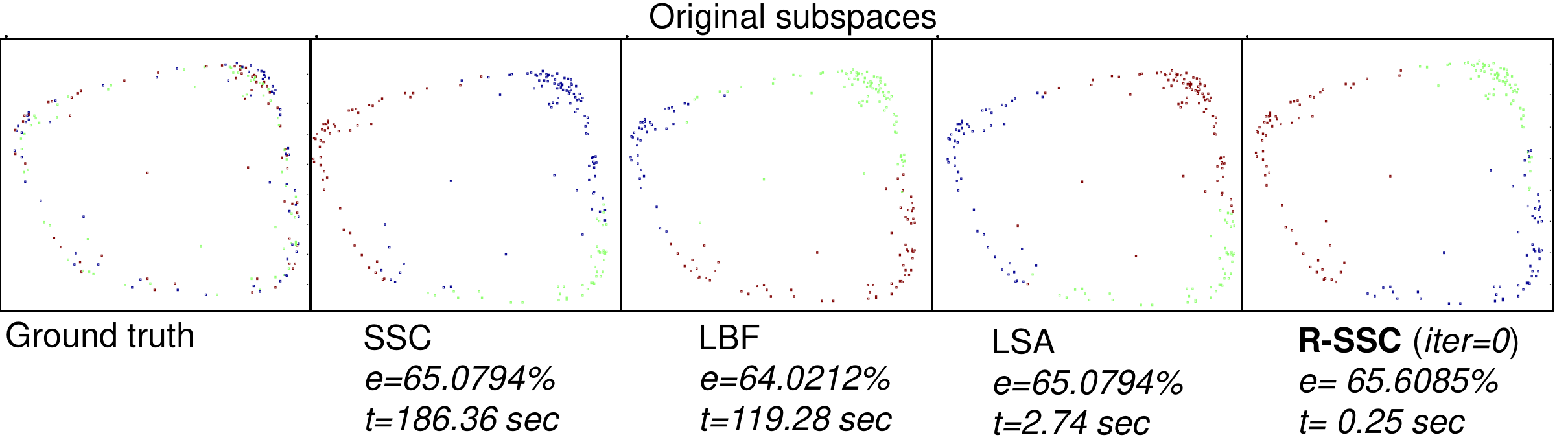}}
  \subfloat[Subjects \{7, 8, 9\}.] {\label{fig:y789-2} \includegraphics[angle=0, height=0.15\textwidth, width=.3\textwidth]{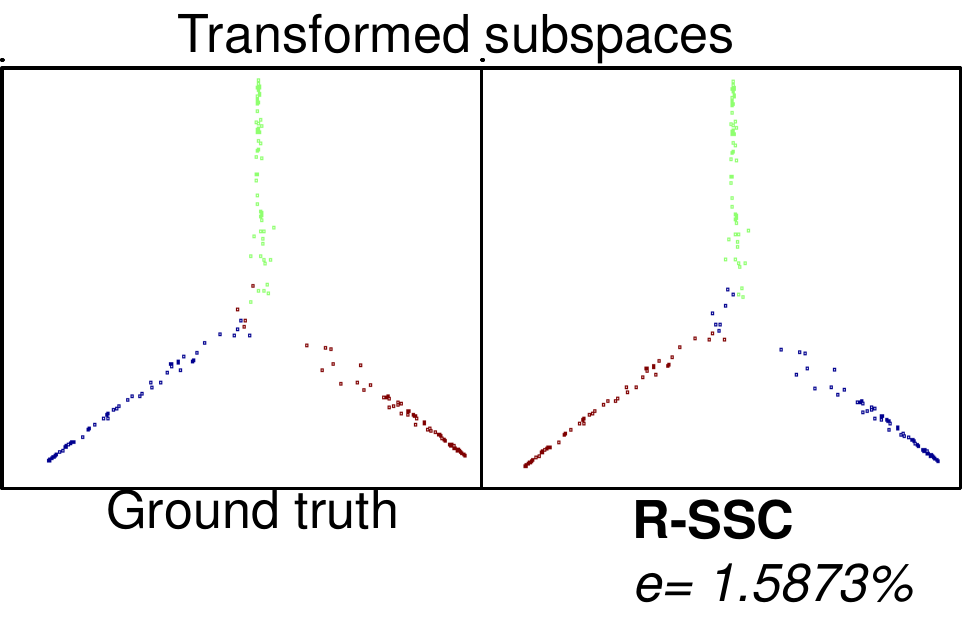}}
\caption{Misclassification rate (\emph{e})  and running time (\emph{t}) on clustering 2 and 3 subjects. The proposed R-SSC outperforms state-of-the-art methods both in accuracy and running time.
With the proposed LRSC framework, the clustering error of R-SSC is further reduced significantly. Note how the classes are clustered in clean subspaces in the transformed domain (best viewed zooming on screen).
}
\label{fig:yaleacc}
%\end{center}
\end{figure*}

\begin{table}[ht]
%\begin{center}
\centering
	\caption{{
Misclassification rate ($e\%$) on clustering different number of subjects in the Extended YaleB face dataset,
$[1:c]$ denotes the first $c$ subjects in the dataset. For all cases, the proposed LRSC method significantly outperforms state-of-the-art methods.
}}
{%\small
	\begin{tabular}{l|l|l|l|l|l|l}
	\hline
Subsets & [1:10] & [1:15] & [1:20] & [1:25] & [1:30] & [1:38]  \\
	\hline
$C$ & 10 & 15 & 20 & 25 & 30 & 38 \\
	\hline
 \hline
LSA &  78.25& 82.11 & 84.92 & 82.98 & 82.32 & 84.79    \\
LBF & 78.88 & 74.92 & 77.14 & 78.09 &  78.73&  79.53    \\
LRSC & \textbf{5.39} & \textbf{4.76} & \textbf{9.36} & \textbf{8.44} & \textbf{8.14} & \textbf{11.02}     \\
\hline
	\end{tabular}
}	
	\label{tab:yaleb2}
%\end{center}
\end{table}

{
Table~\ref{tab:yaleb2} shows misclassification rate (\emph{e}) on clustering subspaces of different number of subjects, $[1:c]$ denotes the first $c$ subjects in the extended YaleB dataset. For all cases, the proposed LRSC method significantly outperforms state-of-the-art methods.
Note that without the low-rank decomposition step in (\ref{rpca}), we obtain a misclassification rate 18.38\% for clustering all 38 subjects in the Extended YaleB dataset, which is slightly lower than the 11.02\% reported in Table~\ref{tab:yaleb2}.
Thus, pushing the subspaces apart through our learned transformation plays a major role here; and the robustness in the low-rank decomposition enhances the performance even further.
}

\begin{table}[ht]
%\begin{center}
\centering
	\caption{{
Misclassification rate ($e\%$) on clustering 38 subjects in the Extended YaleB dataset using supervised transformation learning.
The proposed transformation learning outperforms both the closed-form orthogonalizing transformation and LDA on clustering the transformed data.
}}
{%\small
	\begin{tabular}{|l|l|}
	\hline
Methods & Misclassification (\%)\\
	\hline
	\hline
orthogonalizing &  61.36   \\
LDA &   9.77  \\
Proposed &  \textbf{5.47 }  \\
\hline
	\end{tabular}
}	
	\label{tab:yaleb3}
%\end{center}
\end{table}

{
In Fig.~\ref{fig:2plane} and Fig.~\ref{fig:4line}, using synthetic examples, we previously compared our learned transformation with the closed-form orthogonalizing transformation and LDA. In Table~\ref{tab:yaleb3}, we further compare three transformations using real data. We perform supervised transformation learning on all 38 subjects in the Extended YaleB dataset using three different transformation learning algorithms, and then perform subspace clustering on the transformed data. The proposed transformation learning significantly outperforms the other two methods.
}

\subsection{Application to Motion Segmentation}

\begin{table}[ht]
%\begin{center}
\centering
	\caption{ { Misclassification rate ($e\%$) on two motions and three motions segmentation in the Hopkins 155 dataset.
 As shown in \cite{SubspaceClustering, SLBF}, the SSC method significantly outperforms all previous state-of-the-art methods on this dataset.
The proposed LRSC shows comparable results to SSC for two motions and outperforms SSC for three motions. Note that our method is orders of magnitude faster than SSC.} }
{%\small
	\begin{tabular}{l|ll|ll|ll|ll}
	\hline
 & \multicolumn{2}{c|}{Check}  & \multicolumn{2}{c|}{Traffic} & \multicolumn{2}{c|}{Articulated} & \multicolumn{2}{c}{All}  \\
 	\hline
 & Mean& Median & Mean & Median& Mean& Median& Mean & Median\\
	\hline
 	\hline
 \multicolumn{9}{l}{\textbf{2-motion}}\\
  	\hline
LSA & 2.57& 0.27 &5.43 & 1.48 & 4.10& 1.22&3.45 & 0.59  \\
LBF & 1.59& 0 &0.20  &0 &0.80 & 0& 1.16 & 0\\
SSC & 1.12& 0 & 0.02 & 0& 0.62& 0& \textbf{0.82} & 0\\
LRSC& 1.19 & 0 & 0.23 & 0 & 0.88 & 0 & 0.92 & 0\\
 	\hline
 \multicolumn{9}{l}{\textbf{3-motion}}\\
  	\hline
LSA & 5.80& 1.77 & 25.07 &23.79 & 7.25& 7.25& 9.73 &2.33 \\
LBF & 4.57& 0.94 & 0.38 &0 & 2.66 & 2.66& 3.63& 0.64  \\
SSC & 2.97& 0.27 & 0.58 & 0& 1.42& 0& 2.45 & 0.2\\
LRSC&1.59 & 0 & 0.32 & 0 & 1.60& 1.60 & \textbf{1.34} & 0\\
 \hline
	\end{tabular}
}	
	\label{tab:Hopkins155}
%\end{center}
\end{table}

{

The Hopkins 155 dataset consists of three types of videos: checker, traffic and articulated, and  120 of the videos have two motions and 35 of the videos have three motions.
The main task is to segment a video sequence of multiple rigidly moving objects into multiple spatiotemporal regions that correspond to different motions in the scene.
This motion dataset contains much cleaner subspace data than the digits and faces data evaluated above. To enable a fair comparison, we project the data into a lower dimensional subspace using PCA as explained in \cite{SubspaceClustering, SLBF}.
 Results on other comparing methods are taken from \cite{SubspaceClustering}.
 As shown in \cite{SubspaceClustering, SLBF}, the SSC method significantly outperforms all previous state-of-the-art methods on this dataset.
 From Table~\ref{tab:Hopkins155}, we can see that our method shows comparable results to SSC for two motions and outperforms SSC for three motions.  Note that our method is orders of magnitude faster than SSC as discussed  earlier.
}

\subsection{Application to Face Recognition across Illumination}

For the Extended YaleB dataset, we adopt a similar setup as described in \cite{lcksvd,Zhang10}.  We split the dataset into two halves by randomly selecting 32 lighting conditions for training, and the other half for testing.
We learn a global low-rank transformation matrix from the training data.

We report recognition accuracies in Table~\ref{tab:yaleacc}. We make the following observations. First,
 the recognition accuracy is increased from $91.77\%$ to $99.10\%$ by simply applying the learned transformation matrix to the original face images.
 Second, the best accuracy is obtained by first recovering the low-rank subspace for each subject, e.g., the third row in Fig.~\ref{fig:YaleShareTrain}. Then, each transformed testing face,  e.g., the second row in Fig.~\ref{fig:YaleShareTest}, is sparsely decomposed over the low-rank subspace of each subject through OMP, and classified to the subject with the minimal reconstruction error. A sparsity value 10 is used here for OMP.
 As shown in Fig.~\ref{fig:YaleSTavglowrank}, the low-rank representation for each subject shows reduced variations caused by illumination.
 Third, the global transformation performs better here than class-based transformations, which can be due to the fact that illumination in this dataset varies in a globally coordinated way across subjects.
   Last but not least, our method outperforms state-of-the-art sparse representation based face recognition methods.

\begin{table}[ht]
%\begin{center}
\centering
	\caption{Recognition accuracies (\%) under illumination variations for the Extended YaleB dataset. The recognition accuracy is increased from $91.77\%$ to $99.10\%$ by simply applying the learned low-rank transformation (LRT) matrix to the original face images.}
{%\small
	\begin{tabular}{|l|l|}
	\hline
Method & Accuracy (\%) \\
	\hline
 \hline
D-KSVD \cite{Zhang10} & 94.10 \\
LC-KSVD \cite{lcksvd} & 96.70 \\
SRC \cite{Wright09} & 97.20 \\
\hline
\hline
Original+NN & 91.77 \\
Class LRT+NN & 97.86 \\
Class LRT+OMP  & 92.43 \\
Global LRT+NN & 99.10 \\
Global LRT+OMP  & \textbf{99.51} \\
\hline
	\end{tabular}
}	
	\label{tab:yaleacc}
%\end{center}
\end{table}

\begin{figure*} [t]
\centering
\subfloat[Low-rank decomposition of globally transformed training samples] {\label{fig:YaleShareTrain} \includegraphics[angle=0, height=0.29\textwidth, width=.9\textwidth]{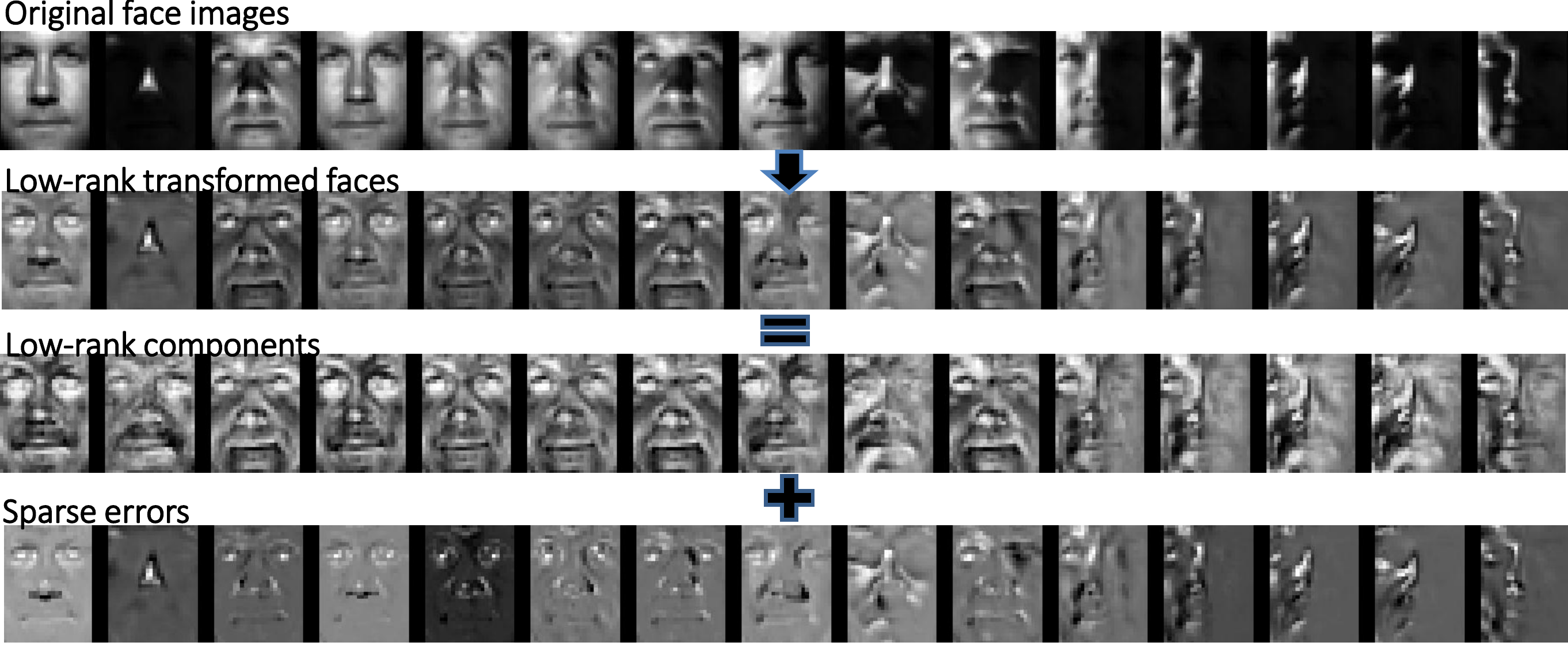} \hspace{0pt}} \\
\subfloat[Globally transformed testing samples] {\label{fig:YaleShareTest} \includegraphics[angle=0, height=0.15\textwidth, width=.9\textwidth]{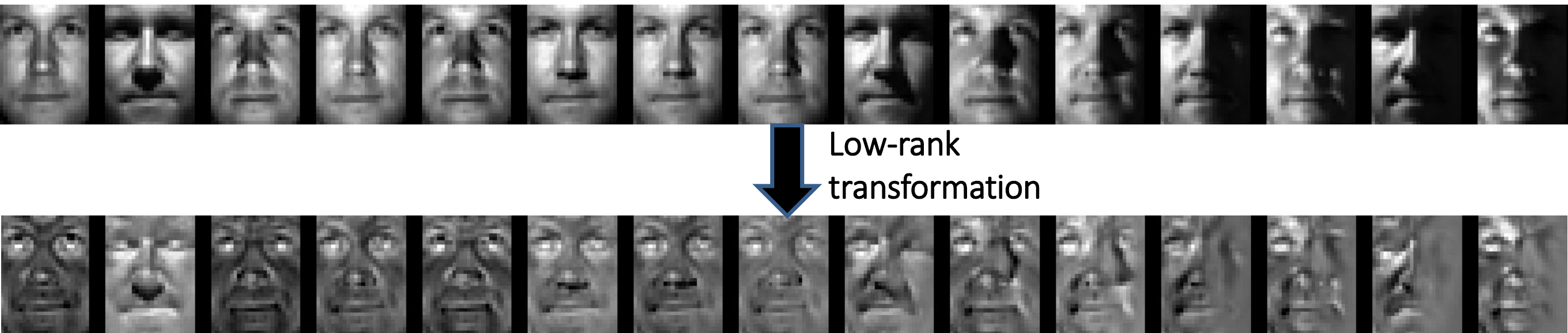}} \\
\subfloat[Mean low-rank components for subjects in the training data] {\label{fig:YaleSTavglowrank} \includegraphics[angle=0, height=0.08\textwidth, width=.9\textwidth]{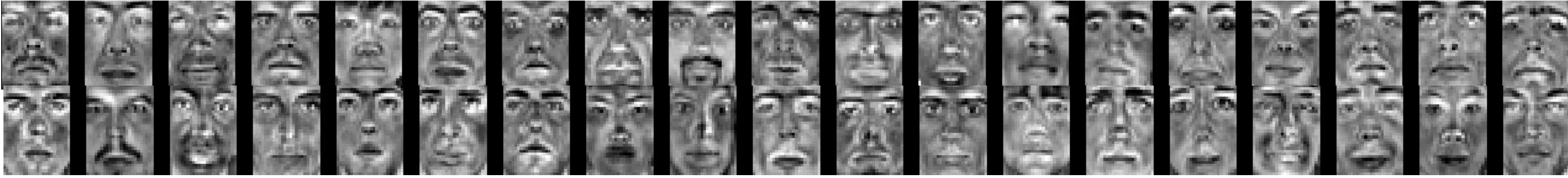}}
\caption{Face recognition across illumination using global low-rank transformation.}
\label{fig:rec_illum}
%\end{center}
\end{figure*}

\subsection{Application to Face Recognition across Pose}

\begin{table}[ht]
%\begin{center}
\centering
	\caption{Recognition accuracies (\%) under pose variations for the CMU PIE dataset.}
{%\small
	\begin{tabular}{|l|l|l|l|}
	\hline
Method & Frontal & Side & Profile \\
 & (c27) & (c05) & (c22) \\
	\hline
 \hline
SMD \cite{smd} & 83 & 82 & 57 \\
\hline
\hline
Original+NN & 39.85 & 37.65 & 17.06 \\
Original(crop+flip)+NN & 44.12 & 45.88 & 22.94 \\
Class LRT+NN &  98.97 & 96.91& 67.65\\
Class LRT+OMP  & \textbf{100} & \textbf{100} & \textbf{67.65}\\
Global LRT+NN & 97.06 & 95.58& 50\\
Global LRT+OMP  & 100 & 98.53& 57.35\\
\hline
	\end{tabular}
}	
	\label{tab:pieacc}
%\end{center}
\end{table}

\begin{figure*} [t]
\centering
\subfloat[Low-rank decomposition of class-based transformed training samples for \emph{subject3} ] {\label{fig:PIEClassTrain01} \includegraphics[angle=0, height=0.23\textwidth, width=.5\textwidth]{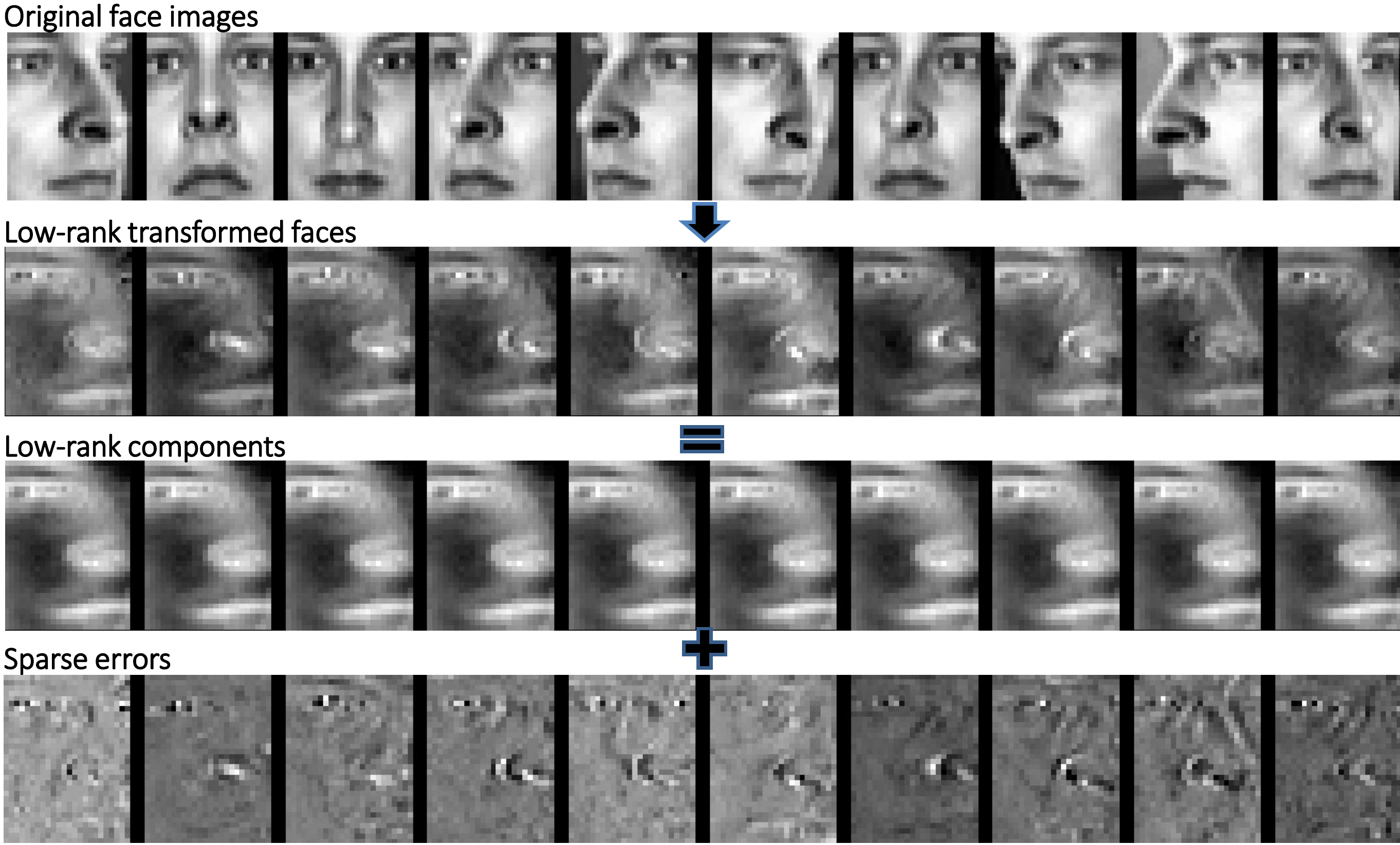} \hspace{0pt}}
\subfloat[Low-rank decomposition of class-based transformed training samples for \emph{subject1} ] {\label{fig:PIEClassTrain02} \includegraphics[angle=0, height=0.23\textwidth, width=.5\textwidth]{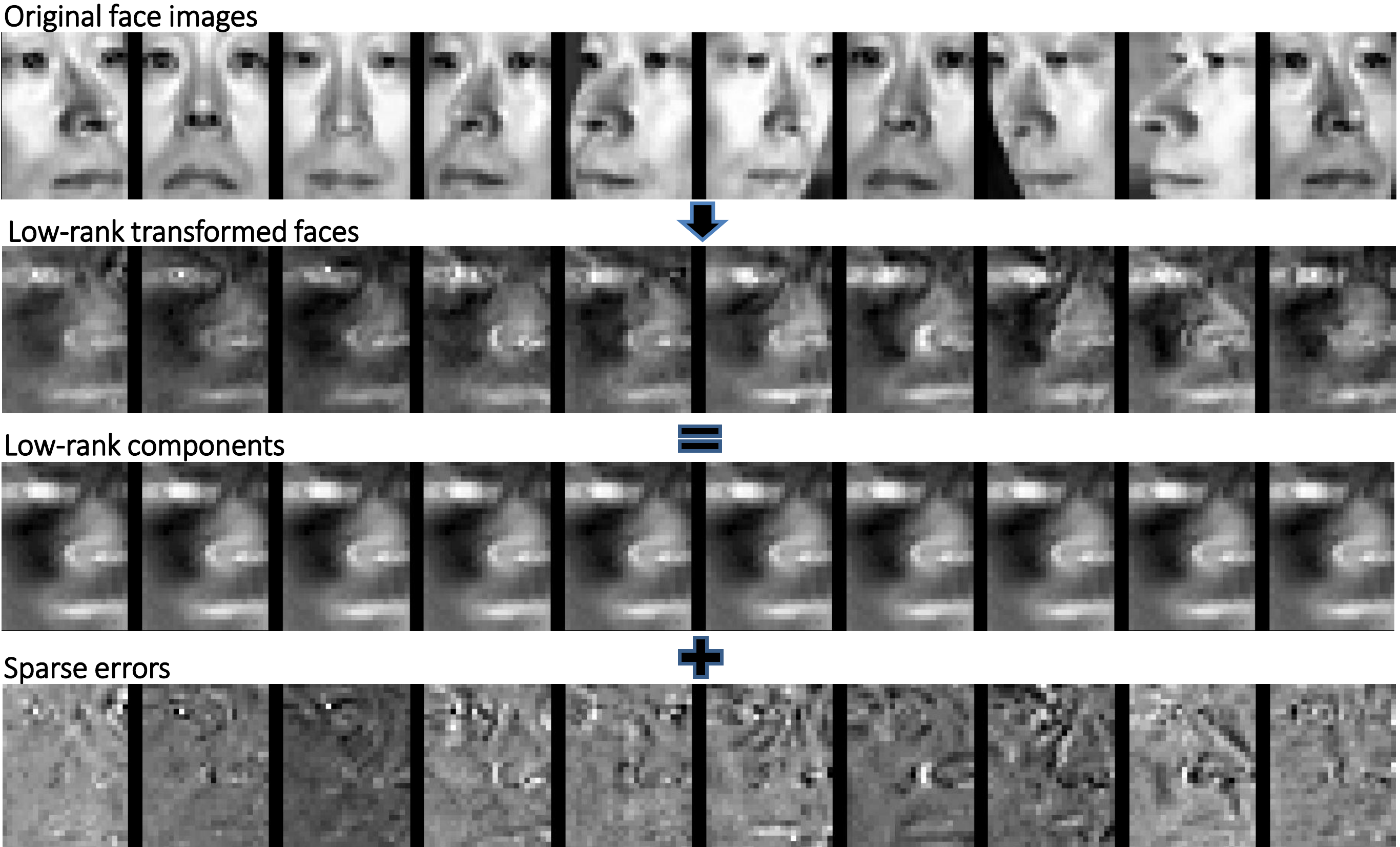} \hspace{0pt}} \\
\subfloat[class-based transformed testing samples for \emph{subject3}] {\label{fig:PIEClassTest01} \includegraphics[angle=0, height=0.06\textwidth, width=.4\textwidth]{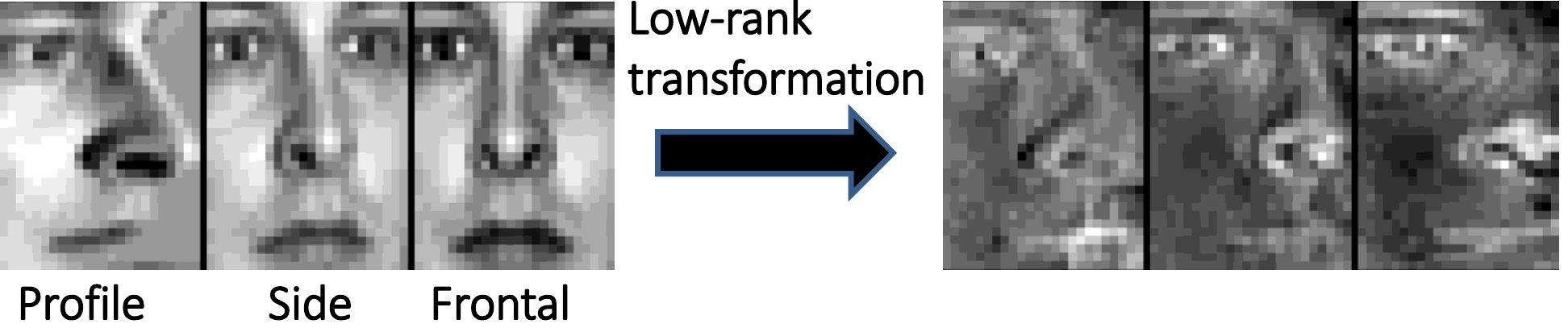} \hspace{10pt}}
\subfloat[class-based transformed testing samples for \emph{subject1}] {\label{fig:PIEClassTest02} \includegraphics[angle=0, height=0.06\textwidth, width=.4\textwidth]{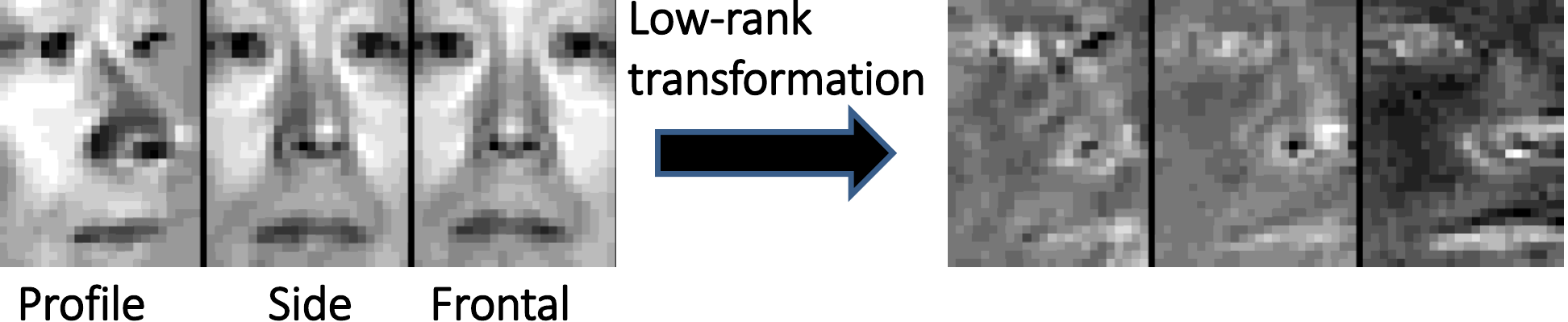}}
\caption{Face recognition across pose using class-based low-rank transformation. Note, for example in (c) and (d), how the learned transform reduces the pose-variability.}
\label{fig:rec_pose}
%\end{center}
\end{figure*}

We adopt the similar setup as described in \cite{smd} to enable the comparison. In this experiment, we classify 68 subjects in three poses, frontal (c27), side (c05), and profile (c22), under lighting condition 12. We use the remaining poses as the training data.

For this example, we learn a class-based low-rank transformation matrix per subject from the training data.
It is noted that the goal is to learn a transformation matrix to help in the classification, which may not necessarily correspond to the real geometric transform.
Table~\ref{tab:pieacc} shows the face recognition accuracies under pose variations for the CMU PIE dataset  (we applied the crop-and-flip step discussed in Fig.~\ref{fig:overview}.). We make the following observations.
First, the recognition accuracy is dramatically increased after applying the learned transformations.
Second, the best accuracy is obtained by recovering the low-rank subspace for each subject, e.g., the third row in Fig.~\ref{fig:PIEClassTrain01} and Fig.~\ref{fig:PIEClassTrain02}. Then, each transformed testing face,  e.g.,  Fig.~\ref{fig:PIEClassTest01} and Fig.~\ref{fig:PIEClassTest02}, is sparsely decomposed over the low-rank subspace of each subject through OMP, and classified to the subject with the minimal reconstruction error, Section~\ref{sec:rec}.
Third, the class-based transformation performs better than the global transformation in this case. The choice between these two settings is data dependent.
Last but not least, our method outperforms SMD, which the best of our knowledge, reported the best recognition performance in such experimental setup. However, SMD is an unsupervised method, and the proposed method requires training, still illustrating how a simple learned transform (note that applying it to the data at testing time if virtually free of cost), can significantly improve performance.

\subsection{Application to Face Recognition across Illumination and Pose}

\begin{figure*} [t]
\centering
 \subfloat[Pose c02] {\label{fig:pose02} \includegraphics[angle=0, height=0.23\textwidth, width=.25\textwidth]{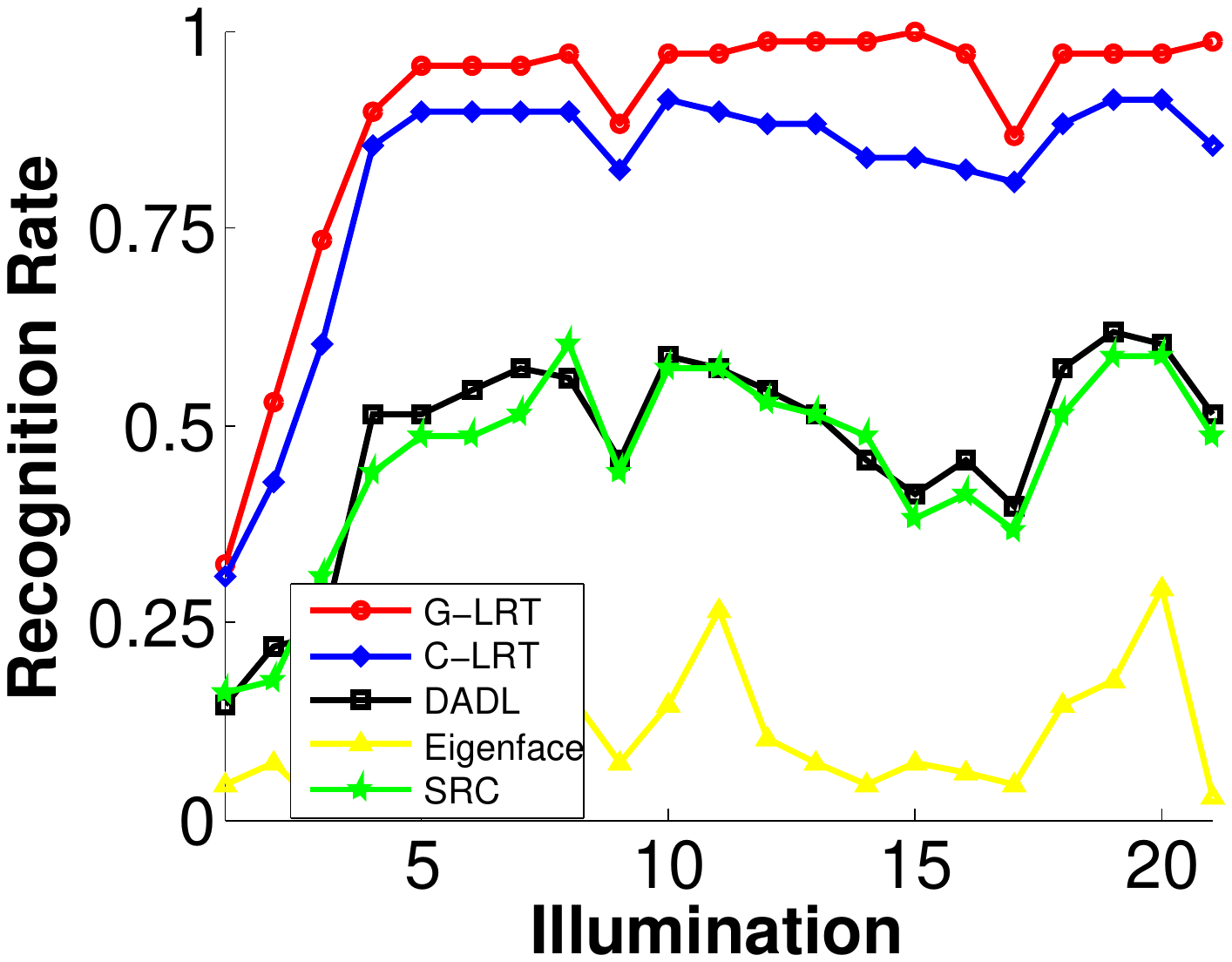} \hspace{0pt}}
  \subfloat[Pose c05] {\label{fig:pose05} \includegraphics[angle=0, height=0.23\textwidth, width=.25\textwidth]{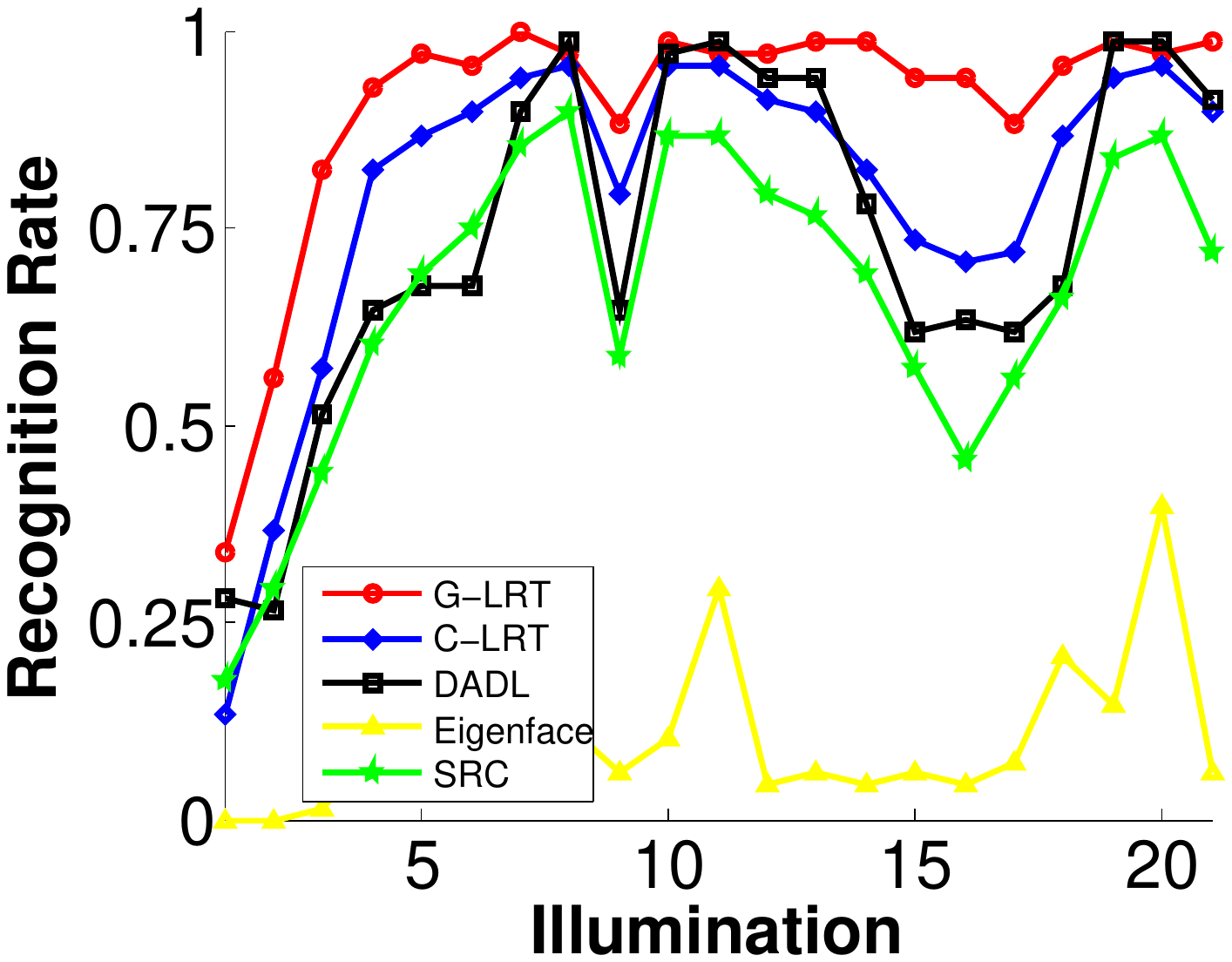}}
    \subfloat[Pose c29] {\label{fig:pose29} \includegraphics[angle=0, height=0.23\textwidth, width=.25\textwidth]{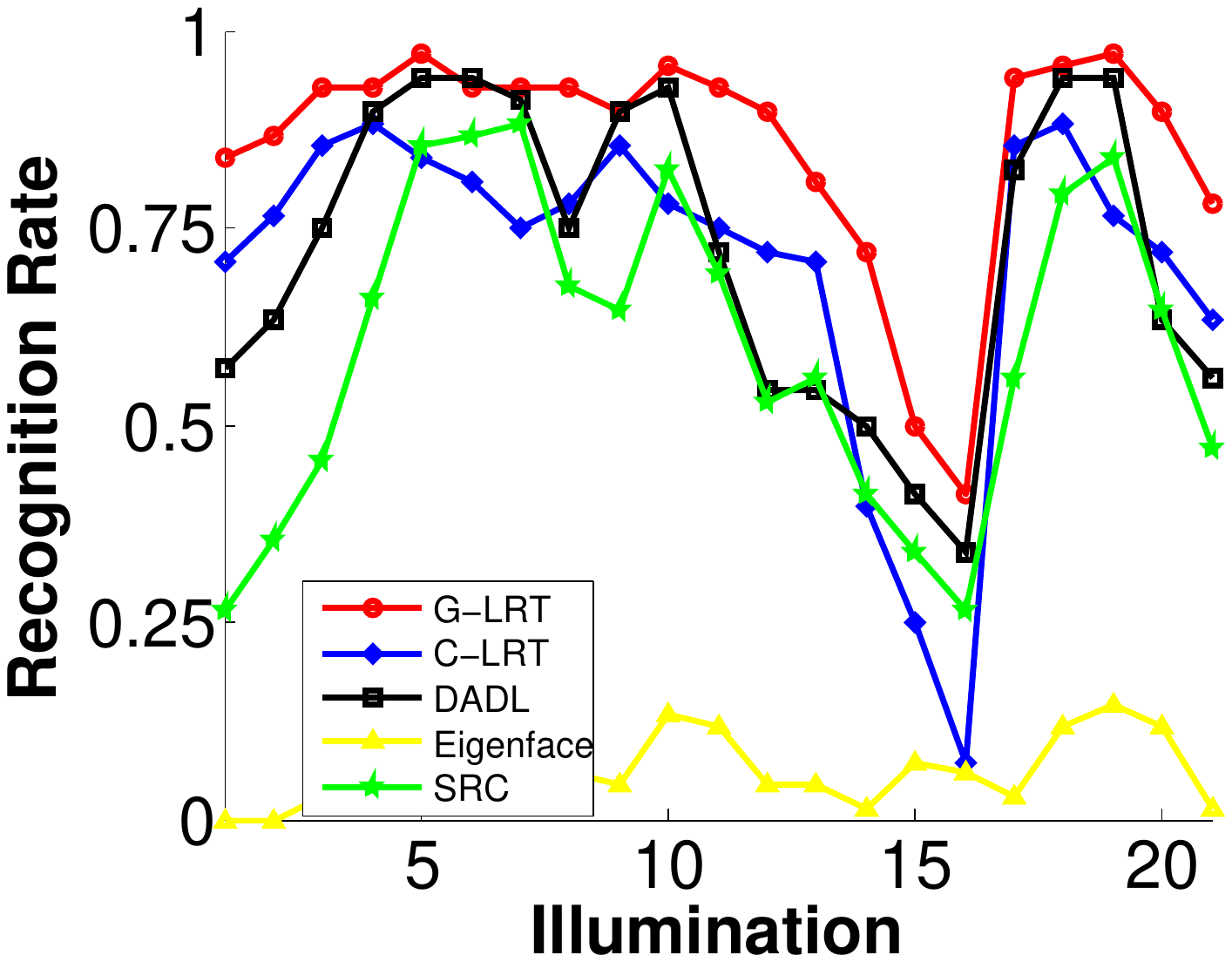}}
  \subfloat[Pose c14] {\label{fig:pose14} \includegraphics[angle=0, height=0.23\textwidth, width=.25\textwidth]{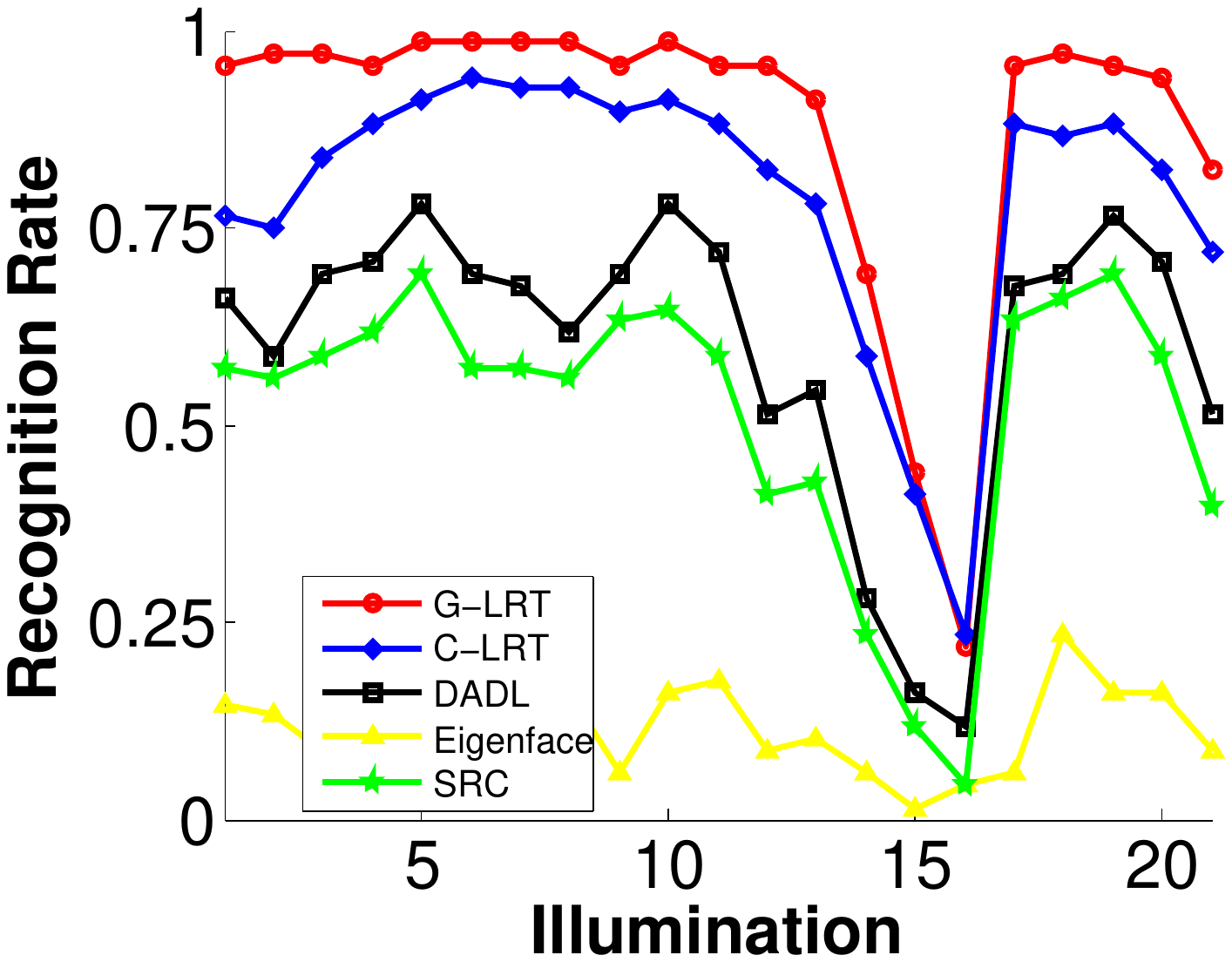}}
\caption{Face recognition accuracy under combined pose and illumination variations on the CMU PIE dataset. The proposed methods are denoted as \emph{G-LRT} in color red and \emph{C-LRT} in color blue. The proposed methods significantly outperform the comparing methods, especially for extreme poses c02 and c14.}
\label{fig:pieacc-eccv}
%\end{center}
\end{figure*}

\begin{figure*} [t]
\centering
\subfloat[Globally transformed testing samples for \emph{subject1} ] {\label{fig:PIEShareTest01eccv} \includegraphics[angle=0, height=0.14\textwidth, width=.5\textwidth]{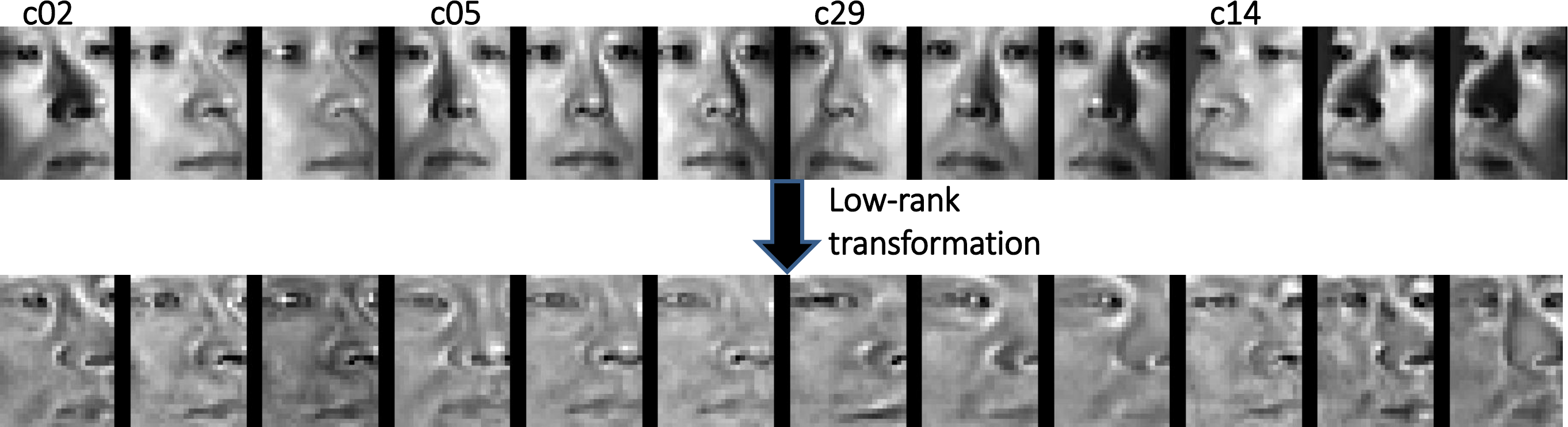} \hspace{0pt}}
\subfloat[Globally transformed testing samples for \emph{subject2} ] {\label{fig:PIEShareTest02eccv} \includegraphics[angle=0, height=0.14\textwidth, width=.5\textwidth]{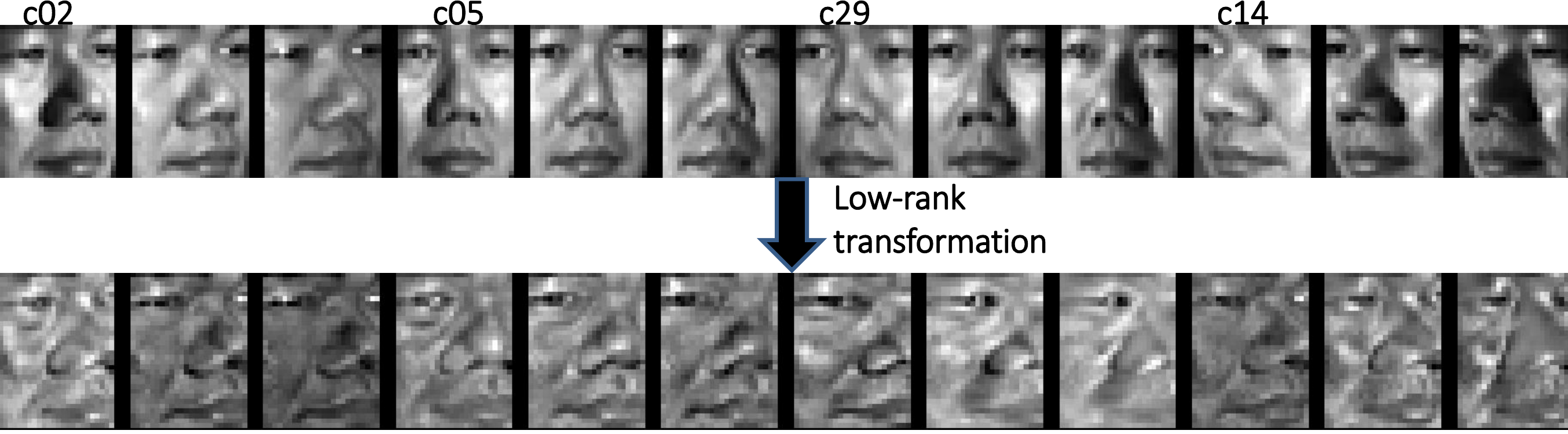}}
\caption{Face recognition under combined pose and illumination variations using global low-rank transformation.}
\label{fig:PIEShareTesteccv}
%\end{center}
\end{figure*}

To enable the comparison with \cite{dadl}, we adopt their setup for
face recognition under combined pose and illumination variations for the CMU PIE dataset.
We use 68 subjects in 5 poses, c22, c37, c27, c11 and c34,  under 21 illumination conditions for training; and classify 68 subjects in 4 poses, c02, c05, c29 and c14, under 21 illumination conditions.

Three face recognition methods are adopted for comparisons: Eigenfaces \cite{eigenface},
SRC \cite{Wright09}, and DADL \cite{dadl}.
{ SRC and DADL are both state-of-the-art sparse representation  methods for face recognition, and DADL adapts sparse dictionaries to the actual visual domains.}
As shown in Fig.~\ref{fig:pieacc-eccv}, the proposed methods, both the global LRT (G-LRT) and class-based LRT (C-LRT),  significantly outperform the comparing methods, especially for extreme poses c02 and c14.
Some testing examples using a global transformation are shown in Fig.~\ref{fig:PIEShareTesteccv}. We notice that the transformed faces for each subject exhibit reduced variations caused by pose and illumination.

\subsection{Discussion on the Size of the Transformation Matrix $\mathbf{T}$}
\label{sec:compress}

In the experiments presented above,  we learned a square linear transformation. For example, if images are resized to $16 \times 16$, the learned subspace transformation $\mathbf{T}$ is of size $256 \times 256$.
If we learn a transformation of size $r \times 256$ with $r<256$, we enable dimension reduction while performing subspace transformation (feature learning). Through experiments, we notice that the peak clustering accuracy is usually obtained when $r$ is smaller than the dimension of the ambient space.
For example, in Fig.~\ref{fig:yaleacc}, through exhaustive search for the optimal $r$, we observe the misclassification rate reduced from $2.38\%$ to $0\%$ for subjects \{2, 3\} at $r=96$, and from $4.23\%$ to $0\%$ for subjects \{4, 5, 6\} at $r=40$.
As discussed before, this provides a framework to sense for clustering and classification, connecting the work here presented with the extensive literature on compressed sensing, and in particular for sensing design, e.g., \cite{CS1}.
We plan to study in detail the optimal size of the learned transformation matrix for subspace clustering and classification, including its potential connection with the number of subspaces in the data, and further investigate such connections with compressive sensing.

\section{Conclusion}
\label{sec:con}

We introduced a subspace low-rank transformation approach for subspace clustering and classification.
Using matrix rank as the optimization criteria,  via its nuclear norm convex surrogate,  we learn a subspace transformation that reduces variations within the subspaces, and increases separations between the subspaces.
We demonstrated that the proposed approach significantly outperforms state-of-the-art methods for subspace clustering and classification, and provided some theoretical support to these experimental results.

Numerous venues of research are opened by the framework here introduced. At the theoretical level, extending the analysis to the noisy case is needed. Furthermore, understanding the virtues of the global vs the class-dependent transform is both important and interesting, as it is the study of the framework in its compressed dimensionality form.
Beyond this, considering the proposed approach as a feature extraction technique, its combination with other successful clustering and classification techniques is the subject of current research.

\appendix

\section{Proof of Theorem \ref{nuclear_ineq}}
\label{sec:nuclear_ineq}

\begin{proof} We know that (\cite{nuclear-min})
\begin{align}
||\mathbf{A}||_* =  \underset{\substack{\mathbf{U}, \mathbf{V} \\ \mathbf{A=UV'}}} \min \frac{1}{2}( ||\mathbf{U}||^2_F + ||\mathbf{V}||^2_F). \nonumber
\end{align}
 We denote $\mathbf{U_A}$ and $\mathbf{V_A}$
 the matrices that achieve the minimum; same for $\mathbf{B}$, $\mathbf{U_B}$ and $\mathbf{V_B}$; and same for the concatenation $\mathbf{[A,B]}$, $\mathbf{U_{[A,B]}}$ and $\mathbf{V_{[A,B]}}$.
\noindent We then have
\begin{align} \nonumber
||\mathbf{A}||_*  &= \frac{1}{2}( ||\mathbf{U_A}||^2_F + ||\mathbf{V_A}||^2_F), \\ \nonumber
||\mathbf{B}||_*  &= \frac{1}{2}( ||\mathbf{U_B}||^2_F + ||\mathbf{V_B}||^2_F) .\\ \nonumber
\end{align}
The matrices $[\mathbf{U_A}, \mathbf{U_B}]$ and $[\mathbf{V_A}, \mathbf{V_B}]$ obtained by concatenating the matrices that achieve the minimum for $\mathbf{A}$ and $\mathbf{B}$ when computing their nuclear norm, are not necessarily the ones that achieve the corresponding  minimum in the nuclear norm computation of the concatenation matrix  $[\mathbf{A}, \mathbf{B}]$. Thus, together with $||\mathbf{[A,B]}||_F^2  = ||\mathbf{A}||_F^2 + ||\mathbf{B}||_F^2$, we have
\begin{align} \nonumber
||\mathbf{[A,B]}||_*  &= \frac{1}{2}( ||\mathbf{U_{[A,B]}}||^2_F + ||\mathbf{V_{[A,B]}}||^2_F) \\ \nonumber
& \le \frac{1}{2}( ||[\mathbf{U_A}, \mathbf{U_B}]||^2_F + ||[\mathbf{V_A},\mathbf{V_B}]||^2_F) \\ \nonumber
&= \frac{1}{2}( ||\mathbf{U_A}||^2_F + ||\mathbf{U_B}||^2_F + ||\mathbf{V_A}||^2_F+ ||\mathbf{V_B}||^2_F) \\ \nonumber
&= \frac{1}{2}( ||\mathbf{U_A}||^2_F + ||\mathbf{V_A}||^2_F) + \frac{1}{2}(||\mathbf{U_B}||^2_F+||\mathbf{V_B}||^2_F) \\ \nonumber
&= ||\mathbf{A}||_* + ||\mathbf{B}||_* . \\ \nonumber
\end{align}
\end{proof}

\section{Proof of Theorem \ref{nuclear_eq}}
\label{sec:nuclear_eq}

\begin{proof}
We perform the singular value decomposition of $\mathbf{A}$ and $\mathbf{B}$ as
\begin{align}  \nonumber
\mathbf{A} &= [\mathbf{U_{A1}} \mathbf{U_{A2}}] \begin{bmatrix}  \mathbf{\Sigma_A} & 0 \\  0& 0 \end{bmatrix} [\mathbf{V_{A1}} \mathbf{V_{A2}}]', \\ \nonumber
\mathbf{B} &= [\mathbf{U_{B1}} \mathbf{U_{B2}}] \begin{bmatrix}  \mathbf{\Sigma_B} & 0 \\  0& 0 \end{bmatrix} [\mathbf{V_{B1}} \mathbf{V_{B2}}]', \\ \nonumber
\end{align}
where the diagonal entries of $\mathbf{\Sigma_A}$ and $\mathbf{\Sigma_B}$ contain non-zero singular values. We have
\begin{align}  \nonumber
\mathbf{AA'} &= [\mathbf{U_{A1}} \mathbf{U_{A2}}] \begin{bmatrix}  \mathbf{\Sigma_A}^2 & 0 \\  0& 0 \end{bmatrix} [\mathbf{U_{A1}} \mathbf{U_{A2}}]', \\ \nonumber
\mathbf{BB'} &= [\mathbf{U_{B1}} \mathbf{U_{B2}}] \begin{bmatrix}  \mathbf{\Sigma_B}^2 & 0 \\  0& 0 \end{bmatrix} [\mathbf{U_{B1}} \mathbf{U_{B2}}]'. \\ \nonumber
\end{align}

\noindent The column spaces of $\mathbf{A}$ and $\mathbf{B}$ are considered to be orthogonal, i.e., $\mathbf{U_{A1}}'\mathbf{U_{B1}}=0$. The above can be written as
\begin{align}  \nonumber
\mathbf{AA'} &= [\mathbf{U_{A1}} \mathbf{U_{B1}}] \begin{bmatrix}  \mathbf{\Sigma_A}^2 & 0 \\  0& 0 \end{bmatrix} [\mathbf{U_{A1}} \mathbf{U_{B1}}]', \\ \nonumber
\mathbf{BB'} &= [\mathbf{U_{A1}} \mathbf{U_{B1}}] \begin{bmatrix}  0 & 0 \\  0& \mathbf{\Sigma_B}^2 \end{bmatrix} [\mathbf{U_{A1}} \mathbf{U_{B1}}]'. \\ \nonumber
\end{align}

\noindent Then, we have
\begin{align}  \nonumber
\mathbf{[A,B][A,B]'} &= \mathbf{AA'+BB'} = [\mathbf{U_{A1}} \mathbf{U_{B1}}] \begin{bmatrix}  \mathbf{\Sigma_A}^2 & 0 \\  0& \mathbf{\Sigma_B}^2 \end{bmatrix} [\mathbf{U_{A1}} \mathbf{U_{B1}}]'. \\ \nonumber
\end{align}
The nuclear norm $||\mathbf{A}||_*$ is the sum of the square root of the singular values of $\mathbf{A}\mathbf{A}'$. Thus, $||\mathbf{[A,B]}||_*  = ||\mathbf{A}||_* + ||\mathbf{B}||_*.$
\end{proof}

\section{Projected Subgradient Learning Algorithm}
\label{gradesc}

{

We use a simple projected subgradient method to search for the transformation matrix $\mathbf{T}$ that minimizes (\ref{nuclear_obj}). Before describing it, we should note that the problem is non-differentiable and non-convex, and it deserves a proper study for efficient optimization, keeping in mind that the development of more advanced optimization techniques will just further improve the performance of the proposed framework. We selected a simple subgradient approach since the goal of this paper is to present the framework, and already this simple optimization leads to very fast convergence and excellent performance as detailed in Section~\ref{sec:expr}, significant improvements in performance when compared to state-of-the-art.

To  minimize (\ref{nuclear_obj}), the proposed projected subgradient method uses the iteration
\begin{align} \label{gradstep}
\mathbf{T}^{(t+1)} = \mathbf{T}^{(t)} - \nu \Delta \mathbf{T} ,
\end{align}
where $\mathbf{T}^{(t)}$ is the $t$-th iterate, and $\nu > 0$ defines the step size. The subgradient step $\Delta \mathbf{T}$ is evaluated as
\begin{align} \label{stran_sub}
\Delta \mathbf{T} = \sum_{c=1}^C \partial ||\mathbf{T Y}_c||\mathbf{Y}_c^T - \partial ||\mathbf{T Y}||\mathbf{Y}^T,
\end{align}
where $\partial ||\mathbf{\cdot}||$ is the subdifferential of the norm $||\mathbf{\cdot}||$ (given a matrix $\mathbf{A}$, the subdifferential $\partial ||\mathbf{A}||$ can be evaluated using the simple approach shown in Algorithm~\ref{subdifferential}, \cite{subdifferential}). After each iteration, we project $\mathbf{T}$ via $\gamma \frac{\mathbf{T}}{||\mathbf{T}||}$.
% This algorithm converges to a local minimum, as we briefly describe next.

The objective function (\ref{nuclear_obj}) is a D.C. (difference of convex functions) program (\cite{dc3}, \cite{dc1}, \cite{dc2}).
 We provide here a simple convergence analysis to the projected subgradient approach proposed above.

We first provide an analysis to the minimization of (\ref{nuclear_obj}) without the norm constraint $||\mathbf{T}||_2 = \gamma$, using the following iterative D.C. procedure (\cite{dc1}):
\begin{itemize*}
\item Initialize $\mathbf{T}^{(0)}$ with the identity matrix;
\item At the $t$-th iteration, we update $\mathbf{T}^{(t+1)}$  by solving a convex minimization sub-problem (\ref{auxiliary_cov}),
\begin{align} \label{auxiliary_cov}
\mathbf{T}^{(t+1)} = \underset{\mathbf{T}} \arg \min \sum_{c=1}^C ||\mathbf{T Y}_c||_* -  \partial ||\mathbf{T}^{(t)} \mathbf{Y}||\mathbf{Y}^T \mathbf{T}^T.
\end{align}
\end{itemize*}
where the first term in (\ref{auxiliary_cov}) is the convex term in (\ref{nuclear_obj}), and the added second term is a linear term on $\mathbf{T}$ using a subgradient of the concave term in (\ref{nuclear_obj}) evaluated at the current iteration.

We solve this sub-objective function (\ref{auxiliary_cov}) using the subgradient method, i.e., using a constant step size $\nu$, we iteratively take a step in the negative direction of subgradient, and the subgradient is evaluated as
\begin{align} \label{graddes2}
 \sum_{c=1}^C \partial ||\mathbf{TY}_c||\mathbf{Y}_c^T - \partial ||\mathbf{T^{(t)} Y}||\mathbf{Y}^T.
\end{align}
Though each subgradient step does not guarantee a decrease of the cost function (\cite{subgradient, Recht}), using a constant step size, the subgradient method is guaranteed to converge to within some range of the optimal value for a convex problem (\cite{subgradient}) (it is easy to notice that (\ref{gradstep}) is a simplified version of this D.C. procedure by performing only one iteration of the subgradient method in solving the sub-objective function (\ref{auxiliary_cov}), and we will have more discussion on this simplification later).

Given $\mathbf{T}^{(t+1)}$ as the minimizer found for the convex problem (\ref{auxiliary_cov}) using the subgradient method, we have for (\ref{auxiliary_cov}),
\begin{align} \label{auxiliary_ineq1}
\sum_{c=1}^C ||\mathbf{T}^{(t+1)} \mathbf{Y}_c||_* - \partial ||\mathbf{T}^{(t)} \mathbf{Y}||\mathbf{Y}^T \mathbf{T}^{(t+1)^T} \\ \nonumber
\le \sum_{c=1}^C ||\mathbf{T}^{(t)} \mathbf{Y}_c||_* - \partial ||\mathbf{T}^{(t)} \mathbf{Y}||\mathbf{Y}^T \mathbf{T}^{(t)^T},
\end{align}
and from the concavity of the second term in (\ref{nuclear_obj}), we have
\begin{align} \label{auxiliary_ineq2}
- ||\mathbf{T}^{(t+1)} \mathbf{Y}||_*
\le - ||\mathbf{T}^{(t)} \mathbf{Y}||_*  - \partial ||\mathbf{T}^{(t)} \mathbf{Y}||\mathbf{Y}^T (\mathbf{T}^{(t+1)} - \mathbf{T}^{(t)})^T.
\end{align}
By summing  \ref{auxiliary_ineq1} and  \ref{auxiliary_ineq2}, we obtain
\begin{align} \label{auxiliary_eq}
\sum_{c=1}^C ||\mathbf{T}^{(t+1)} \mathbf{Y}_c||_* - ||\mathbf{T}^{(t+1)} \mathbf{Y}||_*
\le  \sum_{c=1}^C ||\mathbf{T}^{(t)} {Y}_c||_* - ||\mathbf{T}^{(t)} {Y}||_*.
\end{align}
The objective (\ref{nuclear_obj}) is bounded from below by $0$ (shown in Section \ref{sec:form}), and decreases after each iteration of the above D.C. procedure (shown in (\ref{auxiliary_eq})). Thus, the convergence to a local minimum (or a stationary point) is guaranteed.
For efficiency considerations, while solving the convex sub-objective function (\ref{auxiliary_cov}), we perform only one iteration of the subgradient method to obtain a simplified method (\ref{gradstep}), and still observe empirical convergence in all experiments, see Fig.\ref{fig:convergence} and Fig.\ref{fig:conv_T_yale}.

The norm constraint $||\mathbf{T}||_2 = \gamma$ is adopted in our formulation to prevent the trivial solution $\mathbf{T} = 0$.
As shown above,  the minimization of (\ref{nuclear_obj}) without the norm constraint always converges to a local minimum (or a stationary point), thus the initialization becomes critical while dropping the norm constraint.
By initializing $\mathbf{T}^{(0)}$ with the identity matrix, we observe no trivial solution convergence in all experiments, such as the no normalization case in Fig.\ref{fig:convergence}.

As shown in \cite{unitnorm}, the norm constraint $||\mathbf{T}||_2 = \gamma$ can be incorporated to a gradient-based algorithm using various alternatives, e.g., Lagrange multipliers, coefficient normalization, and gradients in the tangent space. We implement the coefficient normalization method, i.e., after obtaining $\mathbf{T}^{(t+1)}$ from (\ref{auxiliary_cov}),  we normalize $\mathbf{T}^{(t+1)}$ via $\gamma \frac{\mathbf{T}^{(t+1)}}{||\mathbf{T}^{(t+1)}||}$. In other words, we normalize the length of $\mathbf{T}^{(t+1)}$ without changing its direction.
As discussed in \cite{unitnorm}, the problem of minimizing a cost function subject to a norm constraint forms the basis for many important tasks, and gradient-based algorithms are often used along with the norm constraint.
Though it is expected that a norm constraint does not change the convergence behavior of a gradient algorithm (\cite{unitnorm, unitnorm2}),  Fig.\ref{fig:convergence}, to the best of our knowledge, it is an open problem to analyze how a norm constraint and the choice of $\gamma$ affect the convergence behavior of a gradient/subgradient method.

}

\begin{algorithm}[ht]
%\SetAlFnt{\footnotesize \sf}
\footnotesize
\KwIn{An $m \times n$ matrix $\mathbf{A}$, a small threshold value $\delta$}
\KwOut{The subdifferential of the matrix norm $\partial ||\mathbf{A}||$.}
\Begin{
\BlankLine
1. Perform singular value decomposition: \\
$\mathbf{A}=\mathbf{U} \mathbf{\Sigma} \mathbf{V}$ \;
\BlankLine
2. $s \leftarrow$ the number of singular values smaller than $\delta$ , \\
3. Partition $\mathbf{U}$ and $\mathbf{V}$ as \\
$\mathbf{U} = [\mathbf{U}^{(1)}, \mathbf{U}^{(2)}]$, $\mathbf{V} = [\mathbf{V}^{(1)}, \mathbf{V}^{(2)}]$ \;
where $\mathbf{U}^{(1)}$ and $\mathbf{V}^{(1)}$ have $(n-s)$ columns. \\
\BlankLine
4. Generate a random matrix $\mathbf{B}$ of the size $(m-n+s)\times s$, \\
$\mathbf{B} \leftarrow \frac{\mathbf{B}}{||\mathbf{B}||}$ \;
\BlankLine
5. $\partial ||\mathbf{A}|| \leftarrow \mathbf{U}^{(1)} \mathbf{V}^{(1)T} + \mathbf{U}^{(2)} \mathbf{B} \mathbf{V}^{(2)T}$ \;
\BlankLine
6. Return $\partial ||\mathbf{A}||$ \;

}
\caption{An approach to evaluate the subdifferential of a matrix norm.}
\label{subdifferential}
\end{algorithm}

\section*{Acknowledgments}

This work was partially supported by ONR, NGA, NSF, ARO, DHS, and AFOSR.
We thank Dr. Pablo Sprechmann, Dr. Ehsan Elhamifar, Ching-Hui Chen, and Dr. Mariano Tepper for important feedback on this work.

\vskip 0.2in
\bibliography{lowrankT}

\end{document}